\documentclass[11pt,letterpaper]{article}

\usepackage{amssymb,amsmath,amsfonts}
\usepackage{graphicx,xcolor,enumitem}
\usepackage{epsfig}
\usepackage{amsthm}
\usepackage{bm}
\usepackage{multirow}
\usepackage{subcaption}
\usepackage[numbers]{natbib}
\usepackage{csquotes}
\renewcommand{\mkbegdispquote}[2]{\itshape}
\usepackage[twoside, hmarginratio=1:1, vmarginratio=1:1, left=1in,top=1in]{geometry}

\RequirePackage[breaklinks=true, hidelinks]{hyperref}
\usepackage{breakcites}

\usepackage{algorithm, algorithmicx}
\usepackage{algpseudocode}

\usepackage{accents}

\newcommand{\cB}{\mathcal{B}}
\newcommand{\E}{\mathbb{E}}
\newcommand{\R}{\mathbb{R}}
\newcommand{\p}{\mathbb{P}}
\newcommand{\q}{\mathbb{Q}}
\newcommand{\cL}{{\mathcal L}}
\newcommand{\cS}{{\mathcal S}}
\newcommand{\cF}{{\mathcal F}}
\newcommand{\cP}{{\mathcal P}}
\newcommand{\cN}{{\mathcal N}}
\newcommand{\cG}{{\mathcal G}}
\newcommand{\cZ}{{\mathcal Z}}

\newcommand{\tr}{{\rm{tr}}}

\newcommand{\cT}{{\mathcal T }}

\newcommand{\cV}{{\mathcal V}}
\newcommand{\cW}{{\mathcal W}}

\newcommand{\cX}{{\mathcal X}}
\newcommand{\cY}{{\mathcal Y}}

\newcommand{\cC}{{\mathcal C}}
\newcommand{\cH}{{\mathcal H}}
\newcommand{\cR}{{\mathcal R}}
\newcommand{\cM}{{\mathcal M}}

\newtheorem{theorem}{Theorem}

\newtheorem{assumption}[theorem]{Assumption}

\newtheorem{corollary}[theorem]{Corollary}

\newtheorem{definition}[theorem]{Definition}

\newtheorem{lemma}[theorem]{Lemma}

\newtheorem{proposition}[theorem]{Proposition}

\theoremstyle{definition}

\numberwithin{equation}{section}
\numberwithin{theorem}{section}


\begin{document}

\title{Fitted value iteration methods for bicausal optimal transport}

\author{
	Erhan Bayraktar \thanks{Department of Mathematics, University of Michigan, Ann Arbor, Email: erhan@umich.edu. Erhan Bayraktar is partially supported by the National Science Foundation under grant DMS-2106556 and by the Susan M. Smith chair.}
	\and Bingyan Han \thanks{Thrust of Financial Technology, The Hong Kong University of Science and Technology (Guangzhou). Email: bingyanhan@hkust-gz.edu.cn. Bingyan Han is partially supported by The Hong Kong University of Science and Technology (Guangzhou) Start-up Fund G0101000197, the Guangzhou-HKUST(GZ) Joint Funding Program (No. 2024A03J0630), and the National Natural Science Foundation of China (Grant No. 12401621).  This work began when Bingyan Han was a postdoctoral researcher in the Department of Mathematics at the University of Michigan. He expresses gratitude to the University of Michigan for providing support and an atmosphere conducive to this work.}
}

\date{July 26, 2025}
\maketitle

\begin{abstract}
    We develop a fitted value iteration (FVI) method to compute bicausal optimal transport (OT) where couplings have an adapted structure. Based on the dynamic programming formulation, FVI adopts a function class to approximate the value functions in bicausal OT. Under the concentrability condition and approximate completeness assumption, we prove the sample complexity using (local) Rademacher complexity. Furthermore, we demonstrate that multilayer neural networks with appropriate structures satisfy the crucial assumptions required in sample complexity proofs. Numerical experiments reveal that FVI outperforms linear programming and adapted Sinkhorn methods in scalability as the time horizon increases, while still maintaining acceptable accuracy. 
	\\[2ex] 
	\noindent{\textbf {Keywords}:  Dynamic programming, bicausal optimal transport, fitted value iteration, multilayer neural networks, Rademacher complexity.}
	\\[2ex]
	\noindent{\textbf {Mathematics Subject Classification:} 49Q99, 90C39, 68T07, 90C59 } 
\end{abstract}

\section{Introduction}
Optimal transport (OT) has emerged as a valuable tool for comparing probability distributions, finding applications in diverse disciplines including control and optimization \cite{acciaio2021cournot,bayraktar2021transport,eckstein2021computation,backhoff2017causal,cuturi2013sinkhorn}. OT possesses an intuitive geometric interpretation, quantifying the costs involved in transforming one probability distribution into another by moving point masses. Although the concept of OT originated in the works of Monge in 1781 and Kantorovich in the 1940s \cite[Chapter 3]{villani2009optimal}, its widespread adoption was catalyzed by the development of an efficient computational method utilizing entropic regularization, known as the Sinkhorn algorithm \cite{cuturi2013sinkhorn}, together with variants in \cite{altschuler2017near,lin2022efficiency}. These advancements have alleviated the computational burden associated with OT, contributing to its increased popularity.

Temporal data play a ubiquitous role in various domains, such as control \cite{bayraktar2018martingale,acciaio2021cournot}, optimization \cite{backhoff2017causal}, and computer science \cite{xu2020cot}. However, the conventional OT framework fails to account for the temporal structure and information flow inherent in such data. To address this limitation, a recent surge of research introduced causal OT with a non-anticipative framework \cite{lassalle2013causal,backhoff2017causal}. In essence, causal OT requires that when the past of one process, $X$, is given, the past of another process, $Y$, should be independent of the future of $X$ under the transport plan. If the same condition holds after interchanging the role of $X$ and $Y$, the transport plan is called bicausal; see \cite{pflug2012distance,xu2020cot,acciaio2021cournot} for recent applications.

The classical OT method is hindered by its high computational costs, and the inclusion of the temporal dimension in causal OT further exacerbates the computational challenges. Addressing this issue, \cite{eckstein2022comp} recently proposed an adapted Sinkhorn algorithm tailored specifically for causal OT, based on the Bregman iterative formulation of the Sinkhorn algorithm \cite[Remark 4.8]{peyre2019computational}. As with previous approaches, continuous distributions are approximated using either empirical samples or quantization techniques. However, it is important to note that even the classical Sinkhorn algorithm requires $O(N^2)$ operations for a sample size of $N$ \cite{genevay2019sample}, which may not scale well under large sample sizes, particularly in high-dimensional scenarios. Furthermore, the adapted Sinkhorn algorithm in \cite[Lemma 6.4]{eckstein2022comp} enumerates all possible scenarios at each time step, and this enumeration grows rapidly as the time horizon increases.

The computational burden arising from discretization is not unique to OT, but rather a common challenge also encountered in dynamic programming (DP) applications, such as optimal control, vehicle routing, and healthcare \cite{powell2007approximate}. Discretization often leads to an immense number of states, rendering the Bellman equation computationally intractable. To address this issue, approximate DP and reinforcement learning (RL) techniques have been developed to provide approximations of value functions or optimal policies. The use of neural networks as function approximators for action-value functions enables handling tasks with high-dimensional inputs \cite{mnih2015human}. It is worth noting that discretization approaches often neglect the structural properties inherent in value functions and optimal policies. For example, under certain conditions, the Wasserstein distance $W_p(\mu(dx_{1:T}|x'_0), \nu(dy_{1:T}|y'_0))$ between two conditional distributions $\mu(dx_{1:T}|x'_0)$ and $\nu(dy_{1:T}|y'_0)$ should not vary too much compared with $W_p(\mu(dx_{1:T}|x_0), \nu(dy_{1:T}|y_0))$, where $(x'_0, y'_0)$ is close to $(x_0, y_0)$. Knowledge about $W_p(\mu(dx_{1:T}|x_0), \nu(dy_{1:T}|y_0))$ can be beneficial in determining $W_p(\mu(dx_{1:T}|x'_0), $ $ \nu(dy_{1:T}|y'_0))$.

Notably, bicausal OT can be solved by DP \cite[Corollary 3.3]{backhoff2017causal}, which opens up possibilities for leveraging learning-based methods in optimal control. In this regard, transport plans serve as control policies. We employ the fitted value iteration (FVI) technique from offline RL \cite{gordon1999approx,munos2008finite,duan2021risk}, which also eliminates the need for knowing the underlying distributions. In contrast to prior research \cite{genevay2016stochastic,seguy2018large}, we utilize neural networks as approximators for the value functions in the primal DP formulation. Also different with \cite{pichler2022nested}, our method is not limited to the Sinkhorn algorithm. Consequently, we derive the following theoretical and numerical results:

To analyze the sample complexity and error bounds of FVI, we adopt the concentrability condition introduced in \cite{munos2003error,munos2008finite,duan2021risk,fan2020theoretical,chen2019information} to establish a connection between value suboptimality and Bellman errors in Lemma \ref{lem:subopt}. Leveraging standard tools in (local) Rademacher complexity \cite{mohri2018foundations,bartlett2005local} and the convergence of empirical measures in Wasserstein distance \cite{fournier2015rate,genevay2019sample}, we derive the sample complexity for FVI in Theorems \ref{thm:light_RC} and \ref{thm:LRC_light} and Corollaries \ref{cor:entropy_RC} and \ref{cor:LRC_entropy}. Additionally, to demonstrate the feasibility of the crucial approximate completeness assumption \ref{assum:complete} and the specific Lipschitz assumption \ref{assum:Lip}, we establish that neural networks with rectified linear unit (ReLU) and sigmoid activation functions satisfy these assumptions when appropriately structured. Notably, recent advancements in approximating H\"older functions with neural networks in \cite{schmidt2020,langer2021approximating} provide explicit bounds on the approximate completeness assumption \ref{assum:complete}. Moreover, the fixed point of a sub-root function in Assumption \ref{assum:sub-root} can be bounded on the order of $\ln(N)/N$ using the covering number technique.

Numerically, we conducted a comparison between FVI and linear programming (LP) and adapted Sinkhorn algorithms. With Gaussian data, we observed that the LP and adapted Sinkhorn methods provided highly accurate estimates with low variances. Additionally, these methods demonstrated faster convergence than FVI for short time horizons. However, as the time horizon increased, FVI exhibited superior scalability compared to the LP and adapted Sinkhorn methods. Notably, for time horizons $T \geq 20$, the LP and adapted Sinkhorn methods failed to converge within a reasonable time frame, while FVI obtained estimations with satisfactory accuracy. Moreover, our implementation offered a unified approach to handling multidimensional data.

The structure of this paper is as follows: Section \ref{sec:notation} and Section \ref{sec:bic} introduce the notation and bicausal OT, respectively. Section \ref{sec:FVI} presents the FVI algorithm for bicausal OT, while Section \ref{sec:sam_complex} provides the proof of sample complexity. In Section \ref{sec:NN}, we demonstrate that the ReLU and sigmoid neural networks satisfy crucial assumptions outlined in Section \ref{sec:sam_complex}. We present the numerical comparison in Section \ref{sec:numerics}. Our code is available at \url{https://github.com/hanbingyan/FVIOT}. The Appendix gives technical proofs with auxiliary results used.

\subsection{Notation and preliminaries}\label{sec:notation}

We summarize frequently used notations here. Denote $\lceil x \rceil$ as the least integer greater than or equal to $x$. For two numbers $a$ and $b$, let $a \vee b = \max\{a, b\}$. For a function $f: \cS \rightarrow \R$, $\| f \|_\infty := \sup_{s \in \cS} |f(s)|$ is the uniform norm. Denote $\| W \|_\infty$ as the maximum-entry norm of a vector or matrix $W$.  Denote $\| W \|_0$ as the number of non-zero entries of $W$. For two real-valued functions $f(\cdot)$ and $g(\cdot)$, we write $f(\cdot) \lesssim g(\cdot)$ if $f(\cdot) \leq C \cdot g(\cdot)$ with an absolute constant $C$ independent of function parameters. Denote $\cC^\infty$ as the set of infinitely differentiable functions.

Let a positive integer $T$ be the finite number of periods. For each $t \in \{0, 1, ..., T-1, T\}$, suppose $\cX_t$ is a closed subset of $\R^d$. $\cX_t$ stands for the range of the process at time $t$. $\cX : = \cX_{0:T} = \cX_0 \times \ldots \times \cX_T$ is a closed subset of $\R^{(T+1) \times d}$. Denote the set of all Borel probability measures on $\cX$ as $\cP(\cX)$. We write $\mu(dx_{t+1:T}|x_{0:t})$ as the regular conditional probability kernel of $x_{t+1:T}$ given $x_{0:t}$, which is uniquely determined in a suitable way \cite[Theorem 10.4.14 and Corollary 10.4.17]{bogachev2007measure}. 

With a fixed $p \in [1, \infty)$, we introduce the metric as $d_\cX(x_{0:T}, x'_{0:T}) = \left[ \sum^T_{t=0} |x_t - x'_t|^p \right]^{1/p}$ for $x, x' \in \cX$ and equip $\cX$ with the corresponding Polish topology. The Wasserstein space of order $p$ is given by
\begin{equation*}
	\cP_p(\cX) := \left\{ \mu \in \cP(\cX) \Big| \int_{\cX} d_\cX(x_{0:T}, \bar{x}_{0:T})^p \mu(dx) < \infty \right\}
\end{equation*}
for some fixed $\bar{x}_{0:T} \in \cX$. 


The Wasserstein distance has been widely used as a metric on probability spaces. Consider two Borel probability measures $\mu, \mu' \in \cP(\cX)$. A coupling $\gamma \in \cP(\cX \times \cX)$ of $\mu$ and $\mu'$ is a Borel probability measure that admits $\mu$ and $\mu'$ as its marginals on $\cX$. Denote $\Pi(\mu, \mu')$ as the set of all the couplings. The Wasserstein distance of order $p$ between $\mu$ and $\mu'$ is given by
\begin{equation}\label{eq:Wp}
	W_p(\mu, \mu') = \left( \inf_{\gamma \in \Pi(\mu, \mu')} \int_{\cX \times \cX} d_\cX(x, x')^p \gamma (dx, dx') \right)^{1/p}.
\end{equation}

We introduce another closed set $\cY = \cY_{0:T} = \cY_0 \times ... \times \cY_T$ similarly. Denote a metric on $\cY$ as $d_\cY(y_{0:T}, y'_{0:T}) = \left[ \sum^T_{t=0} |y_t - y'_t|^p \right]^{1/p}$ for $y, y' \in \cY$ and equip $\cY$ with the corresponding Polish topology. Under the metric $d_\cY$, we introduce $\cP_p(\cY)$ as the Wasserstein space of order $p$ and $W_p(\nu, \nu')$ as the Wasserstein metric on $\cP_p(\cY)$. 

For the product space $\cX \times \cY$, we define the corresponding Wasserstein space of order $p$ as
\begin{equation*}
	\cP_p(\cX \times \cY) := \left\{ \pi \in \cP(\cX \times \cY) \Big| \int_{\cX \times \cY} d((x, y), (\bar{x}, \bar{y}))^p \pi(dx, dy) < \infty \right\},
\end{equation*}
with the metric $d$ given by
\begin{align*}
	d((x, y), (\bar{x}, \bar{y})) = [d_{\cX}(x, \bar{x})^p + d_{\cY}(y, \bar{y})^p]^{1/p}.
\end{align*}
For two probability measures $\pi, \pi' \in \cP_p(\cX \times \cY)$, the Wasserstein distance between them is denoted by
\begin{equation*}
	W_p(\pi, \pi') = \left( \inf_{\gamma \in \Pi(\pi, \pi')} \int_{\cX \times \cY \times \cX \times \cY} d((x, y), (x', y'))^p \gamma (dx, dy, dx', dy') \right)^{1/p}.
\end{equation*}

\subsection{Bicausal optimal transport}\label{sec:bic}

Consider two probability measures $\mu \in \cP(\cX)$ and $\nu \in \cP(\cY)$. Recall $\Pi(\mu, \nu)$ is the set of all the couplings that admit $\mu$ and $\nu$ as marginals. Suppose transporting one unit of mass from $x$ to $y$ incurs a cost of $c(x,y)$. A generic OT problem is formulated as 
\begin{equation*}
	\cW(\mu, \nu) := \inf_{ \pi \in \Pi(\mu, \nu)} \int_{\cX \times \cY} c(x, y) \pi(dx, dy),
\end{equation*}
with the Wasserstein distance of order $p$ in \eqref{eq:Wp} as a special case. 

If the data have a temporal structure as $x = (x_0, ..., x_t, ..., x_T)$ and $y = (y_0, ..., y_t, ..., y_T)$, not all couplings $\pi \in \Pi(\mu, \nu)$ will make sense. A natural requirement of the transport plan $\pi(x, y)$ should be the non-anticipative condition. Informally speaking, if the past of $x$ is given, then the past of $y$ should be independent of the future of $x$ under the measure $\pi$. Mathematically, it means a transport plan $\pi$ should satisfy
\begin{equation}\label{eq:causal}
	\pi (dy_t | x_{0:T}) = \pi (dy_t | x_{0:t}), \quad  t = 0, ..., T-1, \quad \text{$\pi$-a.s.}
\end{equation}
The property \eqref{eq:causal} is known as the causality condition and the transport plan satisfying \eqref{eq:causal} is called {\it causal} by \cite{lassalle2013causal}. If the same condition holds when we exchange the positions of $x$ and $y$, then the transport plan is called bicausal. Denote $\Pi_{bc}(\mu, \nu)$ as the set of all bicausal transport plans between $\mu$ and $\nu$. The bicausal OT problem considers the optimization over $\Pi_{bc}(\mu, \nu)$ only:
\begin{equation}\label{eq:bc}
	\cW_{bc}(\mu, \nu) := \inf_{ \pi \in \Pi_{bc}(\mu, \nu)} \int_{\cX \times \cY} c(x, y) \pi(dx, dy).
\end{equation}
For applications of causal and bicausal OT, see \cite{xu2020cot,acciaio2021cournot,pflug2012distance} for an incomplete list.

In this paper, we consider two probability measures $\mu$ and $\nu$ with symmetric positions and focus on the bicausal transport plans only. For a given transport plan $\pi \in \Pi(\mu, \nu)$, we can decompose $\pi$ in terms of successive regular kernels:
\begin{align}
	\pi(dx_{0:T}, dy_{0:T}) = & \bar{\pi} (dx_0, dy_0) \pi(dx_1, dy_1 | x_0, y_0) \pi(dx_2, dy_2 | x_{0:1}, y_{0:1}) ... \nonumber \\
	& \pi(dx_T, dy_T | x_{0:T-1}, y_{0:T-1}), \label{eq:decomp}
\end{align}
which is uniquely determined in a suitable way \cite[Theorem 10.4.14 and Corollary 10.4.17]{bogachev2007measure}. By \citet[Proposition 5.1]{backhoff2017causal}, $\pi$ is a bicausal transport plan if and only if
\begin{itemize}
	\item[(1)] $\bar{\pi} \in \Pi(p^1_* \mu, p^1_* \nu)$, and
	\item[(2)] for each $t = 0, ... , T-1$ and $\pi$-almost every path $(x_{0:t}, y_{0:t})$, the following condition holds:
	\begin{equation*}
		\pi(dx_{t+1}, dy_{t+1}| x_{0:t}, y_{0:t}) \in \Pi( \mu(dx_{t+1}| x_{0:t}), \nu(dy_{t+1}|y_{0:t})). 
	\end{equation*} 
\end{itemize}
$p^1_* \mu$ (resp. $p^1_* \nu$) is the pushforward of $\mu$ (resp. $\nu$) by the projection $p^1$ onto the first coordinate.

To ease the notation, we denote $s_t := (x_t, y_t)$ which is interpreted as the state in RL. We adopt the convention that $s_{0:t} = (x_{0:t}, y_{0:t})$. Denote $\cS := \cX \times \cY$ as the range of states. For the completion of notation, we interpret $\pi(ds_{1:0} | s_0) = \delta_{s_0}$ as the Dirac measure at $s_0$.  For simplicity, denote the set of couplings as $\Pi(\mu^t, \nu^t, s_{0:t}) := \Pi(\mu(dx_{t+1}| x_{0:t}), \nu(dy_{t+1}|y_{0:t}))$. With a given initial state $s_0$ and a cost function depending on $s_{1:T}$, we introduce the initial value function for bicausal OT \eqref{eq:bc} as
\begin{align}\label{eq:V0}
	V_0(s_0) := \inf_{\pi \in \Pi_{bc}(\mu, \nu, s_{0})}  \int c(s_{1:T}) \pi(ds_{1:T} | s_0).
\end{align}

\section{Fitted value iteration}\label{sec:FVI}

\subsection{Motivation}
In line with the conventional OT framework, computational methods have been developed for bicausal OT, employing both primal and dual formulations. The inclusion of causality introduces a linear constraint on transport plans. The strong duality results for causal and bicausal OT were established by \citet[Theorem 2.6]{backhoff2017causal} and \citet[Corollary 3.3]{backhoff2017causal}, respectively. For the discrete case of causal OT, \citet[Algorithm 1]{acciaio2021cournot} incorporated basis functions for causality testing and then employed the Sinkhorn algorithm, while for the continuous case, \cite{xu2020cot} utilized neural networks to model the test functions instead. It is worth noting that the Sinkhorn algorithm can be interpreted as Bregman iterative projections \cite[Remark 4.8]{peyre2019computational}. With this observation, \cite{eckstein2022comp} introduced an adapted version of the Sinkhorn algorithm. Besides, \cite{pichler2022nested} also proposed a nested Sinkhorn algorithm. For the distinctions between these two methods, please refer to \citet[Remark 6.12]{eckstein2022comp}. Finally, analogous to the classical OT, the LP formulation is also available for discrete causal and bicausal OT \cite[Lemma 3.11]{eckstein2022comp}.

All these methods mainly handle discrete distributions or samples from continuous distributions.  The LP and Sinkhorn algorithms require $O(N^3\log N)$ and $O(N^2)$ operations respectively, based on a sample of size $N$ \citep{genevay2019sample}. However, even the Sinkhorn algorithm may encounter scalability issues when dealing with a large number of samples. Furthermore, the computational burden in causal and bicausal OT is exacerbated by the temporal dimension $T$. The adapted Sinkhorn algorithm discussed in \citet[Lemmas 6.2 and 6.6]{eckstein2022comp} necessitates iterating over all possible scenarios at each time $t$, with the number of scenarios rapidly growing as the time horizon increases. It's important to note that these challenges are not unique to causal and bicausal OT but also arise in other fields, including DP. A crucial observation is that for a good cost function $c$ and marginals $\mu$ and $\nu$, if two states $s_0$ and $s'_0$ are close, then the values $V_0(s_0)$ and $V_0(s'_0)$ defined in \eqref{eq:V0} should also be close. These structural properties are ignored when using matrices to model value functions.

Given the DP formulation for bicausal OT \eqref{eq:DPP}, we posit that techniques capable of addressing the computational challenges in DP may also find applicability in bicausal OT. As the state space can be vast, it often becomes necessary to employ approximation techniques. Approximate DP and RL frequently leverage specific functions to approximate the value functions or optimal policies. In this study, we employ FVI to compute the bicausal OT costs, drawing inspiration from its successful applications in \cite{munos2008finite} and batch RL \citep{duan2021risk}. FVI provides a unified approach for handling continuous distributions and high-dimensional data, exhibiting superior scalability compared to previous methods. Our FVI algorithm operates in the primal form, utilizing neural networks to approximate the value functions. In contrast, prior literature has utilized neural networks to model potential functions in the dual formulation \citep{genevay2016stochastic,seguy2018large}.  

\subsection{Algorithm}

\cite{backhoff2017causal} proved the DP principle for bicausal transport. Consider the recursive formulation
\begin{equation}\label{eq:DPP}
	\left \{
	\begin{aligned}
		V_t(s_{0:t}) & = && \inf_{\pi(ds_{t+1}| s_{0:t}) \in \Pi(\mu^t, \nu^t, s_{0:t}) } \int V_{t+1} ( s_{0:t}, s_{t+1}) \pi(ds_{t+1}| s_{0:t}), \quad t =0, ..., T-1,\\
		V_T(s_{0:T}) & = &&  c(s_{1:T}).
	\end{aligned} \right.
\end{equation} 
By \citet[Proposition 5.2]{backhoff2017causal}, if the cost function $c(s_{1:T})$ is lower semicontinuous and bounded from below, $\mu$ and $\nu$ are  successively weakly continuous in the sense that for each $t = 0, ..., T-1$,
\begin{align*}
	x_{0:t} \mapsto \mu(dx_{t+1:T}|x_{0:t}) \text{  and  }  y_{0:t} \mapsto \mu(dy_{t+1:T}|y_{0:t})
\end{align*}
are continuous with respect to the weak topologies on $\cP(\cX)$ and $\cP(\cY)$. Then there is a bicausal optimizer and the recursive formulation \eqref{eq:DPP} is well-defined. The value function $V_t$ is lower semicontinuous. We will consider a continuous and bounded cost function in Assumption \ref{assum:bound_K}.

\begin{algorithm}[h]
	\small
	\begin{algorithmic}[1]
		\State {\bf Input:} Sampling distribution $\beta$; a fixed initial state $s_0$; initial neural network $f_t(s_{0:t})$, $t = 0, 1, ..., T-1$. Denote $f_T(s_{0:T}) = \hat{f}_T (s_{0:T}) = c(s_{1:T})$. \\
		\For{$t = T - 1, T - 2, ..., 0$}
		\State Sample a mini-batch of $N$ paths $s^{n_t}_{0:t} = (x^{n_t}_{0:t}, y^{n_t}_{0:t})$, $n_t = 1, ..., N$, according to $\beta(ds_{1:t}| s_0)$. Return $s_0$ when $t=0$.
		\For{$n_t = 1, ..., N$} 
		\State Construct empirical measures $\hat{\mu}_B(dx_{t+1}|x^{n_t}_{0:t})$ and $\hat{\nu}_B(dy_{t+1}|y^{n_t}_{0:t})$ with size $B$.
		\State Calculate the empirical OT to approximate the value function at $s^{n_t}_{0:t}$:
		\begin{align}\label{eq:empi_OT}
			& \hat{V}_t(s^{n_t}_{0:t}) \leftarrow \inf_{\pi(ds_{t+1} | s^{n_t}_{0:t}) \in \Pi(\hat{\mu}^t_B, \hat{\nu}^t_B, s^{n_t}_{0:t})} \int \hat{f}_{t+1} (s^{n_t}_{0:t}, s_{t+1}) \pi(ds_{t+1} | s^{n_t}_{0:t}).
		\end{align}
		\EndFor
		\State Minimize the empirical loss by performing $G$ gradient descent steps on parameters of $f_t$. Obtain
		\begin{equation}\label{eq:emp_loss}
			\hat{f}_t \leftarrow \arg\min_{f_t \in \cF_t} \frac{1}{N} \sum^N_{n_t=1} \left( f_t(s^{n_t}_{0:t}) - \hat{V}_t (s^{n_t}_{0:t}) \right)^2.
		\end{equation}
		\EndFor
		\State {\bf Output:} Value function approximators by neural network $\hat{f}_t(s_{0:t})$, $t = 0, 1, ..., T-1$.
	\end{algorithmic}
	\caption{Fitted value iteration}
	\label{algo:fvi}
\end{algorithm}

Algorithm \ref{algo:fvi} for bicausal OT is inspired by FVI in RL \citep{munos2008finite,duan2021risk}. To learn the true value function $V_t$, FVI adopts an approximator $f_t (s_{0:t}) \in \cF_t$, where $\cF_t$ is certain function class. In particular, $f_T(s_{0:T}) := c(s_{1:T})$. Denote $f := (f_0, \ldots, f_T)$ and $\cF := \cF_0 \times \cdots \times \cF_T$ with $\cF_T := \{ c(s_{1:T})\}$.

FVI visits states $s$ according to a sampling distribution $\beta$. $\beta$ should be able to obtain representative samples and exhaust all possible states in a certain sense, as outlined in Assumption \ref{assum:concentr}. Given a state $s_{0:t}$, we use the empirical distribution to calculate an approximation $\hat{V}_t(s_{0:t})$ to the value function $V_t(s_{0:t})$ in \eqref{eq:empi_OT}. We denote the set of couplings between empirical measures by $\Pi(\hat{\mu}^t_B, \hat{\nu}^t_B, s_{0:t}) := \Pi( \hat{\mu}_B(dx_{t+1}| x_{0:t}), \hat{\nu}_B(dy_{t+1}|y_{0:t}))$, where constant $B$ is the sample size. For notation convenience, we denote the Bellman operator as 
\begin{equation}
	\cT[f_{t+1}] (s_{0:t}) := \inf_{\pi(ds_{t+1}| s_{0:t}) \in \Pi( \mu^t, \nu^t, s_{0:t}) } \int f_{t+1} (s_{0:t}, s_{t+1}) \pi (ds_{t+1}| s_{0:t}).
\end{equation}
To approximate the value function at a given point, \eqref{eq:empi_OT} uses the empirical measures for marginals. Then we denote the empirical Bellman operator as
\begin{equation}
	\widehat{\cT}_B[f_{t+1}] (s_{0:t}) := \inf_{\pi(ds_{t+1}| s_{0:t}) \in \Pi( \hat{\mu}^t_B, \hat{\nu}^t_B, s_{0:t})} \int f_{t+1} (s_{0:t}, s_{t+1}) \pi (ds_{t+1}| s_{0:t}).
\end{equation}

When the mini-batch size $B$ is large, the classical empirical OT problem in \eqref{eq:empi_OT} incurs a high computational burden. Then we can consider entropic regularization and the Sinkhorn algorithm \citep{cuturi2013sinkhorn}. Denote the regularization parameter as $\varepsilon > 0$. We replace \eqref{eq:empi_OT} with
\begin{align*}
	& \hat{V}_{\varepsilon, t}(s^{n_t}_{0:t}) \\
	& \leftarrow \inf_{\pi(ds_{t+1} | s^{n_t}_{0:t}) \in \Pi(\hat{\mu}^t_B, \hat{\nu}^t_B, s^{n_t}_{0:t})}  \int \hat{f}_{t+1} (s^{n_t}_{0:t}, s_{t+1}) \pi(ds_{t+1} | s^{n_t}_{0:t})  \\
	& \hspace{4cm} + \varepsilon \text{KL} \left(\pi(ds_{t+1}| s^{n_t}_{0:t}) \Big\| \hat{\mu}_B(dx_{t+1} |x^{n_t}_{0:t} ) \otimes \hat{\nu}_B(dy_{t+1} | y^{n_t}_{0:t}) \right).
\end{align*}
$\text{KL}(\p \| \q) = \int \ln\left( \frac{d\p}{d\q}\right) d\p$ stands for the Kullback--Leibler divergence. Correspondingly, $\hat{V}_{t}(s^{n_t}_{0:t})$ in Algorithm \ref{algo:fvi} is replaced by $\hat{V}_{\varepsilon, t}(s^{n_t}_{0:t})$.

For later use, when entropic regularization is adopted, we denote the entropic Bellman operator and the empirical counterpart as
\begin{align*}
	& \cT_\varepsilon[f_{t+1}] (s_{0:t}) \\
	& := \inf_{\pi(ds_{t+1}| s_{0:t}) \in \Pi( \mu^t, \nu^t, s_{0:t}) } \int f_{t+1} (s_{0:t}, s_{t+1}) \pi (ds_{t+1}| s_{0:t}) \\
	& \hspace{4cm} + \varepsilon \text{KL} \left(\pi(ds_{t+1}| s_{0:t}) \Big\| \mu(dx_{t+1} |x_{0:t} ) \otimes \nu(dy_{t+1} | y_{0:t}) \right)
\end{align*}
and
\begin{align*} 
	& \widehat{\cT}_{\varepsilon, B} [f_{t+1}] (s_{0:t}) \\
	& := \inf_{\pi(ds_{t+1}| s_{0:t}) \in \Pi( \hat{\mu}^t_B, \hat{\nu}^t_B, s_{0:t})} \int f_{t+1} (s_{0:t}, s_{t+1}) \pi (ds_{t+1}| s_{0:t}) \\
	& \hspace{4cm} + \varepsilon \text{KL} \left(\pi(ds_{t+1}| s_{0:t}) \Big\| \hat{\mu}_B(dx_{t+1} |x_{0:t} ) \otimes \hat{\nu}_B(dy_{t+1} | y_{0:t}) \right).
\end{align*}

To find an approximator $\hat{f}_t$ of the value function $V_t$, we perform the empirical risk minimization (ERM) in \eqref{eq:emp_loss} with the squared loss function. \eqref{eq:emp_loss} uses the empirical estimation $\hat{V}_{t}(s^{n_t}_{0:t})$. The corresponding theoretical optimization problem for \eqref{eq:emp_loss} is
\begin{equation}\label{eq:trueRM} 
	\min_{f_t \in \cF_t} \int \big( f_t(s_{0:t}) - \cT[\hat{f}_{t+1}](s_{0:t}) \big)^2 \beta(ds_{1:t}| s_0).
\end{equation}
In this work, we always assume that the infima in \eqref{eq:emp_loss} and \eqref{eq:trueRM} are attained. This assumption is valid if the Weierstrass extreme value theorem \cite[Box 1.1]{santambrogio2015optimal} applies, which holds under relatively mild conditions. For neural networks with parameters in compact domains, this assumption is guaranteed. For later use, we denote the loss function as 
\begin{equation}
	\cL(f_t(s_{0:t}), \cT[f_{t+1}](s_{0:t})) :=  \big( f_t (s_{0:t}) - \cT[f_{t+1}](s_{0:t}) \big)^2.
\end{equation}

FVI introduces several inevitable errors, especially with continuous marginals $\mu$ and $\nu$. First, instead of utilizing the true sample distribution $\beta$ in \eqref{eq:emp_loss}, we can only sample a finite number of states $s^{n_t}$. This introduces the error associated with ERM, which can be further decomposed into approximation error and estimation error, as discussed in \citet[Section 5.2]{shalev2014understanding}. Second, the true value of $V(s_{0:t})$ is unknown and needs to be approximated by $\hat{V}_t(s_{0:t})$ in \eqref{eq:empi_OT}, which introduces additional errors. Third, ERM in \eqref{eq:emp_loss} relies on optimization methods like gradient descent to find an optimizer $\hat{f}_t$. However, the optimization algorithm may not converge to the exact optimizer and is typically terminated when a certain level of accuracy is reached. Consequently, this introduces optimization errors \cite[Section 2.2]{bottou2007tradeoffs}. In this work, we always assume the optimizer in the ERM \eqref{eq:emp_loss} can be obtained and thus ignore the optimization error. The FVI algorithm proceeds in a backward manner, and these errors propagate along the time dimension. In Section \ref{sec:sam_complex}, we delve into the analysis of sample complexity and establish error bounds.

\section{Sample complexity}\label{sec:sam_complex}
\subsection{Value suboptimality and Bellman error}
To evaluate the performance of FVI, we introduce the objective functional for a bicausal transport plan $\pi$ as follows:
\begin{align}
	J_0 (s_0; \pi) := \int c(s_{1:T}) \pi(ds_{1:T} | s_0).
\end{align}

Given an approximator $f = (f_0, ..., f_{T})$, we denote
$$\pi^f:= \bar{\pi}(ds_0) \pi^{f_{1}}(ds_{1}| s_{0}) \ldots \pi^{f_{t+1}}(ds_{t+1}| s_{0:t}) \ldots \pi^{f_{T}}(ds_{T}| s_{0:T-1})$$ 
as a bicausal transport plan implied by $f$, where $\pi^{f_{t+1}}(ds_{t+1}| s_{0:t})$ is obtained by
\begin{equation}
	\pi^{f_{t+1}}(ds_{t+1}| s_{0:t}) \in \arg \min_{\pi(ds_{t+1}| s_{0:t}) \in \Pi( \mu^t, \nu^t, s_{0:t}) } \int f_{t+1} (s_{0:t}, s_{t+1}) \pi (ds_{t+1}| s_{0:t}).
\end{equation}

Inspired by the concentrability coefficient in \cite{munos2003error}, we suppose that $\pi^{\hat{f}}$, implied by the approximator $\hat{f}$ from FVI, is dominated by the sampling distribution $\beta$ in the following sense:
\begin{assumption}\label{assum:concentr}
	Given an initial state $s_0$, for any time $t=0, ..., T-1$, suppose there exists a universal and finite constant $C \geq 1$ such that 
	\begin{equation}\label{eq:conc}
		\int_{\cX_{1:t} \times \cY_{1:t}} \left( \frac{\pi^{\hat{f}}(ds_{1:t} | s_0)}{\beta (ds_{1:t} | s_0)} \right)^2 \beta(ds_{1:t} | s_0) \leq C
	\end{equation}
	holds for the approximator $\hat{f}$ learned by the FVI algorithm. The $t=0$ case is interpreted as Dirac measures. The constant $C$ is called the concentrability coefficient.
\end{assumption}
When Assumption \ref{assum:concentr} is used in Lemma \ref{lem:subopt}, we only need to consider the bicausal coupling implied by the approximator $\hat{f}$ obtained by FVI, instead of all bicausal couplings in $\Pi_{bc}(\mu, \nu)$. The concentrability coefficient is widely used in RL with Markov decision processes (MDPs); see \citet[Equation 7]{munos2003error}, \citet[Assumption A.2]{munos2008finite}, \cite{fan2020theoretical,duan2021risk}. \cite{munos2008finite} compared it with the top-Lyapunov exponent of the MDP. \citet[Section 4]{chen2019information} argued the necessity of concentrability in the batch RL. 

We can also interpret Assumption \ref{assum:concentr} as a regularization on couplings. Given the concentrability coefficient $C$ in Assumption \ref{assum:concentr}, consider the following  bicausal OT problem with an additional constraint \eqref{eq:chi2}:
\begin{align}
	\inf_{\pi \in \Pi_{bc}(\mu, \nu, s_{0})} & \int c(s_{1:T}) \pi(ds_{1:T} | s_0), \label{eq:cons} \\
	\text{ subject to } & 	\int_{\cX_{1:t} \times \cY_{1:t}} \left( \frac{\pi(ds_{1:t} | s_0)}{\beta (ds_{1:t} | s_0)} \right)^2 \beta(ds_{1:t} | s_0) \leq C, \quad \forall \; t = 1, \ldots, T-1. \label{eq:chi2}
\end{align} 
Indeed, \eqref{eq:chi2} is the $\chi^2$-divergence $\chi^2 (\p \| \q) = \int \frac{d\p^2}{d\q} -1$. Both the KL-divergence and $\chi^2$-divergence belong to the $f$-divergence family. Since Lemma \ref{lem:subopt} below involves squared errors, the concentrability condition in \eqref{eq:conc} arises naturally. Additionally, the $\chi^2$-divergence facilitates sparse approximations of optimal transport couplings \cite{gonzalez2024quantitative,bayraktar2022stability}.

We denote $\pi^{*}_C$ as a minimizer of the problem \eqref{eq:cons}-\eqref{eq:chi2}. Note that $\pi^{*}_C$ may not be $\pi^f$ with some $f \in \cF$. Definition \ref{def:suboptimal} introduces the concept of value suboptimality to measure the accuracy of an approximator $f$ with respect to $\pi^*_C$. 
\begin{definition}\label{def:suboptimal}
	Given an initial state $s_0$, an approximator $f$ is said to be $\varepsilon$-optimal with respect to $\pi^*_C$ if 
	\begin{align*}
		& 2 \left|f_0(s_0) - \int f_1(s_{0:1}) \pi^f(ds_1|s_0)\right| + J_0 (s_0; \pi^f) - J_0 (s_0; \pi^*_C) \leq \varepsilon.
	\end{align*}
\end{definition}
Our definition of $\varepsilon$-optimality differs slightly from its counterpart in RL \cite[Definition 2.1]{duan2021risk}. In our case, $\pi^f$ depends only on $(f_1, \ldots, f_T)$ and not on $f_0$. Therefore, we include the first term to account for the suboptimality of $f_0$ in Definition \ref{def:suboptimal}. The constant $2$ is introduced for scaling purposes.

Value suboptimality in Definition \ref{def:suboptimal} can be bounded by another surrogate criterion known as the Bellman error, which is defined as follows:
\begin{equation}
	\cB(f) := \big( f_0(s_0) - \cT[f_1](s_0) \big)^2 + \sum^{T-1}_{t = 1} \int \big( f_t(s_{0:t}) - \cT[f_{t+1}](s_{0:t}) \big)^2 \beta(ds_{1:t}| s_0).
\end{equation}

The following result provides an upper bound on the suboptimality of the approximator $f$ in terms of Bellman errors. 
\begin{lemma}\label{lem:subopt}
	Given an initial state $s_0$, suppose Assumption \ref{assum:concentr} holds. For the approximator $\hat{f}$ learned by FVI, we have
	\begin{align*}
		& 2 \left|\hat{f}_0(s_0) - \int \hat{f}_1(s_{0:1}) \pi^{\hat{f}}(ds_1|s_0)\right| + J_0(s_0; \pi^{\hat{f}}) - J_0(s_0; \pi^*_C) \leq 2 \sqrt{CT\cB(\hat{f})},
	\end{align*}
	where $C$ is the concentrability coefficient in Assumption \ref{assum:concentr}.
\end{lemma}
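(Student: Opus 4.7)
The plan is to decompose the value gap $J_0(s_0; \pi^{\hat{f}}) - J_0(s_0; \pi^*_C)$ via a one-step telescoping into a sum of Bellman residuals $\hat{f}_t(s_{0:t}) - \cT[\hat{f}_{t+1}](s_{0:t})$ and then bound each term using concentrability together with two applications of Cauchy--Schwarz. Introduce the policy-value $J_t(s_{0:t};\pi) := \int c(s_{1:T})\,\pi(ds_{t+1:T}|s_{0:t})$, so that $J_T = c = \hat{f}_T$ and $J_t(\cdot;\pi) = \int J_{t+1}(\cdot;\pi)\,\pi(ds_{t+1}|\cdot)$ for any bicausal $\pi$. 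Adding and subtracting $\hat{f}_{t+1}$ inside the integral and iterating backward yields
\begin{align*}
    J_0(s_0; \pi^{\hat{f}}) - \hat{f}_0(s_0) &= \sum_{t=0}^{T-1} \int \bigl(\cT[\hat{f}_{t+1}] - \hat{f}_t\bigr)(s_{0:t})\,\pi^{\hat{f}}(ds_{1:t}|s_0), \\
    \hat{f}_0(s_0) - J_0(s_0; \pi^*_C) &\leq \sum_{t=0}^{T-1} \int \bigl(\hat{f}_t - \cT[\hat{f}_{t+1}]\bigr)(s_{0:t})\,\pi^*_C(ds_{1:t}|s_0),
\end{align*}
the first line being an equality by greediness of $\pi^{\hat{f}}$, the second an inequality because the bicausal characterization from \cite[Proposition 5.1]{backhoff2017causal} makes each conditional $\pi^*_C(ds_{t+1}|s_{0:t})$ admissible in the Bellman minimization, hence $\int \hat{f}_{t+1}\,d\pi^*_C \geq \cT[\hat{f}_{t+1}]$.

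Adding these two displays, the $t=0$ integrands cancel because $\pi^{\hat{f}}$ and $\pi^*_C$ both reduce to $\delta_{s_0}$ at $t=0$, leaving
\begin{align*}
J_0(s_0; \pi^{\hat{f}}) - J_0(s_0; \pi^*_C) \leq \sum_{t=1}^{T-1} \int (\hat{f}_t - \cT[\hat{f}_{t+1}])\,(\pi^*_C - \pi^{\hat{f}})(ds_{1:t}|s_0).
\end{align*}
This cancellation is precisely why the term $2|\hat{f}_0(s_0) - \cT[\hat{f}_1](s_0)|$ must appear separately on the left-hand side of the lemma: $\hat{f}_0$ plays no role in constructing $\pi^{\hat{f}}$, so its Bellman residual at $t=0$ has to be booked by hand.

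For each $t \geq 1$, Cauchy--Schwarz in $\beta$ combined with Assumption \ref{assum:concentr} (bounding the $\chi^2$-density of $\pi^{\hat{f}}$ against $\beta$ by $C$) and the defining constraint \eqref{eq:chi2} (giving the same bound for $\pi^*_C$) yields
\begin{align*}
\left| \int (\hat{f}_t - \cT[\hat{f}_{t+1}])\,d(\pi^*_C - \pi^{\hat{f}}) \right| \leq 2\sqrt{C}\,\sqrt{\int (\hat{f}_t - \cT[\hat{f}_{t+1}])^2\,\beta(ds_{1:t}|s_0)}.
\end{align*}
Since $C \geq 1$, the boundary term is bounded by $2\sqrt{C}\sqrt{(\hat{f}_0(s_0) - \cT[\hat{f}_1](s_0))^2}$, so every time index contributes uniformly as $2\sqrt{C}\sqrt{e_t}$, where $e_0 := (\hat{f}_0(s_0) - \cT[\hat{f}_1](s_0))^2$ and $e_t$ is the $\beta$-weighted squared residual at time $t$. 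A final Cauchy--Schwarz across the $T$ time indices then produces $\sum_{t=0}^{T-1}\sqrt{e_t} \leq \sqrt{T \sum_{t=0}^{T-1} e_t} = \sqrt{T\,\cB(\hat{f})}$, giving the claimed $2\sqrt{C T\,\cB(\hat{f})}$.

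The main obstacle is the simultaneous handling of the $t=0$ cancellation and the recognition that \eqref{eq:chi2} is exactly what is needed to extend the concentrability-based Cauchy--Schwarz from $\pi^{\hat{f}}$ to $\pi^*_C$ with the same constant $C$; once this is isolated, the rest of the argument is a standard backward telescoping plus two Cauchy--Schwarz estimates.
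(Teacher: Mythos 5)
Your proposal is correct and follows essentially the same route as the paper: telescope the value gap through the Bellman residuals separately under $\pi^{\hat{f}}$ (equality by greediness) and $\pi^*_C$ (inequality by admissibility of its conditional kernels), observe the $t=0$ cancellation, then apply Cauchy--Schwarz twice using concentrability for $\pi^{\hat{f}}$ and the $\chi^2$ constraint \eqref{eq:chi2} for $\pi^*_C$. The only cosmetic difference is that you package the two per-time bounds as a single $2\sqrt{C}\sqrt{e_t}$ whereas the paper keeps the two measures' contributions as two parallel chains; your reinstatement of the $t=0$ boundary term and the final summation Cauchy--Schwarz match the paper exactly.
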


Assumption \ref{assum:concentr} and Lemma \ref{lem:subopt} highlight the key considerations when selecting the sampling distribution $\beta$ and function class $\cF$. First, a typical choice for $\beta$ is the independent coupling $\beta = \mu \otimes \nu$. It is known that the optimal value $J_0(s_0; \pi^*_C)$ in the constrained problem \eqref{eq:cons} converges to the original optimal value $V_0(s_0)$ in \eqref{eq:V0} when $C \rightarrow \infty$; see \cite[Lemma 4.1 and Theorem 4.2]{eckstein2022comp}, where the proof extends to other divergences. The convergence rate in the bicausal setting remains an open problem. However, in the static setting, \cite{gonzalez2024quantitative} addresses the case of discrete marginals, and \cite{eckstein2024convergence} considers the general case using quantization. Second, the approximator $\hat{f}$ learned by the FVI algorithm should have a small Bellman error $\cB(\hat{f})$, ensuring that $C \cB(\hat{f})$ remains small. Later, with $\cF$ chosen appropriately, we show that $\cB(\hat{f}) \rightarrow 0$, as the sample size $B \rightarrow \infty$ and then $N \rightarrow \infty$. In summary, we first ensure that $\cB(\hat{f}) \rightarrow 0$, and then let $C  \rightarrow \infty$, such that $\hat{f}_0(s_0) \rightarrow V_0(s_0)$ by Lemma \ref{lem:subopt}.

As a side note, another approach to verify Assumption \ref{assum:concentr} and apply our results is to construct two sequences of {\it discrete} measures $\mu^n$ and $\nu^n$, $n = 1, 2, \ldots$, converging to $\mu$ and $\nu$ respectively in the adapted Wasserstein distance of order $p \geq 1$; see \cite[Theorem 3.6 and Lemma 4.1]{eckstein2022comp} for the exact procedure. Then, apply our algorithm to compute:
\begin{align*}
	V_0(s_0; \mu^n, \nu^n) := \inf_{\pi \in \Pi_{bc}(\mu^n, \nu^n, s_{0})}  \int c(s_{1:T}) \pi(ds_{1:T} | s_0).
\end{align*}
With $\beta = \mu^n \otimes \nu^n$, Assumption \ref{assum:concentr} holds, since  both $\beta$ and the domains $\cX$ and $\cY$ are discrete. Our sample complexity results still apply, and the proofs can be simplified. Finally, \cite[Theorem 3.6]{eckstein2022comp} yields $\lim_{n \rightarrow \infty} 	V_0(s_0; \mu^n, \nu^n)  = 	V_0(s_0)$.

In the following two subsections, we further establish bounds on Bellman errors $\cB(\hat{f})$ with tools from Rademacher complexity \citep{mohri2018foundations} and local Rademacher complexity \citep{bartlett2005local}. 
\subsection{Bellman error bounds by Rademacher complexity}
To quantify the learning capability of a function class, we recall the definition of (empirical) Rademacher complexity from \citet[Definitions 3.1 and 3.2, Chapter 3]{mohri2018foundations}:
\begin{definition}
	Given a function class $\cF$, the empirical Rademacher complexity of  $\cF$ with respect to a fixed sample $(s^1, ... , s^N)$ of size $N$ is defined as
	\begin{equation}\label{ERC}
		\E_\sigma \cR_N \cF := \E_\sigma \Big[ \sup_{f \in \cF} \frac{1}{N} \sum^N_{i=1} \sigma_i f(s^i) \Big],
	\end{equation}
	where $\sigma = (\sigma_1, ..., \sigma_N)$, and $\sigma_i$ are independent uniform random variables taking values in $\{ -1, + 1\}$.  $\sigma_i$ is referred to as Rademacher random variables.
	
	The Rademacher complexity of $\cF$ is the expectation of the empirical Rademacher complexity over all samples of size $N$ drawn according to $\beta$:
	\begin{equation}\label{RC}
		\E\cR_N \cF := \E_{S} [ \E_\sigma \cR_N \cF ].
	\end{equation}
\end{definition}

Typically, Rademacher complexity requires the functions to be bounded. Therefore, we impose Assumption \ref{assum:bound_K} as follows:
\begin{assumption}\label{assum:bound_K}
	Suppose that all $f_t \in \cF_t$, $t = 0, ..., T$, including the cost function $c(s_{1:T})$, are continuous. Moreover, there exists a constant $K \geq 0$ such that $\|f_t(s_{0:t}) \|_\infty \leq K$ for $t = 0, ..., T$.
\end{assumption}

We impose the approximate completeness assumption \ref{assum:complete} on function classes $\cF$, which will be verified later in Section \ref{sec:app_complete}. This assumption states that we can approximate the Bellman operator well at each time step, which is commonly adopted in RL; see \cite{munos2008finite,chen2019information,duan2021risk}.
\begin{assumption}\label{assum:complete}
	Fix an initial state $s_0$. For any time $t=0, ..., T-1$, there exists $\zeta > 0$ such that
	\begin{equation}
		\sup_{f_{t+1} \in \cF_{t+1}} \inf_{f_t \in \cF_t} \int \big( f_t(s_{0:t}) - \cT[f_{t+1}](s_{0:t}) \big)^2 \beta(ds_{1:t}| s_0) \leq \zeta.
	\end{equation}
\end{assumption}

Assumption \ref{assum:Lip} serves to quantify the difference between the true and empirical Bellman operators. If the functions considered are Lipschitz continuous, then Assumption \ref{assum:Lip} is valid by the duality of the Wasserstein distance $W_1$ and the fact that $W_1 \leq W_p,\, p \in [1, \infty)$. Another sufficient condition is provided by \citet[Lemma 3.5]{eckstein2022MA}.
\begin{assumption}\label{assum:Lip}
	Suppose the initial state $s_0$ is fixed. Given $t = 0, ..., T-1$, for any $f_{t+1} (s_{0:t+1}) \in \cF_{t+1}$, state $s_{0:t} \in \cX_{0:t} \times \cY_{0:t}$, coupling $\pi(ds_{t+1}| s_{0:t}) \in \Pi(\mu^t, \nu^t, s_{0:t})$, and coupling $\hat{\pi}(ds_{t+1}| s_{0:t}) \in \Pi(\hat{\mu}^t_B, \hat{\nu}^t_B, s_{0:t})$, there exists a universal constant $L > 0$, such that
	\begin{align*}
		& \Big| \int f_{t+1}(s_{0:t}, s_{t+1}) \pi(ds_{t+1}| s_{0:t}) - \int f_{t+1} (s_{0:t}, s_{t+1}) \hat{\pi}(ds_{t+1}|s_{0:t}) \Big| \\
		& \qquad \leq L W_p (\pi(ds_{t+1}| s_{0:t}), \hat{\pi}(ds_{t+1}| s_{0:t})).
	\end{align*}
\end{assumption}

Next, we consider a function that depends on the convergence of empirical measures for marginals:
\begin{align}
	\Delta(s_{0:t}; B) := & \big[ W^p_p(\mu(dx_{t+1}| x_{0:t}), \hat{\mu}_B(dx_{t+1}| x_{0:t})) \\
	& + W^p_p(\nu(dy_{t+1}| y_{0:t}), \hat{\nu}_B(dy_{t+1}| y_{0:t})) \big]^{1/p}. \nonumber
\end{align}

With Rademacher complexity, we can derive an upper bound for Bellman errors of the approximator learned through FVI.
\begin{lemma}\label{lem:RC}
	Assume the initial state $s_0$ is fixed. Suppose Assumptions \ref{assum:bound_K}, \ref{assum:complete}, and \ref{assum:Lip} hold. For any $\delta \in (0, 1)$, with probability at least $1 -\delta$ over the draw of an i.i.d. sample $\{ s^{n_t}_{0:t} \}$, where $n_t = 1, ..., N$, $t=0, ..., T-1$, the Bellman error of $\hat{f}$ obtained by FVI satisfies
	\begin{align*}
		\cB(\hat{f}) \leq & T \zeta + \frac{2 L^2}{N} \sum^{T-1}_{t=0} \sum^N_{n_t = 1} \Delta^2(s^{n_t}_{0:t}; B) + \frac{8 L K}{N} \sum^{T-1}_{t=0} \sum^N_{n_t = 1} \Delta(s^{n_t}_{0:t}; B) \\
		& + 8 K \sum^{T-1}_{t=0} \E \cR_N \cF_t  + 4 T K^2 \sqrt{\frac{2 \ln(T/\delta)}{N}}.
	\end{align*}
\end{lemma}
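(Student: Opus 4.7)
The plan is to bound the Bellman error $\cB(\hat f)$ one time step at a time and union-bound the resulting high-probability events. Fix $t$ and let $L_t(f) := \int (f(s_{0:t}) - \cT[\hat f_{t+1}](s_{0:t}))^2 \beta(ds_{1:t}|s_0)$ denote the population loss. By Assumption~\ref{assum:complete} applied to the random but already-trained $\hat f_{t+1}$, pick $f^*_t \in \cF_t$ with $L_t(f^*_t) \leq \zeta$. Alongside $L_t$ I will use two empirical losses built from the $t$-th batch $\{s^{n_t}_{0:t}\}_{n_t=1}^N$: an oracle version $\tilde L_t(f) := \frac{1}{N}\sum_{n_t} (f(s^{n_t}_{0:t}) - \cT[\hat f_{t+1}](s^{n_t}_{0:t}))^2$ built with the true Bellman targets, and the actual ERM objective $\hat L_t(f) := \frac{1}{N}\sum_{n_t} (f(s^{n_t}_{0:t}) - \widehat\cT_B[\hat f_{t+1}](s^{n_t}_{0:t}))^2$ from \eqref{eq:emp_loss}.

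Decompose $L_t(\hat f_t) - L_t(f^*_t)$ as the telescoping sum of $[L_t - \tilde L_t](\hat f_t)$, $[\tilde L_t - \hat L_t](\hat f_t)$, $\hat L_t(\hat f_t) - \hat L_t(f^*_t)$, $[\hat L_t - \tilde L_t](f^*_t)$, and $[\tilde L_t - L_t](f^*_t)$. The middle summand is nonpositive by the ERM step of Algorithm~\ref{algo:fvi}. The two brackets comparing $\tilde L_t$ with $\hat L_t$ are controlled via the algebraic identity $(a-b)^2 - (a-c)^2 = 2(a-c)(c-b) + (c-b)^2$ applied pointwise with $a = f(s^{n_t}_{0:t})$, $b = \cT[\hat f_{t+1}](s^{n_t}_{0:t})$, $c = \widehat\cT_B[\hat f_{t+1}](s^{n_t}_{0:t})$: Assumption~\ref{assum:bound_K} gives $|a - c| \leq 2K$, while Assumption~\ref{assum:Lip}, after converting the difference of infima over $\Pi(\mu^t,\nu^t,s)$ and $\Pi(\hat\mu^t_B,\hat\nu^t_B,s)$ into a Wasserstein bound by gluing optimal plans between $(\mu,\hat\mu_B)$ and between $(\nu,\hat\nu_B)$, yields $|c - b| \leq L\,\Delta(s^{n_t}_{0:t}; B)$. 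Each of these two brackets is therefore bounded by $\frac{4KL}{N}\sum_n \Delta + \frac{L^2}{N}\sum_n \Delta^2$, which together contribute the $\frac{8KL}{N}\sum \Delta$ and $\frac{2L^2}{N}\sum \Delta^2$ summands of the statement.

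For the remaining brackets $[L_t - \tilde L_t]$, the loss class $\cL_t := \{s \mapsto (f(s) - \cT[\hat f_{t+1}](s))^2 : f \in \cF_t\}$ is bounded by $4K^2$ under Assumption~\ref{assum:bound_K}. Conditioning on $\hat f_{t+1}$ (which is independent of the fresh $t$-th batch because FVI runs backward in $t$), a standard symmetrization-plus-McDiarmid argument gives $\sup_{f \in \cF_t}[L_t(f) - \tilde L_t(f)] \leq 2\,\E\cR_N \cL_t + 4K^2\sqrt{\ln(T/\delta)/(2N)}$ with probability at least $1 - \delta/(2T)$, and symmetrically for the reverse direction. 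Talagrand's contraction lemma applied to the $4K$-Lipschitz squared loss then yields $\E\cR_N \cL_t \leq 4K\,\E\cR_N \cF_t$, so the Rademacher contribution per step is at most $8K\,\E\cR_N \cF_t$ plus its concentration tail.

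Summing the per-step bounds over $t = 0, 1, \ldots, T-1$ and union-bounding the $T$ Rademacher events (total failure probability $\leq \delta$) assembles the stated inequality; the degenerate $t=0$ case, in which the batch collapses to the singleton $\{s_0\}$, is covered by the same pointwise identity without any Rademacher contribution. The main technical obstacle I anticipate is justifying the bound $|\cT[g](s) - \widehat\cT_B[g](s)| \leq L\,\Delta(s;B)$ itself, since both sides are infima over distinct coupling polytopes and converting Assumption~\ref{assum:Lip} into a comparison of infima requires constructing, for each near-optimal coupling on one side, a nearby admissible coupling on the other by gluing the two marginal optimal transports into a single four-marginal measure.
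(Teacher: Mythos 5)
Your proof takes essentially the same route as the paper's: fix $t$, invoke Assumption~\ref{assum:complete} to produce a comparator $f^*_t$, exploit ERM optimality of $\hat f_t$, replace the true Bellman target $\cT[\hat f_{t+1}]$ by its empirical surrogate $\widehat\cT_B[\hat f_{t+1}]$ at cost $L\Delta(4K+L\Delta)$ per sample via Assumption~\ref{assum:Lip} and the gluing/shadow construction (the paper's Lemma~\ref{lem:Lip_shadow}), apply symmetrization plus Talagrand contraction to absorb the $4K$-Lipschitz squared loss, and finish with a union bound over $t$. Your identification of the four-marginal gluing argument as the crux of the $|\cT[g] - \widehat\cT_B[g]| \le L\Delta$ step is exactly right.

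One place where your write-up is ambiguous and, if read literally, overshoots the stated constant: your telescoping isolates two generalization brackets, $[L_t - \tilde L_t](\hat f_t)$ and $[\tilde L_t - L_t](f^*_t)$, and you say you control them ``symmetrically,'' each via a Rademacher bound of the form $2\,\E\cR_N\cL_t + \mathrm{tail}$. That would yield $4\,\E\cR_N\cL_t \le 16K\,\E\cR_N\cF_t$ per time step, not the $8K\,\E\cR_N\cF_t$ in the statement. The paper sidesteps this by applying the one-sided Rademacher bound (its Lemma~\ref{lem:mohri}) once to the \emph{centered} class $\{\cL(f_t,\cT[\hat f_{t+1}]) - \cL(f^*_t,\cT[\hat f_{t+1}]) : f_t \in \cF_t\}$; by the symmetry of the Rademacher variables, the constant offset $\cL(f^*_t,\cT[\hat f_{t+1}])$ drops out of the complexity, so the Rademacher term is a single $2\,\E\cR_N\{\cL(f_t,\cT)\} \le 8K\,\E\cR_N\cF_t$. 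To fix your version without centering, observe that $f^*_t$ is deterministic once you condition on $\hat f_{t+1}$, so $[\tilde L_t - L_t](f^*_t)$ is the deviation of a sample mean of a single bounded function and needs only a Hoeffding tail, contributing no Rademacher term; then only the $\hat f_t$ bracket carries the $8K\,\E\cR_N\cF_t$ and the stated constant is recovered (the two tail terms combine to the same order as the paper's $4K^2\sqrt{2\ln(T/\delta)/N}$ after adjusting the per-event failure probabilities). Either fix is fine; the centered-class route is the cleaner one and is what the paper uses.
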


To provide the convergence rate of $\Delta(s^n_{0:t}; B)$, we present two results for the unregularized and entropic OT cases in FVI Algorithm \ref{algo:fvi}, respectively.

For the unregularized case, \cite{fournier2015rate} provided concentration inequalities for unconditional measures. In our problem, which involves a temporal structure, we assume additional knowledge about the dynamics of marginals in Assumption \ref{assum:light}. For instance, in the special case where $P_t = 0$ and $Q_t = 0$, Assumption \ref{assum:light} implies the existence of a universal constant $C_0$ that is independent of $x_{0:t}$ and $y_{0:t}$, satisfying the exponential moment condition stated in \citet[Theorem 2, Condition (1)]{fournier2015rate}. Furthermore, Assumption \ref{assum:light} can encompass more general scenarios, such as autoregressive (AR) models. If the domains are compact, Assumption \ref{assum:light} can be verified under mild conditions.

\begin{assumption}(Light-tailed distribution)\label{assum:light}
	For each time $t = 0,..., T-1$, there exist deterministic functions $P_t$ and $Q_t$, such that with given $x_{0:t} \in \cX_{0:t}$ and $y_{0:t} \in \cY_{0:t}$,
	\begin{align*}
		& \big(x_{t+1} -  P_t(x_{0:t}) \big) \sim \lambda(dx_{t+1} | x_{0:t}), \quad \big(y_{t+1} - Q_t(y_{0:t}) \big) \sim \eta(dy_{t+1} | y_{0:t}).
	\end{align*}
	And the distributions $\lambda$ and $\eta$ satisfy the following moment conditions with universal constants $a > p$, $\xi > 0$, $C_0 < \infty$, independent of $x_{0:t}$ and $y_{0:t}$:
	\begin{align*}
		\int_{\cX_{t+1}} e^{\xi |x_{t+1}|^a} \lambda(dx_{t+1}|x_{0:t}) \leq C_0, \quad 	\int_{\cY_{t+1}} e^{\xi |y_{t+1}|^a} \eta(dy_{t+1}|y_{0:t}) \leq C_0.
	\end{align*}
\end{assumption}

Next, we define a function for the rate of convergence as
\begin{equation}\label{eq:R}
	R(\delta, B) = \left\{
	\begin{array}{rll}
		& \left( \frac{\ln(C_1/\delta)}{C_2 B}\right)^{1/\max\{d/p, 2\}},  & B \geq  \frac{\ln(C_1/\delta)}{C_2}, \\
		& \left( \frac{\ln(C_1/\delta)}{C_2 B}\right)^{p/a}, & B \leq  \frac{\ln(C_1/\delta)}{C_2}.
	\end{array}\right.
\end{equation}
Here, constants $C_1$ and $C_2$ depend only on $p, d, a, \xi$, and $C_0$. These constants correspond to $C$ and $c$ in \citet[Theorem 2]{fournier2015rate}. Our assumption validates Condition (1) in \citet[Theorem 2]{fournier2015rate}.

With Assumption \ref{assum:light}, we can further characterize $\Delta(s^n_{0:t}; B)$ and provide the convergence rate in terms of $R(\delta, B)$. It is important to let the sample size $B \rightarrow \infty$ first, followed by $N \rightarrow \infty$.
\begin{theorem}\label{thm:light_RC}
	Assume the initial state $s_0$ is fixed. Suppose Assumptions \ref{assum:bound_K}, \ref{assum:complete}, \ref{assum:Lip},  and \ref{assum:light} hold. With probability at least $1 -\delta$, $\delta \in (0, 1)$, the Bellman error of $\hat{f}$ obtained by FVI satisfies
	\begin{align*}
		\cB(\hat{f}) \leq & T \zeta + 2 L^2 T \cdot 2^{2/p} R\left(\frac{\delta}{(2N+1)T}, B\right)^{2/p} + 8LKT \cdot 2^{1/p}R\left(\frac{\delta}{(2N+1)T}, B\right)^{1/p} \\
		& + 8 K \sum^{T-1}_{t=0} \E \cR_N \cF_t  + 4 T K^2 \sqrt{\frac{2 \ln((2N+1)T/\delta)}{N}}.
	\end{align*}
\end{theorem}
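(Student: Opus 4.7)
The plan is to derive Theorem \ref{thm:light_RC} directly from Lemma \ref{lem:RC} by replacing the random quantities $\Delta(s^{n_t}_{0:t}; B)$ and $\Delta^2(s^{n_t}_{0:t}; B)$ in that bound with high-probability uniform upper bounds obtained from the Fournier--Guillin concentration inequality for empirical Wasserstein distances, and then combining everything with a carefully calibrated union bound. I would begin by re-applying Lemma \ref{lem:RC} with the confidence parameter $\delta/(2N+1)$ in place of $\delta$; this preserves every deterministic constant and converts $\sqrt{2\ln(T/\delta)/N}$ into $\sqrt{2\ln((2N+1)T/\delta)/N}$, matching the last term of the theorem.

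Next, for each $t \in \{0, \ldots, T-1\}$ and each $n_t \in \{1, \ldots, N\}$, I would condition on the drawn path $s^{n_t}_{0:t}$ and apply \citet[Theorem 2]{fournier2015rate} with confidence parameter $\delta/[(2N+1)T]$ separately to $W_p^p(\mu(dx_{t+1}|x^{n_t}_{0:t}), \hat{\mu}_B(dx_{t+1}|x^{n_t}_{0:t}))$ and to $W_p^p(\nu(dy_{t+1}|y^{n_t}_{0:t}), \hat{\nu}_B(dy_{t+1}|y^{n_t}_{0:t}))$. Assumption \ref{assum:light} ensures that, after the deterministic translations by $P_t$ and $Q_t$ (which Wasserstein distances ignore when both a measure and its empirical counterpart are shifted by the same vector), the conditional marginals inherit the exponential-moment hypothesis of \citet[Theorem 2, condition (1)]{fournier2015rate} uniformly in the conditioning path. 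Consequently the Fournier--Guillin constants $C_1, C_2$ that enter the definition of $R(\cdot,\cdot)$ do not depend on $(n_t, t)$, and each application yields $W_p^p \le R(\delta/[(2N+1)T], B)$. Summing the $\mu$- and $\nu$-contributions gives the pointwise bounds
\begin{equation*}
\Delta(s^{n_t}_{0:t}; B) \leq 2^{1/p} R\!\left(\tfrac{\delta}{(2N+1)T}, B\right)^{1/p}, \quad \Delta^2(s^{n_t}_{0:t}; B) \leq 2^{2/p} R\!\left(\tfrac{\delta}{(2N+1)T}, B\right)^{2/p}.
\end{equation*}

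A union bound over the $1 + 2NT$ failure events---namely the single Lemma \ref{lem:RC} event of failure probability $\delta/(2N+1)$, plus $2NT$ Fournier--Guillin events each of failure probability $\delta/[(2N+1)T]$---yields a total failure probability of at most $\delta/(2N+1) + 2N\delta/(2N+1) = \delta$. On the complementary good event, substituting the pointwise upper bounds into Lemma \ref{lem:RC} collapses the sample averages $\frac{1}{N}\sum_{n_t=1}^N$ to their deterministic surrogates, reproducing exactly the two middle terms $2 L^2 T \cdot 2^{2/p} R^{2/p}$ and $8 L K T \cdot 2^{1/p} R^{1/p}$ in the theorem statement.

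The principal obstacle is verifying that the Fournier--Guillin constants $C_1, C_2$ are genuinely uniform over all conditioning paths, which is precisely why Assumption \ref{assum:light} is phrased in terms of driving laws $\lambda$ and $\eta$ whose exponential moments are bounded independently of $x_{0:t}, y_{0:t}$; translation invariance of $W_p$ (since shifting both $\mu(\cdot|x_{0:t})$ and its empirical counterpart by the same $P_t(x_{0:t})$ leaves their Wasserstein distance unchanged) then transfers this uniformity to the conditional laws themselves. Everything else is union-bound bookkeeping.
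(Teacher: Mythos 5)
Your proposal is correct and matches the paper's proof in all essential respects: it uses translation invariance of $W_p$ to transfer Assumption \ref{assum:light}'s uniform exponential-moment bound from $\lambda,\eta$ to the conditional marginals, applies \citet[Theorem 2]{fournier2015rate} per path to bound $\Delta(s^{n_t}_{0:t};B)$ by $2^{1/p}R(\cdot,B)^{1/p}$, and calibrates a union bound over the $2NT$ Fournier--Guillin events plus the Lemma \ref{lem:RC} event so that the per-event confidence $\delta/[(2N+1)T]$ yields total failure $\delta$. The only cosmetic difference from the paper is bookkeeping: you invoke Lemma \ref{lem:RC} as a black box with confidence $\delta/(2N+1)$ and then add the Wasserstein events, whereas the paper re-enters Lemma \ref{lem:RC}'s proof at the per-time inequality, allocates $\delta/(2N+1)$ there together with the $2N$ Wasserstein events, and unions over $t$ at the end; both allocations assign the same failure probability to the same elementary events and produce the identical bound.
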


The curse of dimensionality is revealed in Theorem \ref{thm:light_RC} as $R(\delta, B)$ scales with $B^{-1/d}$. It is well-known that entropic regularization can improve sample complexity \citep{genevay2019sample}. To quantify the rate of convergence under entropic regularization, we define the function:
\begin{align}\label{eq:R_entr}
	R(\delta, B; \varepsilon) := \frac{C}{\sqrt{B}} \left( 1 + \max\{1, \varepsilon \} e^{C/\varepsilon}  \right) \left( \max\{1, \varepsilon^{-d/2}\} + \sqrt{\ln\left(1/\delta\right)} \right) + C \varepsilon \ln (C/\varepsilon).
\end{align}
Here, $C$ is a sufficiently large constant, which is calculated in the proof of Corollary \ref{cor:entropy_RC}. Importantly, $C$ is independent of the probability $\delta$, sample sizes $B$ and $N$, and regularization parameter $\varepsilon$. While $R(\delta, B; \varepsilon)$ scales with $1/\sqrt{B}$, it exhibits an exponential dependence on the regularization constant $\varepsilon$, and converges to infinity when $\varepsilon \rightarrow 0$. Furthermore, it is worth noting that Corollary \ref{cor:entropy_RC} imposes stronger assumptions on the cost function and domains compared to the previous results.

\begin{corollary}\label{cor:entropy_RC}
	Suppose $\cX_{0:T}$ and $\cY_{0:T}$ are compact. Assume the initial state $s_0$ is fixed. The functions $f_t \in \cF_{t}, t =0, ..., T$, including the cost function $c(s_{1:T})$, belong to $\cC^\infty$. Moreover, these functions are uniformly bounded by $K$ under Assumption \ref{assum:bound_K}. The derivatives of these functions up to a sufficiently high order are uniformly bounded by a universal constant $L' > 0$. Additionally, suppose Assumption \ref{assum:complete} holds. Then for a given $\delta \in (0, 1)$, with probability at least $1 - \delta$, the Bellman error of $\hat{f}$ obtained by FVI satisfies
	\begin{align*}
		\cB(\hat{f}) \leq & T \zeta + 2 T R\left(\frac{\delta}{(2N+1)T}, B; \varepsilon \right)^2 + 8 K T R\left(\frac{\delta}{(2N+1)T}, B; \varepsilon \right) \\
		& + 8 K \sum^{T-1}_{t=0} \E \cR_N \cF_t  + 4 T K^2 \sqrt{\frac{2 \ln((2N+1)T/\delta)}{N}}.
	\end{align*}
	The constant $C$ in $R(\delta, B; \varepsilon)$ depends on the constant $L'$, the uniform bound $K$, the diameters of $\cX$ and $\cY$, the time horizon $T$, and the dimension $d$. Importantly, it is independent of the probability $\delta$, sample sizes $B$ and $N$, and regularization parameter $\varepsilon$.
\end{corollary}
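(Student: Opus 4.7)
The plan is to mirror the argument that produces Theorem \ref{thm:light_RC} from Lemma \ref{lem:RC}, but with the unregularized Bellman operators $\cT$ and $\widehat{\cT}_B$ systematically replaced by their entropic counterparts $\cT_\varepsilon$ and $\widehat{\cT}_{\varepsilon,B}$, and with the Wasserstein-based deviation control (Assumption \ref{assum:Lip}) replaced by a direct sample-complexity bound on entropic OT in the spirit of \cite{genevay2019sample}. Concretely, I would first re-run the proof of Lemma \ref{lem:RC} with $\cT_\varepsilon,\widehat{\cT}_{\varepsilon,B}$ in place of $\cT,\widehat{\cT}_B$. In that argument the role of the quantity $L\cdot\Delta(s_{0:t};B)$ is purely to provide a uniform pointwise bound on $|\cT[f_{t+1}](s_{0:t}) - \widehat{\cT}_{B}[f_{t+1}](s_{0:t})|$; everything else is Rademacher-complexity machinery on the ERM step \eqref{eq:emp_loss} and a McDiarmid-type concentration of the empirical squared loss around its expectation. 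Those parts are insensitive to the choice of regularization and transfer verbatim, producing the terms $8K\sum_t \E\cR_N\cF_t$ and $4TK^2\sqrt{2\ln((2N+1)T/\delta)/N}$ in the final bound, as well as the $T\zeta$ term from Assumption \ref{assum:complete}.

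The decisive step is then bounding
\begin{equation*}
  \bigl|\cT_\varepsilon[f_{t+1}](s_{0:t}) - \widehat{\cT}_{\varepsilon,B}[f_{t+1}](s_{0:t})\bigr|
\end{equation*}
uniformly over $f_{t+1}\in\cF_{t+1}$, at each sampled path $s_{0:t}^{n_t}$. For a fixed conditioning path, this is precisely the deviation between the true and empirical entropic OT values with integrand $f_{t+1}(s_{0:t},\cdot)$ on $\cX_{t+1}\times\cY_{t+1}$, marginals $\mu(dx_{t+1}\mid x_{0:t})$ and $\nu(dy_{t+1}\mid y_{0:t})$. The sample complexity result of \cite{genevay2019sample} gives, under (i) compact supports and (ii) uniformly bounded $\cC^\infty$ smoothness of the integrand up to some finite order depending only on $d$, a high-probability bound of exactly the form $R(\delta,B;\varepsilon)$ defined in \eqref{eq:R_entr}. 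The smoothness hypotheses in the corollary (all $f_t\in\cC^\infty$ uniformly, together with $c\in\cC^\infty$ uniformly) are imposed precisely so that this applies with a constant $C$ absorbing the bounds $K$, $L'$, the diameters of $\cX,\cY$, the horizon $T$ (to propagate across layers) and the dimension $d$, yet independent of $\delta,B,N,\varepsilon$.

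With that pointwise deviation bound in hand, I would apply a union bound over the $N$ sampled paths at each of the $T$ time steps, incurring the $(2N+1)T$ factor appearing in the first argument of $R$ (the $2N+1$ exactly matches the accounting already used in Theorem \ref{thm:light_RC}, leaving one extra slot for the $s_0$ evaluation and the ERM-loss concentration). After this, I plug the pointwise bound into the decomposition already developed in Lemma \ref{lem:RC}: the loss $(\hat f_t - \widehat{\cT}_{\varepsilon,B}[\hat f_{t+1}])^2$ controls the ERM term, the $2ab$ and $b^2$ expansions from the identity $(u-v)^2 \le 2(u-w)^2 + 2(w-v)^2$ (together with the uniform bound $|f_t|\le K$) turn the pointwise entropic OT deviation into the cross term $8KT\,R(\cdot;\varepsilon)$ and the square term $2T\,R(\cdot;\varepsilon)^2$. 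The absence of the Lipschitz constant $L$ in front of these terms (contrast with Theorem \ref{thm:light_RC}) reflects the fact that we never passed through a Wasserstein bound.

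The main obstacle I expect is verifying the regularity hypotheses needed to legitimately invoke the Genevay-style bound uniformly in $s_{0:t}$ and in $f_{t+1}\in\cF_{t+1}$, and tracking how the derivative bound $L'$, the diameter of $\cX\times\cY$, and the horizon $T$ feed into a \emph{single} constant $C$ independent of $\varepsilon,B,N,\delta$. In particular, when propagating the bound backward through the recursion, one must ensure that $\hat f_{t+1}$, which is the output of ERM over $\cF_{t+1}$, still inherits the smoothness bound $L'$; this is why the assumption is placed on the entire class $\cF_{t+1}$ rather than on individual approximators. Once this uniform-in-class smoothness is pinned down, the rest of the argument is bookkeeping identical to the proof of Theorem \ref{thm:light_RC}.
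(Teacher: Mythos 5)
Your high-level plan (plug a pointwise bound on the deviation between the true and empirical entropic Bellman operators into the Lemma \ref{lem:RC} machinery, then union bound) captures much of the paper's argument, but there is a genuine gap in how you treat the \emph{bias} of entropic regularization. The Bellman error $\cB(\hat f)$ in the statement is defined with the \emph{unregularized} operator $\cT$, and Assumption \ref{assum:complete} is likewise stated for $\cT$, while the FVI iteration with entropic smoothing computes $\widehat{\cT}_{\varepsilon,B}$. Your proposal says to ``re-run the proof of Lemma \ref{lem:RC} with $\cT_\varepsilon,\widehat{\cT}_{\varepsilon,B}$ in place of $\cT,\widehat{\cT}_B$'', which would only yield a bound on a $\cT_\varepsilon$-defined Bellman error and requires a $\cT_\varepsilon$-version of Assumption \ref{assum:complete}; converting back to $\cB(\hat f)$ would then introduce extra multiplicative constants not present in the stated bound. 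The paper instead bounds $|\cT[f_{t+1}] - \widehat{\cT}_{\varepsilon,B}[f_{t+1}]|$ directly via a two-step triangle inequality through $\cT_\varepsilon$: one piece is the empirical sample-complexity bound on $|\cT_\varepsilon - \widehat{\cT}_{\varepsilon,B}|$ (from Corollary 1 of Genevay et al.), and the other is the deterministic bias $0\le \cT_\varepsilon[f]-\cT[f]\le C\varepsilon\ln(C/\varepsilon)$ (from their Theorem 1). Only the sum of these two gives the full $R(\delta,B;\varepsilon)$ of \eqref{eq:R_entr}; your proposal attributes the \emph{entire} $R$, including the $C\varepsilon\ln(C/\varepsilon)$ term, to the sample-complexity result, which is not where that term comes from and leaves the bias term unaccounted for in your decomposition.

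A secondary, smaller issue: the paper does not pass through the identity $(u-v)^2 \le 2(u-w)^2 + 2(w-v)^2$ to obtain the $8KTR$ and $2TR^2$ terms. It uses the product-form loss perturbation bound
$|\cL(f_t, \widehat{\cT}_{\varepsilon,B}[\hat f_{t+1}]) - \cL(f_t, \cT[\hat f_{t+1}])| \le R(\delta,B;\varepsilon)\,(4K + R(\delta,B;\varepsilon))$
(the entropic analogue of \eqref{ineq:loss_bound}), and the factor of $2$ comes from applying this to both $f_t$ and $f^*_t$ in the ERM decomposition of Lemma \ref{lem:RC}. Your $2(\cdot)^2+2(\cdot)^2$ route would generate a spurious doubling of the $(f_t-\cT)^2$ term and would not reproduce the stated constants. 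With the bias term inserted and the loss-perturbation argument corrected to the product form, the rest of your proposal (Lemma \ref{lem:RC} skeleton, uniform smoothness over the class $\cF_{t+1}$, union bound giving $(2N+1)T$) does match the paper.
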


\subsection{Bellman error bounds by local Rademacher complexity}
The best rate for Rademacher complexity is typically on the order of $1/\sqrt{N}$. However, this rate may be suboptimal since Rademacher complexity considers global estimates of complexity for all functions in the class. In practice, algorithms may select functions from a smaller subset of the class under consideration. To address this, \cite{bartlett2005local} introduces the concept of local Rademacher complexity (LRC), which provides a faster rate of convergence. 
\begin{definition}\label{def:LRC}
	For a generic real-valued function class $\widetilde{\cF}$, consider a functional $\cM: \widetilde{\cF} \rightarrow \R^+$. Given a radius $r > 0$, the LRC is defined as
	\begin{equation*}
		\E\cR_N \{ f \in \widetilde{\cF} | \cM(f) \leq r\}.
	\end{equation*} 
\end{definition}
In the LRC, the functional $\cM$ is typically chosen as an upper bound on the variance of function $f$. The selection of the radius $r$ involves a trade-off between the size of the subset in the LRC and the complexity. \cite{bartlett2005local} demonstrated that the optimal choice of $r$ is related to the fixed point of certain functions that possess a sub-root property: 
\begin{definition}(\citet[Definition 3.1]{bartlett2005local})
	A function $\psi: [0, \infty) \rightarrow [0, \infty)$ is said to be {\it sub-root} if it is nonnegative, nondecreasing, and if $r \mapsto \psi(r)/\sqrt{r}$ is nonincreasing for $r > 0$. 
\end{definition}
According to \citet[Lemma 3.2]{bartlett2005local}, a sub-root function that is not the constant function $\psi \equiv 0$ has a unique positive fixed point denoted as $r^*$, which satisfies the equation $\psi(r^*) = r^*$. 

Regarding the LRC as a function of $r$, \cite{bartlett2005local} considered sub-root functions as upper bounds of the LRC. \citet[Theorem 3.3]{bartlett2005local} proved that the fixed point of a properly chosen sub-root function can provide sharper error bounds, as the LRC is smaller than the corresponding global averages.

We adopt the sub-root functions specified in Assumption \ref{assum:sub-root}. Note that the $t=0$ case uses Dirac measures at $s_0$.
\begin{assumption}\label{assum:sub-root}
	Suppose the initial state $s_0$ is fixed. Consider the FVI output $\hat{f}$. For each time point $t=0, ..., T-1$, define an optimizer $f^*_t$ under the distribution $\beta(ds_{1:t}|s_0)$ as
	\begin{equation*}
		f^*_t \in \arg\min_{f_t \in \cF_t}  \int | f_t(s_{0:t}) - \cT[\hat{f}_{t+1}] (s_{0:t}) |^2 \beta(ds_{1:t}|s_0).
	\end{equation*}
	Suppose there exists a sub-root function $\psi_t(r)$ such that
	\begin{align*}
		\psi_t(r) \geq  \E\cR_N \Big\{ f_t(s_{0:t}) - f^*_t(s_{0:t}) \Big| f_t \in \cF_t, \; \int \left(f_t(s_{0:t}) - f^*_t(s_{0:t}) \right)^2 \beta(ds_{1:t}| s_0) \leq r  \Big\}.
	\end{align*}
	Denote the unique fixed point of $\psi_t(r)$ as $r^*_t$.
\end{assumption}

Later in Proposition \ref{prop:radius_LRC}, we will demonstrate that the fixed point $r^*_t$ has an order of $\ln(N)/N$ for certain neural networks, which is faster than $1/\sqrt{N}$. Consequently, Lemma \ref{lem:LRC} can enhance the convergence rate with the LRC compared to Lemma \ref{lem:RC}. However, one drawback is that evaluating LRC is more challenging than Rademacher complexity.

\begin{lemma}\label{lem:LRC}
	Suppose the initial state $s_0$ is fixed and Assumptions \ref{assum:bound_K}, \ref{assum:complete}, \ref{assum:Lip}, and \ref{assum:sub-root} hold. For any $\theta > 1$ and every $\delta \in (0, 1)$, with probability at least $1 -\delta$ over the draw of an i.i.d. sample $\{ s^{n_t}_{0:t} \}$, $n_t = 1, ..., N$, $t=0, ..., T-1$, the Bellman error of $\hat{f}$ obtained by FVI satisfies
	\begin{align*}
		\cB(\hat{f}) \leq & T \zeta + \frac{\theta}{\theta -1} \Big( \frac{2 L^2}{N} \sum^{T-1}_{t=0} \sum^N_{n_t = 1} \Delta^2(s^{n_t}_{0:t}; B) + \frac{8 L K}{N} \sum^{T-1}_{t=0} \sum^N_{n_t = 1} \Delta(s^{n_t}_{0:t}; B)  \Big)\\
		& + 22528 \theta K^2 \sum^{T-1}_{t=0} r^*_t  +\frac{(88 + 832 \theta ) K^2 T}{N} \ln(T/\delta),
	\end{align*}
	where $r^*_t$ is the fixed point of $\psi_t$.
\end{lemma}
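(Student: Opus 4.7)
The plan is to decompose the Bellman error into per-step contributions and control each step using the local Rademacher machinery of \citet{bartlett2005local}. Writing $\cB(\hat f) = \sum_{t=0}^{T-1} \int (\hat f_t - \cT[\hat f_{t+1}])^2 \beta(ds_{1:t}|s_0)$, I would bound the $t$-th summand by inserting $f^*_t$ from Assumption \ref{assum:sub-root} and using $(a+b)^2 \leq 2a^2 + 2b^2$ to split into (i) a population excess risk $\|\hat f_t - f^*_t\|_{L^2(\beta)}^2$ and (ii) the residual $\|f^*_t - \cT[\hat f_{t+1}]\|_{L^2(\beta)}^2 \le \zeta$ coming from Assumption \ref{assum:complete}. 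Summing (ii) over $t$ already yields the $T\zeta$ term.

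The main work is bounding the excess risk in (i). I would apply the Bartlett--Bousquet--Mendelson theorem to the shifted class $\{f_t - f^*_t : f_t \in \cF_t\}$, which is uniformly bounded by $2K$ and whose squared-loss excess satisfies the Bernstein-type variance condition $\mathrm{Var}(\ell_{f_t} - \ell_{f^*_t}) \leq 16K^2 \E[\ell_{f_t} - \ell_{f^*_t}]$ since targets are bounded by $K$. Because $\psi_t$ is a sub-root upper bound on the LRC of this class (Assumption \ref{assum:sub-root}), their Theorem~3.3/Corollary~5.3 yields, with probability $\geq 1-\delta/T$,
\begin{equation*}
\bigl\| \hat f_t - f^*_t\bigr\|_{L^2(\beta)}^2 \;\leq\; \tfrac{\theta}{\theta-1}\,\widehat{\mathrm{ER}}_t \;+\; c_1 \theta K^2 r^*_t \;+\; c_2 K^2 \tfrac{\ln(T/\delta)}{N},
\end{equation*}
where $\widehat{\mathrm{ER}}_t$ is the empirical excess risk of $\hat f_t$ over $f^*_t$ on the sampled states, and $c_1, c_2$ are the explicit numerical constants from that theorem (leading to the $22528$, $88$, $832$ in the statement after accounting for the factor $2$ from step (i)).

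Next, I would show the empirical excess risk $\widehat{\mathrm{ER}}_t$ is small. Since $\hat f_t$ minimizes $\tfrac1N \sum_{n_t}(f_t(s^{n_t}_{0:t}) - \hat V_t(s^{n_t}_{0:t}))^2$ over $\cF_t$ (see \eqref{eq:emp_loss}), while $f^*_t$ is not the empirical minimizer but rather the population minimizer against the target $\cT[\hat f_{t+1}]$, I would expand the square and use that the cross-terms involving $\cT[\hat f_{t+1}] - \widehat\cT_B[\hat f_{t+1}]$ are controlled pointwise by Assumption \ref{assum:Lip}: for every $s^{n_t}_{0:t}$,
\begin{equation*}
\bigl|\cT[\hat f_{t+1}](s^{n_t}_{0:t}) - \widehat\cT_B[\hat f_{t+1}](s^{n_t}_{0:t})\bigr| \;\leq\; L\,\Delta(s^{n_t}_{0:t};B).
\end{equation*}
Combined with the ERM optimality $\tfrac1N\sum_{n_t}(\hat f_t - \hat V_t)^2 \leq \tfrac1N\sum_{n_t}(f^*_t - \hat V_t)^2$ and the uniform bound $|\hat f_t - f^*_t|\leq 2K$, one obtains $\widehat{\mathrm{ER}}_t \lesssim L^2 \tfrac1N\sum_{n_t}\Delta^2 + LK \tfrac1N\sum_{n_t}\Delta$, which after absorbing the factor $\theta/(\theta-1)$ produces exactly the $\Delta$-terms in the stated bound.

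Finally, union-bounding over $t = 0, \ldots, T-1$ to pay the $\ln(T/\delta)$ factor and summing all contributions gives the claimed inequality. The main obstacle will be carefully booking the explicit constants from \citet[Thm.~3.3 and Cor.~5.3]{bartlett2005local}, in particular verifying the $16K^2$ Bernstein coefficient for the squared loss with bounded targets and chasing the constants through the $(\cdot)^2$ doubling and the $\theta/(\theta-1)$ inflation to match $22528\theta$, $88+832\theta$; the rest of the argument is a direct adaptation of the Rademacher proof underlying Lemma \ref{lem:RC}.
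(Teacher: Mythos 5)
Your overall architecture is right — Bartlett--Bousquet--Mendelson's Theorem~3.3 applied per time step, a Bernstein-type variance condition for the squared loss, pointwise control of $\widehat\cT_B$ versus $\cT$ via Assumption~\ref{assum:Lip}, and a union bound over $t$ — but your opening decomposition will not reproduce the stated constants. You split $\int(\hat f_t-\cT[\hat f_{t+1}])^2\beta$ via $(a+b)^2\le 2a^2+2b^2$, which yields $2T\zeta$ rather than $T\zeta$ and, once you also invoke Bernstein to convert the Theorem~3.3 bound on $\E\cL(\hat f_t,\cT)-\E\cL(f^*_t,\cT)$ into a bound on $\|\hat f_t-f^*_t\|^2_{L^2(\beta)}$ (that step costs another factor~$2$), inflates every other term by a factor of~$4$. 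The paper instead applies Theorem~3.3 \emph{directly} to the loss-excess class $\widetilde\cF_t=\{\cL(f_t,\cT[\hat f_{t+1}])-\cL(f^*_t,\cT[\hat f_{t+1}])\}$, giving a bound on $\int\cL(\hat f_t,\cT)\beta-\int\cL(f^*_t,\cT)\beta$, and then invokes Assumption~\ref{assum:complete} once to replace $\int\cL(f^*_t,\cT)\beta$ by $\zeta$. No squaring of a sum is ever needed, so no doubling occurs.

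The second gap is that you state Assumption~\ref{assum:sub-root} supplies the sub-root function, but that assumption bounds the LRC of the centered class $\{f_t-f^*_t\}$ under $L^2(\beta)$-localization, whereas Theorem~3.3 requires a sub-root upper bound $\varphi_t$ on the LRC of the \emph{loss} class $\widetilde\cF_t$ localized by the Bernstein functional $\cM$. Passing between the two requires contraction (the squared loss is $4K$-Lipschitz in $f_t$), a rescaling $\varphi_t(r)=128K^3\psi_t(r/(16K^2))$, and a fixed-point comparison $r^*_\varphi\le 1024K^4 r^*_t$ (Lemma~\ref{lem:subroot}). This is precisely where the explicit $22528\theta K^2 r^*_t$ comes from: $c_1\theta r^*_\varphi/R_c\le (704\theta/(32K^2))\cdot 1024K^4 r^*_t = 22528\,\theta K^2 r^*_t$, and similarly $11(b-a)+R_c\theta c_2 = 11\cdot 8K^2+32K^2\cdot 26\theta = (88+832\theta)K^2$ with $c_1=704$, $c_2=26$, $R_c=32K^2$, $b-a=8K^2$. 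You identify these as bookkeeping details to be worked out, but the specific route you chose — shifted class plus $(a+b)^2$ doubling — cannot be made to match $T\zeta$ with the stated constants; you need to apply the theorem to the loss-excess class and avoid the preliminary squared-triangle-inequality split.
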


Although the constant $22528$ is explicit, it may not be optimal. However, the fixed point $r^*_t$, which depends on $N$, may play a more significant role than the constant $22528$. The explicit upper bounds on $r^*_t$ are derived in Proposition \ref{prop:cover_ineq} and \ref{prop:radius_LRC}.

Similar to the Rademacher complexity results in Theorem \ref{thm:light_RC} and Corollary \ref{cor:entropy_RC}, we can further bound $\Delta(s^{n_t}_{0:t}; B)$ with the light-tailed distribution assumption or stronger assumptions as in Corollary \ref{cor:entropy_RC}. We state the corresponding results in Theorem \ref{thm:LRC_light} and Corollary \ref{cor:LRC_entropy}. The proof is very similar and thus omitted.

\begin{theorem}\label{thm:LRC_light}
	Suppose the initial state $s_0$ is fixed and Assumptions \ref{assum:bound_K}, \ref{assum:complete}, \ref{assum:Lip}, \ref{assum:light}, and \ref{assum:sub-root} hold. For any $\theta > 1$ and every $\delta \in (0, 1)$, with probability at least $1 -\delta$, the Bellman error of $\hat{f}$ obtained by FVI satisfies
	\begin{align*}
		\cB(\hat{f}) \leq & T \zeta + \frac{\theta}{\theta -1} \Big(2 L^2 T \cdot 2^{2/p} R\left(\frac{\delta}{(2N+1)T}, B\right)^{2/p} + 8LKT \cdot 2^{1/p}R\left(\frac{\delta}{(2N+1)T}, B\right)^{1/p} \Big)\\
		& + 22528 \theta K^2 \sum^{T-1}_{t=0} r^*_t  +\frac{(88 + 832 \theta ) K^2 T}{N} \ln((2N+1)T/\delta),
	\end{align*}
	where $r^*_t$ is the fixed point of $\psi_t$.
\end{theorem}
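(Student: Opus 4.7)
The plan is to derive this statement by combining the abstract Bellman error bound of Lemma \ref{lem:LRC} with the Wasserstein concentration rates already employed in the proof of Theorem \ref{thm:light_RC}; indeed the author explicitly declines to repeat the argument. Lemma \ref{lem:LRC} already delivers a high-probability upper bound on $\cB(\hat{f})$ whose only random ingredients are the empirical averages $\frac{1}{N}\sum_{n_t=1}^N \Delta^2(s^{n_t}_{0:t};B)$ and $\frac{1}{N}\sum_{n_t=1}^N \Delta(s^{n_t}_{0:t};B)$. All that remains is to replace these random quantities by the deterministic rate $R$ defined in \eqref{eq:R} so that the resulting bound matches the displayed expression.

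The core step is a conditional Wasserstein concentration estimate. Under Assumption \ref{assum:light}, for each path $x^{n_t}_{0:t}$ the kernel $\mu(dx_{t+1}|x^{n_t}_{0:t})$ is the translate by $P_t(x^{n_t}_{0:t})$ of a distribution satisfying the uniform exponential-moment hypothesis of \cite[Theorem 2]{fournier2015rate} with parameters $a,\xi,C_0$ independent of the path, and the same holds for $\nu$. Since translation leaves $W_p$ invariant, that theorem yields, for any $\delta' \in (0,1)$, the bound $W_p^p(\mu(\cdot|x^{n_t}_{0:t}),\hat{\mu}_B(\cdot|x^{n_t}_{0:t})) \leq R(\delta',B)$ with probability at least $1-\delta'$, and likewise for $\nu$. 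Summing these two estimates and taking $p$-th roots gives $\Delta(s^{n_t}_{0:t};B) \leq 2^{1/p} R(\delta',B)^{1/p}$ off an event of probability at most $2\delta'$.

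The remaining work is bookkeeping the union bound, which is the one nontrivial accounting step. There are $2NT$ conditional concentration events across all time steps and both marginals, plus the favorable event of Lemma \ref{lem:LRC} itself (which internally carries its own $T$ events). Choosing $\delta' = \delta/((2N+1)T)$ for every concentration event and reserving the remaining budget of $\delta/(2N+1)$ for the invocation of Lemma \ref{lem:LRC} caps the total failure probability at $\delta$; it simultaneously upgrades the $\ln(T/\delta)$ factor appearing in Lemma \ref{lem:LRC} to $\ln((2N+1)T/\delta)$, which is precisely the logarithmic factor in the statement. Substituting the path-independent upper bound $2^{1/p} R(\delta/((2N+1)T),B)^{1/p}$ into both the $\Delta$ and $\Delta^2$ sums collapses the empirical averages to their pointwise bounds and produces the displayed inequality. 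The main obstacle is this union-bound bookkeeping; the Wasserstein input from \cite{fournier2015rate} and the Bellman-error bound of Lemma \ref{lem:LRC} are both applied verbatim.
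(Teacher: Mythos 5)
Your proposal is correct and matches exactly what the paper intends when it says the proof of Theorem \ref{thm:LRC_light} "is very similar and thus omitted": replace the invocation of Lemma \ref{lem:RC} in the proof of Theorem \ref{thm:light_RC} with Lemma \ref{lem:LRC}, then apply the same Fournier--Guillin concentration and the same $(2N+1)T$-way union bound. The $\delta$-budget arithmetic, the substitution $\Delta(s^{n_t}_{0:t};B)\leq 2^{1/p}R(\delta/((2N+1)T),B)^{1/p}$, and the resulting constants all check out against the stated bound.
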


\begin{corollary}\label{cor:LRC_entropy}
	Suppose $\cX_{0:T}$ and $\cY_{0:T}$ are compact. Assume the initial state $s_0$ is fixed. The functions $f_t \in \cF_{t}, t =0, ..., T$, including the cost function $c(s_{1:T})$, belong to $\cC^\infty$. Moreover, these functions are uniformly bounded by $K$ under Assumption \ref{assum:bound_K}. The derivatives of these functions up to a sufficiently high order are uniformly bounded by a universal constant $L' > 0$. Additionally, suppose Assumptions \ref{assum:complete} and \ref{assum:sub-root} also hold. Then for a given $\delta \in (0, 1)$, with probability at least $1 -\delta$, the Bellman error of $\hat{f}$ obtained by FVI satisfies
	\begin{align*}
		\cB(\hat{f}) \leq & T \zeta +  \frac{\theta}{\theta -1} \left( 2 T R\left(\frac{\delta}{(2N+1)T}, B; \varepsilon \right)^2 + 8 K T R\left(\frac{\delta}{(2N+1)T}, B; \varepsilon \right) \right) \\
		& + 22528 \theta K^2 \sum^{T-1}_{t=0} r^*_t  +\frac{(88 + 832 \theta ) K^2 T}{N} \ln((2N+1)T/\delta).
	\end{align*}
	The constant $C$ in $R(\delta, B; \varepsilon)$ from \eqref{eq:R_entr} depends on the constant $L'$, the uniform bound $K$, the diameters of $\cX$ and $\cY$, the time horizon $T$, and the dimension $d$. Importantly, it is independent of the probability $\delta$, sample sizes $B$ and $N$, and regularization parameter $\varepsilon$.
\end{corollary}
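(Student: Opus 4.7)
The plan is to fuse the local Rademacher complexity (LRC) argument of Lemma \ref{lem:LRC} with the entropic empirical OT concentration used in Corollary \ref{cor:entropy_RC}. The relationship between Corollary \ref{cor:LRC_entropy} and Lemma \ref{lem:LRC} mirrors exactly the relationship between Corollary \ref{cor:entropy_RC} and Lemma \ref{lem:RC}: one replaces the Lipschitz-plus-transport bound on the empirical Bellman operator by the entropic OT sample complexity $R(\delta, B; \varepsilon)$ from \eqref{eq:R_entr}. Since Theorem \ref{thm:LRC_light} already carries out the analogous merge in the unregularized light-tailed setting, this corollary should follow by the same substitution.

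Concretely, I would begin with the per-time-step decomposition from the proof of Lemma \ref{lem:LRC}, splitting
\begin{equation*}
	\int \big(\hat f_t(s_{0:t}) - \cT[\hat f_{t+1}](s_{0:t}) \big)^2 \beta(ds_{1:t}|s_0)
\end{equation*}
into three contributions: (a) the approximation bias $\zeta$ from Assumption \ref{assum:complete}; (b) the estimation error of the empirical risk minimization \eqref{eq:emp_loss}, controlled via \citet[Theorem 3.3]{bartlett2005local} applied to the shifted class $\cF_t - f^*_t$ under Assumption \ref{assum:sub-root}, yielding the $22528\theta K^2 r^*_t$ summand and the $(88+832\theta)K^2/N$ deviation term; and (c) the per-sample gap between the target $\cT[\hat f_{t+1}](s^{n_t}_{0:t})$ and the entropic empirical surrogate $\widehat{\cT}_{\varepsilon, B}[\hat f_{t+1}](s^{n_t}_{0:t})$ that is actually plugged into \eqref{eq:emp_loss}. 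Step (c) is where the proof departs from Lemma \ref{lem:LRC}: rather than invoking Assumption \ref{assum:Lip} and a transport bound, I would control it exactly as in the proof of Corollary \ref{cor:entropy_RC}, activating the $\cC^\infty$ regularity together with the uniform bound $L'$ on the high-order derivatives to invoke the entropic OT sample complexity of \cite{genevay2019sample}. This yields
\begin{equation*}
	\big| \cT[\hat f_{t+1}](s^{n_t}_{0:t}) - \widehat{\cT}_{\varepsilon, B}[\hat f_{t+1}](s^{n_t}_{0:t}) \big| \le R(\delta', B; \varepsilon)
\end{equation*}
with probability at least $1-\delta'$, where the $\varepsilon\ln(C/\varepsilon)$ entropic bias between $\cT$ and $\cT_\varepsilon$ is already folded into $R$. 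Squaring and invoking $(a+b)^2 \le 2a^2 + 2b^2$ converts this gap into the $2T R(\cdot)^2$ and $8KT R(\cdot)$ terms appearing in the statement.

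Finally, summing (a)--(c) over $t = 0, \ldots, T-1$, handling the $t = 0$ slot with Dirac masses as in Lemma \ref{lem:LRC}, and applying a union bound with total failure budget $\delta$ over the $T$ loss-concentration events and the $2NT$ entropic empirical OT events (one per sampled path and per marginal) with the calibration $\delta' = \delta/((2N+1)T)$ inherited from Corollary \ref{cor:entropy_RC} produces the stated bound, with the $\ln((2N+1)T/\delta)$ factor appearing exactly as written. The main obstacle I anticipate is pure bookkeeping: one must verify that the LRC deviation event from step (b) and the entropic OT events from step (c) are jointly controlled by independent failure sources, and that the constants $22528\theta$, $88$, and $832\theta$ carried over from \citet[Theorem 3.3]{bartlett2005local} survive the substitution unchanged. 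This is legitimate because the LRC machinery only sees the squared-loss structure of \eqref{eq:emp_loss} and is agnostic to how the regression targets $\hat V_{\varepsilon, t}(s^{n_t}_{0:t})$ are computed.
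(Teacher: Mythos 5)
Your proposal is correct and follows essentially the same route the paper intends: the authors explicitly omit the proof, stating it is ``very similar'' to Theorem \ref{thm:LRC_light} and Corollary \ref{cor:entropy_RC}, and your fusion of the LRC decomposition from Lemma \ref{lem:LRC} with the entropic OT concentration from \cite{genevay2019sample} (with the same $\delta/((2N+1)T)$ union-bound calibration) is exactly that combination. One minor point: the paper's $2R^2 + 8KR$ per-step terms come from the algebraic identity used in \eqref{ineq:loss_bound}, namely $|\cL(f_t, \widehat{\cT}_{\varepsilon,B}) - \cL(f_t, \cT)| \le |\widehat{\cT}_{\varepsilon,B} - \cT|\,(|\widehat{\cT}_{\varepsilon,B}| + |\cT| + 2|f_t|) \le R(4K + R)$ multiplied by two, rather than from the inequality $(a+b)^2 \le 2a^2 + 2b^2$ you cite; the conclusion is the same but the mechanism you describe is slightly off.
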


\section{Neural networks as function approximators}\label{sec:NN}
In this section, we validate the crucial assumptions when neural networks are adopted as approximators. We evaluate the fixed point $r^*_t$ of the sub-root function in Assumption \ref{assum:sub-root} using tools from the covering number. Furthermore, we verify the approximate completeness assumption by recent results of neural networks in H\"older function approximation \citep{schmidt2020,langer2021approximating}.


Denote the activation function as $\sigma(\cdot)$. For any given depth parameter $D$ and width parameters $\{ q_j\}^{D+1}_{j=0}$, a neural network function $f$ can be specified as
\begin{align}
	u \mapsto f(u) = W_D \sigma (W_{D-1} \sigma( W_{D-2} \cdots \sigma(W_1 \sigma (W_0 u + b_1) + b_2) \cdots b_{D-1}) + b_D) + b_{D+1}. \label{eq:NN}
\end{align}
Here, $W_j$ is a $q_{j+1} \times q_j$ weight matrix and $b_j \in \R^{q_j}$ is a shift (bias) vector. $q_0$ is the input dimension. In our setting, the output dimension $q_{D+1} = 1$.

\subsection{Calculation of local Rademacher complexity with covering number}
We utilize the covering number to evaluate the LRC. 

\begin{definition}\label{def:cover_no}
	Consider a metric space $(\cV, d)$ and a set $\cF \subseteq \cV$. For any $\varepsilon > 0$, a set $\cF^\Delta$ is called a proper $\varepsilon$-cover of $\cF$, if $\cF^\Delta \subseteq \cF$ and for every $f \in \cF$, there is an element $f^\Delta \in \cF^\Delta$ satisfying $d(f, f^\Delta) < \varepsilon$. The covering number $\cN(\varepsilon, \cF, d)$ is the cardinality of a minimal proper $\varepsilon$-cover of $\cF$:
	\begin{equation*}
		\cN(\varepsilon, \cF, d) := \min \{ |\cF^\Delta|: \cF^\Delta \subseteq \cF \text{ is a proper $\varepsilon$-cover of $\cF$} \}.
	\end{equation*}
	When $\cV$ is equipped with a norm $\| \cdot \|$, we denote by $\cN(\varepsilon, \cF, \| \cdot \|)$ the covering number of $\cF$ with respect to the metric $d(f, f') := \| f - f'\|$. 
\end{definition}
Our definition of covering numbers requires the $\varepsilon$-cover to belong to the original set, which is called proper \cite[p. 148]{anthony1999neural}. We provide some properties of covering numbers in the Appendix. As noted in \cite[p. 149]{anthony1999neural}, it is often more convenient to consider proper $\varepsilon$-covers. There is also a connection to improper $\varepsilon$-covers; see \cite[Lemma 10.6]{anthony1999neural}.

In statistical learning theory, we need to consider metrics endowed by samples. Let $\{s^1, ..., s^N \}$ represent $N$ sample paths, where the time subscripts are omitted for simplicity. We denote $\beta_N$ as the empirical measure supported on the given sample. For $p \in [1, \infty)$ and a function $f$, denote $\|f\|_{L_p(\beta_N)} = (\frac{1}{N}\sum^N_{i=1} |f(s^i)|^p)^{1/p}$ as the empirical norm. Set $\| f \|_{L_\infty(\beta_N)} = \max_{1\leq i \leq n} |f(s^i)|$. $\cN(\varepsilon, \cF, L_p(\beta_N))$ is the covering number of $\cF$ at scale $\varepsilon$ with respect to the $L_p(\beta_N)$ norm. Following \citet[Definition 3]{mendelson2003few}, we define 
\begin{equation*}
	\cN_p (\varepsilon, \cF) := \sup_N \sup_{\beta_N} \cN (\varepsilon, \cF, L_p (\beta_N)).
\end{equation*}
Note the difference between $\cN_p (\varepsilon, \cF)$ and $\cN(\varepsilon, \cF, \| \cdot \|_p)$. $\ln \cN_p (\varepsilon, \cF)$ is sometimes referred to as uniform entropic number \cite[Definition 3]{mendelson2003few}.

\citet[Theorem 14.5]{anthony1999neural} gave an upper bound on the covering number for neural networks with Lipschitz activation functions. They assumed that $\varepsilon$ is less than the uniform norm of the output in each layer times 2. These assumptions hold when using sigmoid activation functions, or ReLU functions with compact domains.  
\begin{proposition}(Informal)\label{prop:cover_ineq}
	Under the assumptions of \citet[Theorem 14.5]{anthony1999neural} on a class $\cF$ of neural networks with Lipschitz activation functions, for a small $\varepsilon > 0$, the covering number can be bounded by 
	\begin{equation}\label{eq:CoverNum}
		\ln \cN_\infty (\varepsilon, \cF) \leq C_{NN} (1 + \ln(1/\varepsilon)),
	\end{equation}
	where the constant $C_{NN}$ depends on the architecture parameters of the neural networks only.
\end{proposition}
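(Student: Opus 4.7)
The plan is to invoke Theorem 14.5 of Anthony and Bartlett directly and then convert its pointwise covering bound into one on the uniform-entropy quantity $\cN_\infty$. The first step is to verify that theorem's hypotheses for the network class \eqref{eq:NN}: the activation $\sigma$ is Lipschitz, the weights, biases, and inputs lie in bounded sets (so that each layer's output is uniformly bounded), and $\varepsilon$ is smaller than twice the uniform bound on the output layer. Under these conditions, Theorem 14.5 yields, for any finite sample $\{s^1,\dots,s^N\}$, an upper bound of the form
\[
\cN(\varepsilon,\cF,L_\infty(\beta_N)) \;\leq\; \left(\frac{A}{\varepsilon}\right)^{B},
\]
where $A$ and $B$ depend only on the depth $D$, the widths $\{q_j\}$, the Lipschitz constant of $\sigma$, the uniform bounds on the weights and biases, and the diameter of the input domain, but not on $N$ or on the empirical measure $\beta_N$.

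The second step is to pass from this empirical bound to the uniform one. Because the right-hand side above does not depend on the sample, I can take the supremum over $N$ and over all empirical measures $\beta_N$ on the left without altering the right-hand side, which means the same estimate holds for $\cN_\infty(\varepsilon,\cF)$. Taking logarithms then gives
\[
\ln \cN_\infty(\varepsilon,\cF) \;\leq\; B\ln A + B\ln(1/\varepsilon),
\]
and setting $C_{NN} := B\,(1\vee \ln A)$ absorbs both terms into the advertised single expression $C_{NN}(1+\ln(1/\varepsilon))$. Since $A$ and $B$ depend only on the architecture and on the prescribed parameter bounds, so does $C_{NN}$.

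I expect the main obstacle to lie not in the covering count itself but in carefully checking the hypotheses of Anthony--Bartlett's Theorem 14.5, since it requires both that $\varepsilon$ be smaller than a layer-dependent constant and that intermediate layer outputs be uniformly bounded. For sigmoid networks the boundedness of each layer's output is automatic. For ReLU networks it instead follows from compactness of the input domain combined with a uniform bound on the weight and bias norms, which I would enforce by restricting $\cF$ to networks with parameters in a fixed compact set; this is consistent with the compactness assumption already imposed in the paper to guarantee attainment of the ERM minimizer in \eqref{eq:emp_loss}. Once these layer-wise bounds are secured, the smallness condition on $\varepsilon$ is exactly the ``for a small $\varepsilon > 0$'' qualifier in the statement, and the remainder of the argument is a direct application of the cited theorem.
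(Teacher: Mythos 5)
Your proposal is correct and takes essentially the same route as the paper: the proposition is labeled ``(Informal)'' precisely because it is a direct reading of Anthony--Bartlett's Theorem 14.5, whose polynomial-in-$1/\varepsilon$ covering bound you correctly convert to the stated logarithmic form by taking logs and absorbing constants into $C_{NN}$. Your remarks on verifying the hypotheses (smallness of $\varepsilon$, bounded layer outputs for sigmoid automatically and for ReLU via a compact domain and bounded parameters) mirror the paper's own sentence preceding the proposition, and the passage from the sample-based bound to $\cN_\infty$ by taking suprema is valid since the right-hand side is sample-independent.
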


\begin{proposition}\label{prop:radius_LRC}
	Consider any $t=0, ..., T-1$. Suppose Assumption \ref{assum:bound_K} holds. If the covering number bound \eqref{eq:CoverNum} holds for $\cF_t$, then the fixed point $r^*_t \lesssim \frac{\ln(N)}{N}$.
\end{proposition}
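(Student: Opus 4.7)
The plan is to control the local Rademacher complexity appearing in Assumption \ref{assum:sub-root} by a Dudley entropy integral, using the polylogarithmic covering number bound \eqref{eq:CoverNum}, and then solve the resulting fixed-point equation. First I would pass from $\cF_t$ to the translated class $\cG_t := \{f_t - f^*_t : f_t \in \cF_t\}$. By Assumption \ref{assum:bound_K}, every $g \in \cG_t$ satisfies $\| g \|_\infty \le 2K$, and since translation by the fixed function $f^*_t$ does not alter covering numbers, $\ln \cN_\infty(\varepsilon, \cG_t) \le C_{NN}(1 + \ln(1/\varepsilon))$. Because the uniform covering number dominates the empirical $L_2(\beta_N)$ covering number, the same bound holds for $\ln \cN(\varepsilon, \cG_t, L_2(\beta_N))$ uniformly in the sample.

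Next, for each $r > 0$ denote the localized subclass $\cG_t(r) := \{g \in \cG_t : \int g^2 d\beta \le r\}$. Using a standard Dudley chaining bound (e.g.\ the argument in \citet[Section 3.1]{bartlett2005local} or Mendelson's uniform entropy bound on Rademacher averages), I get
\begin{equation*}
\E\cR_N \cG_t(r) \;\lesssim\; \E\left[\frac{1}{\sqrt{N}} \int_0^{c\sqrt{r}} \sqrt{\ln \cN(\varepsilon, \cG_t(r), L_2(\beta_N))}\, d\varepsilon \right] \;\lesssim\; \frac{1}{\sqrt{N}} \int_0^{c\sqrt{r}} \sqrt{C_{NN}(1 + \ln(1/\varepsilon))}\, d\varepsilon,
\end{equation*}
where $c$ depends only on $K$. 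Evaluating the entropy integral for small $r$ via the substitution $\varepsilon = \sqrt{r}\, u$ gives $\int_0^{c\sqrt{r}} \sqrt{1 + \ln(1/\varepsilon)}\, d\varepsilon \lesssim \sqrt{r}\sqrt{1 + \ln(1/r)}$, so the localized complexity is at most a constant multiple of $\sqrt{r(1+\ln(1/r))/N}$.

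I then set $\psi_t(r) := C'\sqrt{r(1 + \ln(1/r))/N}$ (truncated at $r = 1$ and, if necessary, replaced by its sub-root majorant $\sqrt{r}\cdot\sup_{r' \ge r} \psi_t(r')/\sqrt{r'}$ so that $r \mapsto \psi_t(r)/\sqrt{r}$ is nonincreasing; this preserves the fixed-point order). This $\psi_t$ is a valid sub-root upper bound in Assumption \ref{assum:sub-root}. Squaring the fixed-point equation $\psi_t(r^*_t) = r^*_t$ yields $r^*_t = (C')^2 (1+\ln(1/r^*_t))/N$, and plugging in the ansatz $r^*_t \asymp \ln(N)/N$ is self-consistent since then $\ln(1/r^*_t) \asymp \ln N$; a routine monotonicity argument shows any solution satisfies $r^*_t \lesssim \ln(N)/N$.

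The main obstacle will be technical rather than conceptual: confirming that the $L_\infty$ entropy bound from Proposition \ref{prop:cover_ineq} carries over cleanly to the empirical $L_2(\beta_N)$ metric used inside Dudley's integral, and that the sub-root majorization step does not worsen the order of the fixed point. One also has to be careful that the localization radius $c\sqrt{r}$ in the entropy integral is chosen consistently with the $L_2(\beta)$ constraint (via Talagrand-type concentration linking $L_2(\beta)$ and $L_2(\beta_N)$ norms on a bounded class), but this is a standard device and does not affect the final $\ln(N)/N$ rate.
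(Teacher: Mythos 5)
Your overall structure and the final $\ln(N)/N$ rate match the paper's, but there is a genuine gap at exactly the step you label ``the main obstacle.'' Assumption \ref{assum:sub-root} localizes the class by the \emph{population} norm $\int (f_t - f^*_t)^2\,d\beta \le r$, while Dudley's chaining bound for $\E_\sigma\cR_N$ controls the empirical Rademacher average with an integral whose upper limit is the (random) $L_2(\beta_N)$ diameter of the class for that sample, not $c\sqrt r$. Nothing a priori forces $\sup_{g\in\cG_t(r)}\|g\|_{L_2(\beta_N)}\lesssim\sqrt r$, and establishing this uniformly over the localized class is itself a concentration statement of the same flavor as the bound you are trying to prove. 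Writing
\[
\E\cR_N \cG_t(r) \lesssim \E\Bigl[\tfrac{1}{\sqrt N}\int_0^{c\sqrt r}\sqrt{\ln\cN(\varepsilon,\cG_t(r),L_2(\beta_N))}\,d\varepsilon\Bigr]
\]
is therefore not a legitimate inequality as stated, and deferring to ``Talagrand-type concentration, a standard device'' skips the one step where something has to be proved.

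The paper closes this gap cleanly by invoking \citet[Theorem 1]{lei2016local}, which directly bounds the population-localized quantity $\E\cR_N\{g\in\cG:\int g^2\,d\beta\le r\}$ by an \emph{empirically}-localized Rademacher average at a free scale $\varepsilon$ plus covering-number correction terms,
\[
2\,\E\cR_N\{\tilde g\in\cG-\cG:\|\tilde g\|_{L_2(\beta_N)}\le\varepsilon\}
+\frac{8K_t\ln\cN_2(\varepsilon/2,\cG)}{N}
+\sqrt{\frac{2r\ln\cN_2(\varepsilon/2,\cG)}{N}},
\]
after which choosing $\varepsilon\propto N^{-1/2}$ yields the sub-root $\psi_t(r)=C\bigl(\ln(N)/N+\sqrt{r\ln(N)/N}\bigr)$ with the same fixed-point order as your candidate. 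Because this theorem is stated for $\cG$ and $\cG-\cG$, the paper works with the pairwise-difference class $\cG=\cF_t-\cF_t$ rather than your singleton translate $\{f_t-f^*_t\}$, using Lemmas \ref{lem:diff_cover} and \ref{lem:inf_cover} to control the enlarged covering numbers (which still costs only constants inside the logarithm). To make your version rigorous you would need either to cite such a population-to-empirical bridge explicitly, or to prove a uniform ratio inequality $\|g\|_{L_2(\beta_N)}^2\lesssim\|g\|_{L_2(\beta)}^2+O(\ln N/N)$ over the bounded class; either fix preserves the rate, but neither is free.
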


\subsection{Approximate completeness}\label{sec:app_complete}
Thanks to recent advancements in the convergence rates for approximating H\"older functions with neural networks \citep{schmidt2020,langer2021approximating}, we provide explicit bounds on the approximate completeness in Assumption \ref{assum:complete}, when the cost function and the approximators are $\alpha$-H\"older continuous with $\alpha \in (0, 1]$.
\subsubsection{H\"older space}
\begin{definition}\label{def:Holder}
	Given a H\"older smoothness index $\alpha \in (0, 1]$, denote the domain as $\cS \subseteq \R^{d'}$ for some positive integer $d'$. The H\"older space is given by 
	\begin{align*}
		\cH^\alpha (\cS, H) := \left\{ f: \cS \rightarrow \R : \|f\|_\infty +  	\sup_{\substack{s, s' \in \cS, \; s \neq s'}} \frac{|f(s) - f(s')|}{\|s - s'\|^\alpha_\infty} \leq H \right\}.
	\end{align*}
\end{definition}

Assumption \ref{assum:cond_Lip} is utilized to demonstrate that the Bellman operator on certain appropriate functions is H\"older continuous. Note that Assumption \ref{assum:cond_Lip} also applies to the cost function, since $f_T = c(s_{1:T})$. Additionally, Assumption \ref{assum:cond_Lip} is related to conditional kernels along different paths, while Assumption \ref{assum:Lip} applies to true and empirical measures.

\begin{assumption}\label{assum:cond_Lip}
	Suppose $s_0$ also varies in a closed subset $\cS_0 = \cX_0 \times \cY_0$. Given $t = 0, ..., T-1$, suppose for any $f_{t+1} \in \cF_{t+1}$, states $s_{0:t}, s'_{0:t} \in \cS_{0:t}$, coupling $\pi(ds_{t+1}| s_{0:t}) \in \Pi(\mu^t, \nu^t, s_{0:t})$, and coupling $\pi(ds_{t+1}| s'_{0:t}) \in \Pi(\mu^t, \nu^t, s'_{0:t})$, there exists a universal constant $L_{t+1} > 0$, such that
	\begin{align*}
		& \Big| \int f_{t+1}(s_{0:t}, s_{t+1}) \pi(ds_{t+1}| s_{0:t}) - \int f_{t+1} (s'_{0:t}, s_{t+1}) \pi(ds_{t+1}|s'_{0:t}) \Big|\\
		& \qquad \leq L_{t+1} W_p (\pi(ds_{t+1}| s_{0:t}), \pi(ds_{t+1}| s'_{0:t})) +  L_{t+1} \|s_{0:t} - s'_{0:t} \|^\alpha_\infty.
	\end{align*}
\end{assumption}

We introduce the following function, which characterizes the smoothness of conditional kernels of marginals:
\begin{equation}
	\Delta(s_{0:t}, s'_{0:t}) := \big[W^p_p(\mu(dx_{t+1}| x_{0:t}), \mu(dx_{t+1}| x'_{0:t})) + W^p_p(\nu(dy_{t+1}| y_{0:t}), \nu(dy_{t+1}| y'_{0:t})) \big]^{1/p}.
\end{equation}

Proposition \ref{prop:Holder} validates the H\"older smoothness of Bellman operators under Assumption \ref{assum:cond_Lip} and condition \ref{ineq:Holder}. 
\begin{proposition}\label{prop:Holder}
	Suppose $\| f_t \|_\infty \leq K_t$ for $t=1, ..., T$. Given any $t=0, ..., T-1$, suppose Assumption \ref{assum:cond_Lip} holds with some $L_{t+1} > 0$. Assume there exists a constant $H_t$ such that
	\begin{equation}\label{ineq:Holder}
		\sup_{s_{0:t}, s'_{0:t} \in \cS_{0:t}, \; s_{0:t} \neq s'_{0:t}} \frac{\Delta(s_{0:t}, s'_{0:t})}{\|s_{0:t} - s'_{0:t}\|^\alpha_\infty} \leq (H_t - K_{t+1} - L_{t+1})/L_{t+1}.
	\end{equation}
	Then $\cT[f_{t+1}] \in \cH^\alpha (\cS_{0:t}, H_t)$ for $t=0, ..., T-1$. 
\end{proposition}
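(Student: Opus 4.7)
\textbf{Proof proposal for Proposition \ref{prop:Holder}.} The goal is to verify the two conditions defining membership in $\cH^\alpha(\cS_{0:t}, H_t)$: a uniform bound $\|\cT[f_{t+1}]\|_\infty \leq K_{t+1}$, and an $\alpha$-H\"older modulus of continuity of constant at most $H_t - K_{t+1}$. The uniform bound is immediate, since $\cT[f_{t+1}](s_{0:t})$ is an infimum over couplings of an integral of $f_{t+1}$, so $|\cT[f_{t+1}](s_{0:t})| \leq \|f_{t+1}\|_\infty \leq K_{t+1}$. The bulk of the proof is the H\"older estimate.

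Fix $s_{0:t}, s'_{0:t} \in \cS_{0:t}$ and $\varepsilon > 0$. The plan is to pick an $\varepsilon$-optimizer $\pi^*(ds_{t+1}|s'_{0:t}) \in \Pi(\mu^t, \nu^t, s'_{0:t})$ of the problem defining $\cT[f_{t+1}](s'_{0:t})$, and construct a comparator $\pi(ds_{t+1}|s_{0:t}) \in \Pi(\mu^t, \nu^t, s_{0:t})$ so that the two couplings are close in $W_p$. Let $\gamma^x$ be an optimal $W_p$-coupling of $\mu(dx_{t+1}|x_{0:t})$ and $\mu(dx_{t+1}|x'_{0:t})$, disintegrated as $\gamma^x(dx_{t+1}, dx'_{t+1}) = \tilde\gamma^x(dx_{t+1}|x'_{t+1})\,\mu(dx'_{t+1}|x'_{0:t})$, and similarly define $\tilde\gamma^y(dy_{t+1}|y'_{t+1})$ from the optimal coupling $\gamma^y$ for the $y$-marginals. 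Define the quadruple-coupling on $(\cX_{t+1}\times\cY_{t+1})^2$ by
\begin{equation*}
    \Gamma(dx_{t+1}, dy_{t+1}, dx'_{t+1}, dy'_{t+1}) := \tilde\gamma^x(dx_{t+1}|x'_{t+1})\,\tilde\gamma^y(dy_{t+1}|y'_{t+1})\,\pi^*(dx'_{t+1}, dy'_{t+1}|s'_{0:t}),
\end{equation*}
and let $\pi(\cdot|s_{0:t})$ be the first marginal of $\Gamma$. A short calculation using the tower property of disintegration shows that the $x_{t+1}$- and $y_{t+1}$-marginals of $\pi$ are exactly $\mu(dx_{t+1}|x_{0:t})$ and $\nu(dy_{t+1}|y_{0:t})$, so $\pi \in \Pi(\mu^t, \nu^t, s_{0:t})$. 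Moreover $\Gamma$ is itself a coupling of $\pi$ and $\pi^*$, so
\begin{equation*}
    W_p^p(\pi, \pi^*) \leq \int \!\!\bigl[|x_{t+1} - x'_{t+1}|^p + |y_{t+1} - y'_{t+1}|^p\bigr]\,\Gamma = W_p^p\bigl(\mu^t_{s}, \mu^t_{s'}\bigr) + W_p^p\bigl(\nu^t_{s}, \nu^t_{s'}\bigr) = \Delta(s_{0:t}, s'_{0:t})^p.
\end{equation*}

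With this comparator in hand, I apply Assumption \ref{assum:cond_Lip} to the pair $(\pi, \pi^*)$ to get
\begin{equation*}
    \int f_{t+1}(s_{0:t}, \cdot)\,\pi - \int f_{t+1}(s'_{0:t}, \cdot)\,\pi^* \leq L_{t+1}\,\Delta(s_{0:t}, s'_{0:t}) + L_{t+1}\,\|s_{0:t} - s'_{0:t}\|^\alpha_\infty,
\end{equation*}
and bound $\Delta(s_{0:t}, s'_{0:t})$ via hypothesis \eqref{ineq:Holder} to obtain $L_{t+1}\Delta + L_{t+1}\|\cdot\|^\alpha \leq (H_t - K_{t+1})\|s_{0:t} - s'_{0:t}\|^\alpha_\infty$. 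Since $\cT[f_{t+1}](s_{0:t}) \leq \int f_{t+1}(s_{0:t}, \cdot)\,\pi$ and $\int f_{t+1}(s'_{0:t}, \cdot)\,\pi^* \leq \cT[f_{t+1}](s'_{0:t}) + \varepsilon$, this yields the one-sided bound $\cT[f_{t+1}](s_{0:t}) - \cT[f_{t+1}](s'_{0:t}) \leq (H_t - K_{t+1})\|s_{0:t} - s'_{0:t}\|^\alpha_\infty + \varepsilon$. Letting $\varepsilon \downarrow 0$ and swapping the roles of $s$ and $s'$ (using the symmetric form of \eqref{ineq:Holder} in $(s,s')$) gives the matching lower bound, hence the H\"older seminorm is at most $H_t - K_{t+1}$; adding the uniform bound $K_{t+1}$ gives the claimed membership in $\cH^\alpha(\cS_{0:t}, H_t)$.

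The main technical obstacle is the gluing construction in the second paragraph: showing that the quadruple measure $\Gamma$ built from disintegrated optimal $W_p$-couplings has the right marginals and supports the inequality $W_p(\pi, \pi^*) \leq \Delta(s_{0:t}, s'_{0:t})$. Once this is verified, everything else is a bookkeeping exercise in the $\varepsilon$-optimizer argument and an algebraic check that the constants in \eqref{ineq:Holder} are calibrated precisely so that the Lipschitz-in-path and Lipschitz-in-$W_p$ contributions combine to $H_t - K_{t+1}$.
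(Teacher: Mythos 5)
Your proof is correct and follows essentially the same route as the paper's: the paper invokes the $(\kappa^\mu,\kappa^\nu)$-shadow construction (Lemma \ref{lem:shadow}) to produce a feasible comparator coupling, then applies Assumption \ref{assum:cond_Lip} and inequality \eqref{ineq:Holder}, exactly as you do. The only differences are cosmetic: you unfold the shadow/gluing argument from scratch rather than citing the lemma, you reverse the direction (building a comparator feasible for $s_{0:t}$ from an optimizer for $s'_{0:t}$ rather than shadowing the optimizer for $s_{0:t}$ onto $s'_{0:t}$), and you use $\varepsilon$-optimizers, which is a minor refinement over the paper's implicit assumption that exact optimizers exist.
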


In the following two subsections, we explore specific function classes that satisfy Assumptions \ref{assum:cond_Lip} and \ref{assum:complete}, including sparse ReLU networks and sigmoid networks. By appropriately choosing the neural network architecture, we can obtain explicit bounds on approximate completeness \citep{schmidt2020,langer2021approximating}.

\subsubsection{Sparse ReLU network}

Denote the ReLU activation function as $\sigma(u) := \max\{ u, 0\}$. We restrict to the class of ReLU networks where most parameters are zero.
\begin{definition}
	With a given depth $D$, width $\{ q_j\}^{D+1}_{j=0}$, sparsity $\gamma$, and uniform bound $K$, the sparse ReLU network is defined as 
	\begin{align*}
		\cF(D, \{ q_j\}^{D+1}_{j=0}, \gamma, K) := \{  f: &\; f \text{ specified as in \eqref{eq:NN}}, \quad \sigma(u) = \max\{ u, 0\}, \quad b_{D+1} = 0, \\
		&  \max_{j=0, ..., D} \| W_j \|_{\infty} \vee \| b_j \|_\infty \leq 1, \quad \sum^{D}_{j=0} \|W_j \|_0 + \| b_j\|_0  \leq \gamma, \\
		& \| f \|_\infty \leq K \},
	\end{align*}
	with the convention that $b_0 = 0$. 
\end{definition}

A more explicit upper bound on the covering number is also available for sparse ReLU networks. The proof is a slight extension of \citet[Lemma 3]{suzuki2019} and is thus omitted. Additionally, \citet[Lemma 5]{schmidt2020} also provided a similar estimate.
\begin{proposition}(\citet[Lemma 3]{suzuki2019})
	The covering number of sparse ReLU networks $\cF_t := \cF(D_t, \{ q^t_j\}^{D_t+1}_{j=0}, \gamma_t, K_t)$ can be bounded by
	\begin{equation*}
		\ln \cN_\infty(\delta, \cF_t) \leq \gamma_t \ln \left( (D_t+1)/\delta \prod^{D_t}_{j=0} (q^t_j + 1)^2  \right) \leq C_{ReLU} (1 + \ln(1/\delta)),
	\end{equation*}
	where the constant $C_{ReLU}$ is independent of $\delta$ and only depends on the depth $D_t$, width $\{ q^t_j\}^{D_t+1}_{j=0}$, and sparsity $\gamma_t$.
\end{proposition}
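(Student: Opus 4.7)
The plan is to build an explicit proper $\delta$-cover of $\cF_t$ by discretizing the parameters of the sparse network and then to count the size of that cover. Because the non-zero parameter mask can be chosen in many ways, the count will split naturally into (i) a combinatorial term counting where the at most $\gamma_t$ non-zero entries sit, and (ii) an analytic term counting their quantized values. Taking the logarithm of the resulting product will yield the announced bound. Throughout I follow the outline of \citet[Lemma~3]{suzuki2019}, with the slight extension that we additionally track the truncation $\|f\|_\infty \leq K_t$, which is harmless because any element of the cover inherits the same bound after a final clipping step.

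The first step is a stability estimate of the network map with respect to its parameters. For two parameter collections $(W_j, b_j)_{j=0}^{D_t}$ and $(\widetilde W_j, \widetilde b_j)_{j=0}^{D_t}$ whose $\|\cdot\|_\infty$-distance is at most $\eta$, a layer-by-layer induction using that ReLU is $1$-Lipschitz, that $\|W_j\|_\infty, \|b_j\|_\infty \leq 1$, and that the inputs lie in a fixed bounded domain, gives a bound
\[
\|f - \widetilde f\|_\infty \leq C_{\mathrm{lip}} \cdot \eta, \qquad C_{\mathrm{lip}} := (D_t+1)\prod_{j=0}^{D_t}(q^t_j+1),
\]
where the product arises because a perturbation at layer $j$ is amplified by the operator norms of the subsequent layers (each bounded in terms of the widths, since entries are uniformly bounded by $1$). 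The $(D_t+1)$ factor collects the $D_t+1$ sources of perturbation summed across layers.

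Next I discretize. Set $\eta = \delta / C_{\mathrm{lip}}$ and form a uniform grid of step $\eta$ on $[-1,1]$; its cardinality is at most $\lceil 2/\eta \rceil \leq 3/\eta$. A cover element is then specified by choosing which at most $\gamma_t$ of the $P := \sum_{j=0}^{D_t}(q^t_{j+1} q^t_j + q^t_{j+1})$ parameter slots are non-zero, and assigning each of them a grid value. The number of such choices is at most
\[
\binom{P}{\gamma_t} \left(\frac{3}{\eta}\right)^{\gamma_t} \leq \left(\frac{3\,P \, C_{\mathrm{lip}}}{\delta}\right)^{\gamma_t} \leq \left(\frac{(D_t+1)}{\delta}\prod_{j=0}^{D_t}(q^t_j+1)^2\right)^{\gamma_t},
\]
using $\binom{P}{\gamma_t}\leq P^{\gamma_t}$ and the elementary bound $3 P\, C_{\mathrm{lip}} \leq (D_t+1)\prod_j(q^t_j+1)^2$, which holds since $P \leq (D_t+1)\max_j(q^t_j+1)(q^t_{j+1}+1)$. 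Finally, to make the cover proper I clip each candidate at $\pm K_t$; clipping is $1$-Lipschitz and preserves being within $\delta$ of a function in $\cF_t$, so the clipped family is a proper $\delta$-cover. Taking logarithms yields the first inequality; the second inequality is obtained by absorbing the $D_t$- and $q^t_j$-dependent terms into the constant $C_{\mathrm{ReLU}}$, which depends only on the architecture $(D_t, \{q^t_j\}, \gamma_t)$ and not on $\delta$.

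I anticipate the main technical point to verify carefully is the Lipschitz constant $C_{\mathrm{lip}}$: propagating perturbations through both a change of weights and a change of pre-activations requires a two-term telescoping, and one must verify that the intermediate activations remain bounded by a quantity depending only on the widths (which follows from $\|W_j\|_\infty \leq 1$, $\|b_j\|_\infty \leq 1$, and boundedness of the input). Once this estimate is in hand, the counting argument and the absorption into $C_{\mathrm{ReLU}}$ are routine.
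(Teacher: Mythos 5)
The paper omits this proof, referring the reader to Suzuki's Lemma 3 (and noting that Schmidt-Hieber's Lemma 5 gives a similar estimate), so there is no in-paper argument to compare against. Your skeleton — stability of the realization map under parameter perturbations, a uniform grid on $[-1,1]$ for each active weight, a combinatorial count over the placement of the $\gamma_t$ non-zero entries, and a final clipping at $\pm K_t$ to make the cover proper — is exactly the standard argument behind both references, and the Lipschitz estimate $C_{\mathrm{lip}} = (D_t+1)\prod_{j=0}^{D_t}(q^t_j+1)$ is of the right form (it matches Schmidt-Hieber's $(L+1)V$ up to a factor from the output width).

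There is, however, a genuine gap in the arithmetic closing the first (explicit) inequality. You assert $3PC_{\mathrm{lip}} \leq (D_t+1)\prod_{j=0}^{D_t}(q^t_j+1)^2$ and justify it via $P \leq (D_t+1)\max_j(q^t_j+1)(q^t_{j+1}+1)$. Substituting this and $C_{\mathrm{lip}} = (D_t+1)\prod_j(q^t_j+1)$ gives only $3PC_{\mathrm{lip}} \leq 3(D_t+1)^2 \max_j\bigl[(q^t_j+1)(q^t_{j+1}+1)\bigr]\prod_j(q^t_j+1)$, so your claimed inequality would require $3(D_t+1)\max_j\bigl[(q^t_j+1)(q^t_{j+1}+1)\bigr] \leq \prod_{j=0}^{D_t}(q^t_j+1)$, which fails in general — e.g.\ for $D_t=0$ the left side is $6(q^t_0+1)$ while the right is $q^t_0+1$. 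What is actually wanted is a sum-type bound $P = \sum_j q^t_{j+1}(q^t_j+1) \leq \prod_j(q^t_j+1)$ rather than the max-type bound, but even this is not true for shallow networks, and in any case the factor $3$ from $\lceil 2/\eta\rceil$ is left unabsorbed; Schmidt-Hieber's version of this bound carries the slack inside the logarithm (an extra factor $2$) and replaces $\gamma_t$ by $\gamma_t+1$, concessions the tighter form quoted here does not make. None of this affects the second inequality $C_{\mathrm{ReLU}}(1+\ln(1/\delta))$, which is all the paper uses downstream and which your argument does deliver, but the explicit first inequality is not established by the chain as written.
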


Thanks to \citet[Theorem 5]{schmidt2020}, we have the following result on approximate completeness, under the assumption that the domains are compact.
\begin{proposition}\label{prop:ReLU_complete}
	Suppose 
	\begin{itemize}
		\item[(1)] $\cX_t = \cY_t = [0, 1]^{d}$ for $t =0, 1, ..., T$;
		\item[(2)] the cost $\|c(s_{1:T})\|_\infty \leq K_T$ and satisfies Assumption \ref{assum:cond_Lip} with some $L_T>0$ and $\alpha \in (0, 1]$;
		\item[(3)] for $t=0, ..., T-1$, the function class $\cF_t = \cF(D_t, \{ q^t_j\}^{D_t+1}_{j=0}, \gamma_t, K_t)$ with some $D_t$, $\{ q^t_j\}^{D_t+1}_{j=0}$, $\gamma_t$, and $K_t$ specified below;
		\item[(4)] inequality \eqref{ineq:Holder} holds for large enough $H_t > 0$ at each $t=0, ..., T-1$.
	\end{itemize}
	Then 
	\begin{itemize}
		\item[(a)] $\cT[f_{t+1}] \in \cH^\alpha (\cS_{0:t}, H_t)$ for $t = 0, ..., T-1$;
		\item[(b)] ReLU network parameters are specified backwardly as follows: Suppose $\cF_{t+1}$ is chosen. Denote $d_t := (t+1)d$ for simplicity. For $\cF_{t}$, consider any integers $m_t \geq 1$ and $G_t \geq \max\{ (\alpha + 1)^{d_t}, (H_t + 1)e^{d_t} \}$, set width $\{ q^t_j\}^{D_t+1}_{j=0} = \{d_t, 6(d_t + 1)G_t, ..., 6(d_t + 1)G_t, 1\}$, depth $D_t = 8 + (m_t + 5)(1 + \lceil\log_2(d_t)\rceil)$, sparsity $\gamma_t \leq 141 (d_t + \alpha + 1)^{3+d_t} G_t (m_t + 6)$, and uniform bound $K_t = (2H_t + 1)(1+ d^2_t + \alpha^2) 6^{d_t} G_t 2^{-m_t} + H_t 3^\alpha G_t^{-\frac{\alpha}{d_t}} + K_{t+1}$;
		\item[(c)] at time $t=0, ..., T-1$, the approximate completeness assumption \ref{assum:complete} holds with $\zeta_t = (K_t - K_{t+1})^2$.
	\end{itemize}
\end{proposition}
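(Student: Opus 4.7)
The plan is to establish the three claims by a downward induction in $t$, bootstrapping each layer from Proposition \ref{prop:Holder} and the ReLU approximation theorem of \citet[Theorem 5]{schmidt2020}.

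For part (a), I would proceed by downward induction on $t$. At the base case $t = T-1$, the cost $f_T = c(s_{1:T})$ satisfies Assumption \ref{assum:cond_Lip} with constant $L_T$ by hypothesis (2), and hypothesis (4) supplies inequality \eqref{ineq:Holder} with $H_{T-1}$, so Proposition \ref{prop:Holder} directly yields $\cT[c] \in \cH^\alpha(\cS_{0:T-1}, H_{T-1})$. For the inductive step at $t < T-1$, one needs Assumption \ref{assum:cond_Lip} for every $f_{t+1} \in \cF_{t+1}$. This is inherited from the architectural constraints: the entrywise bound $\|W_j\|_\infty \leq 1$ together with 1-Lipschitzness of ReLU implies a uniform Lipschitz constant depending only on depth and width, which (enlarged if necessary) can serve as $L_{t+1}$. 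Combined with hypothesis (4), Proposition \ref{prop:Holder} then gives $\cT[f_{t+1}] \in \cH^\alpha(\cS_{0:t}, H_t)$, closing the induction.

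For parts (b) and (c), the key input is \citet[Theorem 5]{schmidt2020}, which for every $g \in \cH^\alpha([0,1]^{d_t}, H_t)$ constructs a sparse ReLU network $\tilde g$ of the precise width $\{d_t, 6(d_t+1)G_t, \ldots, 6(d_t+1)G_t, 1\}$, depth $D_t = 8 + (m_t+5)(1 + \lceil \log_2 d_t \rceil)$, and sparsity $\gamma_t$ listed in (b), achieving
\begin{align*}
\|\tilde g - g\|_\infty \leq (2 H_t + 1)(1 + d_t^2 + \alpha^2)\, 6^{d_t} G_t\, 2^{-m_t} + H_t\, 3^\alpha\, G_t^{-\alpha/d_t} = K_t - K_{t+1}.
\end{align*}
Applying this to $g = \cT[f_{t+1}]$, which is $\alpha$-H\"older by part (a), produces an approximator $f_t$. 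Because $\|\cT[f_{t+1}]\|_\infty \leq \|f_{t+1}\|_\infty \leq K_{t+1}$, the triangle inequality gives $\|f_t\|_\infty \leq K_t$, so indeed $f_t \in \cF_t$. Since the $L_\infty$ error dominates the $L^2(\beta)$ error, the approximate completeness in Assumption \ref{assum:complete} follows with $\zeta_t = (K_t - K_{t+1})^2$.

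The main obstacle is bookkeeping. First, one must verify that every member of the recursively-defined class $\cF_{t+1}$ satisfies Assumption \ref{assum:cond_Lip} uniformly, so the backwards induction for part (a) actually closes; this relies on the weight-bound constraints built into the sparse-ReLU definition. Second, the parameter choices $(D_t, \{q^t_j\}, \gamma_t, K_t)$ in (b) must match Schmidt-Hieber's construction verbatim for the uniform approximation error to collapse to exactly $K_t - K_{t+1}$. In particular, $K_t$ is chosen precisely so that the uniform-bound constraint of $\cF_t$ absorbs both the bound on $\cT[f_{t+1}]$ and the approximation residual, allowing the induction to proceed without loss.
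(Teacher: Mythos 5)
Your proposal matches the paper's proof essentially step for step: backward induction in $t$, invoking Proposition \ref{prop:Holder} at each stage (with the inductive step supplying Assumption \ref{assum:cond_Lip} for $\cF_{t+1}$ via Lipschitzness of ReLU networks with bounded weights on a compact domain), applying \citet[Theorem 5]{schmidt2020} to approximate $\cT[f_{t+1}]$ in sup-norm with exactly the residual $K_t - K_{t+1}$, and passing from the sup-norm bound to the $L^2(\beta)$ bound to obtain $\zeta_t = (K_t - K_{t+1})^2$. The only minor stylistic difference is that you emphasize the weight-bound/width/depth control directly, while the paper leans on compactness of $[0,1]^d$ and bounded parameters to assert a "sufficiently large $L_{t+1}$"; these are the same observation.
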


\subsubsection{Sigmoid neural networks}
Since Corollaries \ref{cor:entropy_RC} and \ref{cor:LRC_entropy} regarding the sample complexity with entropic regularization require a smooth function class, we provide the corresponding result for neural networks with the sigmoid activation function, defined as follows:
\begin{equation*}
	\sigma(u) = \frac{1}{1+ e^{-u}}.
\end{equation*}

\begin{definition}\label{def:sig_NN}
	With a given depth $D$, width $q$, and parameter uniform bound $U$, the sigmoid neural network is defined as 
	\begin{align*}
		\Sigma(D, q, U, K) := \{  f: &\; f \text{ specified as in \eqref{eq:NN}}, \quad \sigma(u) = \frac{1}{1+ e^{-u}}, \quad q_{D} = \cdots = q_1 = q,\\
		& \quad \max_{j=0, ..., D} \| W_j \|_{\infty} \vee \| b_{j+1} \|_\infty \leq U, \quad \| f \|_\infty \leq K  \}.
	\end{align*} 
\end{definition}

Definition \ref{def:sig_NN} adheres to the specification in \cite{langer2021approximating} and does not impose any sparsity constraint. By utilizing \citet[Theorem 1]{langer2021approximating}, we establish the approximate completeness result for sigmoid neural networks.

\begin{proposition}\label{prop:sigmoid}
	Suppose 
	\begin{itemize}
		\item[(1)] $\cX_t = \cY_t = [-a, a]^{d}$ with some $a\in[1, \infty)$, for all $t =0, 1, ..., T$;
		\item[(2)] the cost $\|c(s_{1:T})\|_\infty \leq K_T$ and satisfies Assumption \ref{assum:cond_Lip} with some $L_T>0$ and $\alpha \in (0, 1]$;
		\item[(3)] for $t=0, ..., T-1$, the function class $\cF_t = \Sigma(D_t, q_t, U_t, K_t)$ with some $D_t$, $q_t$, $U_t$, and $K_t$ specified below;
		\item[(4)] inequality \eqref{ineq:Holder} holds for large enough $H_t > 0$ at each $t=0, ..., T-1$.
	\end{itemize}
	Then 
	\begin{itemize}
		\item[(a)] $\cT[f_{t+1}] \in \cH^\alpha (\cS_{0:t}, H_t)$ for $t = 0, ..., T-1$;
		\item[(b)] sigmoid network parameters are specified backwardly as follows: Suppose $\cF_{t+1}$ is chosen. Denote constant $d_t := (t+1)d$ for simplicity. For $\cF_{t}$, consider any integers $M_t$ satisfying $M^{2\alpha}_t \geq \max\{ 2c_{2,t} (\max\{a, K_{t+1} \})^3, c_{3,t}, 2^{d_t}, 12 d_t\}$, for some constant $c_{2, t}$ and $c_{3, t}$. Set depth $D_t = 8 + \lceil\log_2(d_t)\rceil$, width $q_t = 2^{d_t} (2M^{d_t}_t(1+d_t)^2 + 2M^{d_t}_t (1+ d_t) + 13 d_t )$, and uniform parameter bound $ U_t = c_{4,t} (\max\{a, K_{t+1}\})^{11} e^{6 \times 2^{2(d_t + 1)+1}ad_t} M^{16\alpha+2d_t+9}_t$. Moreover, let $K_t : = K_{t+1} + \frac{c_{5, t} (\max\{a, K_{t+1}\})^3}{M^{2\alpha}_t}$ for some constant $c_{5, t} > 0$;
		\item[(c)] at time $t=0, ..., T-1$, the approximate completeness assumption \ref{assum:complete} holds with $\zeta_t := (K_t - K_{t+1})^2$.
	\end{itemize}
\end{proposition}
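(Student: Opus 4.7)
The plan is to follow the same backward-induction template used for the sparse ReLU case in Proposition \ref{prop:ReLU_complete} and to substitute the sigmoid approximation bound of \citet[Theorem 1]{langer2021approximating} in place of the ReLU bound of \citet[Theorem 5]{schmidt2020}. We induct on $t$ from $T-1$ down to $0$, and at each step prove (a) and (c) simultaneously; the architecture in (b) is simply what is needed to make the quantitative sigmoid approximation theorem deliver the claimed error.

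For part (a) I invoke Proposition \ref{prop:Holder}. At the base step $t = T-1$, the function $f_T = c(s_{1:T})$ is bounded by $K_T$ and satisfies Assumption \ref{assum:cond_Lip} by hypothesis (2); combined with hypothesis (4) on $H_{T-1}$, Proposition \ref{prop:Holder} yields $\cT[f_T] \in \cH^\alpha(\cS_{0:T-1}, H_{T-1})$. For the inductive step, having constructed $\cF_{t+1} = \Sigma(D_{t+1}, q_{t+1}, U_{t+1}, K_{t+1})$, I must verify that every $f_{t+1} \in \cF_{t+1}$ itself satisfies Assumption \ref{assum:cond_Lip} so that Proposition \ref{prop:Holder} applies. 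This follows because the sigmoid $\sigma$ is $C^\infty$ with all derivatives bounded on $\R$, the parameters of $f_{t+1}$ are bounded by $U_{t+1}$, and the domain $[-a, a]^{d_{t+1}}$ is compact; hence $f_{t+1}$ is globally Lipschitz in $s_{t+1}$, yielding the $W_p$ term via the Kantorovich duality together with $W_1 \leq W_p$, and globally Lipschitz (hence $\alpha$-Hölder for $\alpha \in (0,1]$) in $s_{0:t}$, yielding the Hölder term. The constant $L_{t+1}$ so produced can be absorbed into the choice of $H_t$ via hypothesis (4), and Proposition \ref{prop:Holder} delivers $\cT[f_{t+1}] \in \cH^\alpha(\cS_{0:t}, H_t)$.

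For parts (b) and (c) I apply the sigmoid approximation theorem of \citet[Theorem 1]{langer2021approximating} to the Hölder target $\cT[f_{t+1}] \in \cH^\alpha(\cS_{0:t}, H_t)$ on $[-a, a]^{d_t}$. With the architecture $D_t = 8 + \lceil \log_2 d_t \rceil$, width $q_t = 2^{d_t}(2 M_t^{d_t}(1+d_t)^2 + 2 M_t^{d_t}(1+d_t) + 13 d_t)$, and parameter bound $U_t = c_{4,t}(\max\{a, K_{t+1}\})^{11} e^{6 \cdot 2^{2(d_t+1)+1} a d_t} M_t^{16\alpha + 2 d_t + 9}$, and with $M_t$ satisfying the stated lower bound, that theorem guarantees the existence of a sigmoid network $\tilde f_t$ with
\begin{equation*}
\|\tilde f_t - \cT[f_{t+1}]\|_\infty \;\leq\; \frac{c_{5,t}(\max\{a, K_{t+1}\})^3}{M_t^{2\alpha}} \;=\; K_t - K_{t+1}.
\end{equation*}
Since $\|\cT[f_{t+1}]\|_\infty \leq K_{t+1}$, the triangle inequality gives $\|\tilde f_t\|_\infty \leq K_t$, so $\tilde f_t \in \Sigma(D_t, q_t, U_t, K_t) = \cF_t$. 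Taking $f_t = \tilde f_t$ in the infimum of the squared $L^2(\beta)$-error,
\begin{equation*}
\inf_{f_t \in \cF_t} \int \bigl(f_t(s_{0:t}) - \cT[f_{t+1}](s_{0:t})\bigr)^2 \, \beta(ds_{1:t} \mid s_0) \;\leq\; \|\tilde f_t - \cT[f_{t+1}]\|_\infty^2 \;\leq\; (K_t - K_{t+1})^2 \;=\; \zeta_t,
\end{equation*}
and since the pointwise bound is uniform in the specific $f_{t+1} \in \cF_{t+1}$ chosen (the approximation theorem applies to any Hölder target with the same constants), Assumption \ref{assum:complete} holds.

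The main obstacle is bookkeeping. Because the sigmoid activation saturates, the constants in \citet[Theorem 1]{langer2021approximating} depend polynomially on $\max\{a, K_{t+1}\}$ and exponentially on the diameter $a d_t$, and $K_{t+1}$ itself is produced at the previous step of the induction via the $c_{5, t+1}$ correction. One must therefore carry a consistent accounting of how $K_{t+1}$, $L_{t+1}$, $H_t$, and the constants $c_{2,t}, c_{3,t}, c_{4,t}, c_{5,t}$ depend on earlier-step choices, and verify that the architectural lower bounds on $M_t$ can be met while keeping the Hölder bound \eqref{ineq:Holder} compatible with hypothesis (4). None of these steps is deep, but the indexing is delicate and is the reason the statement is formulated backwardly.
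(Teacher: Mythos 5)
Your proof follows the same backward-induction template as the paper and correctly identifies the key ingredients: Proposition \ref{prop:Holder} for the H\"older regularity of the Bellman image, the uniform boundedness of sigmoid-network parameters to verify Assumption \ref{assum:cond_Lip} inductively, and Langer's Theorem~1 for the quantitative approximation. One step is missing, though: you apply \citet[Theorem 1]{langer2021approximating} directly to $\cT[f_{t+1}]$ viewed as a function on the compact cube $\cS_{0:t}$, but that theorem requires the target to be $(p,C)$-smooth (here, $\alpha$-H\"older with the stated constant) on all of $\R^{d_t}$, not merely on the cube. The paper bridges this by a McShane-type extension (Lemma \ref{lem:extension}): define
\begin{equation*}
g(s_{0:t}) := \inf\bigl\{ \cT[f_{t+1}](u_{0:t}) + (H_t - K_{t+1}) \| s_{0:t} - u_{0:t}\|^\alpha_\infty : u_{0:t} \in \cS_{0:t} \bigr\},
\end{equation*}
which extends $\cT[f_{t+1}]$ to $\R^{d_t}$ with the same $\alpha$-H\"older seminorm, then apply Langer's theorem to $g$ and restrict the resulting network's error guarantee back to $\cS_{0:t}$. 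Without this step the invocation of Langer's theorem is not formally licensed. The remainder of your argument — the Lipschitz verification for sigmoid networks via compactness and bounded parameters/derivatives, the split giving the $W_p$ and H\"older terms, the triangle inequality bounding $\|\tilde f_t\|_\infty \leq K_t$, and the conclusion that $\zeta_t = (K_t - K_{t+1})^2$ certifies approximate completeness uniformly over $f_{t+1} \in \cF_{t+1}$ — matches the paper's proof and is sound.
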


\section{Numerical results}\label{sec:numerics}
\subsection{Gaussian data}
It is rare for OT problems to have explicit solutions, even in the single-period case. One exception is the Gaussian distribution. Hence, for the numerical test, we consider $\mu$ and $\nu$ as Gaussian distributions with linear dynamics: 
\begin{equation}\label{eq:Gaussian}
	\begin{aligned}
		x_{t+1} & = x_t + \lambda_t, \quad \lambda_t \sim N(0, \Sigma_x), \\
		y_{t+1} & = y_t + \eta_t, \quad \eta_t \sim N(0, \Sigma_y).
	\end{aligned}
\end{equation}
Here, $x_t \in \R^d$ and $y_t \in \R^d$ are two AR(1) processes in the context of time-series analysis. The white Gaussian noise processes $\lambda_t$ and $\eta_t$ are independent, with mean zero and constant covariance matrices $\Sigma_x$ and $\Sigma_y$, respectively. The bicausal OT problem with quadratic cost is defined as follows:
\begin{align*}
	\inf_{\pi \in \Pi_{bc}(\mu, \nu, x_0, y_0)}  \int \sum^T_{t=1} (x_t - y_t)^2\pi(dx_{1:T}, dy_{1:T} | x_0, y_0).
\end{align*}
The explicit solution to this problem is given by 
\begin{align*}
	& T |x_0 - y_0|^2 + \frac{T(T+1)}{2} \left( \tr[\Sigma_x] + \tr[\Sigma_y] - 2 \tr \left[\sqrt{\sqrt{\Sigma_x} \Sigma_y \sqrt{\Sigma_x} } \right] \right).
\end{align*} 
This result can be proven similarly to \citet[Corollary 2.2]{han2023dist} by applying the single-period case \citep{givens1984class} repeatedly. Additionally, it is worth noting that the single-period case is closely related to the Bures metric.

In the experiment, we consider an agent that lacks knowledge of the actual underlying distribution but can sample data from it. From a theoretical standpoint, Gaussian data do not have compact support, which violates the assumptions in Propositions \ref{prop:ReLU_complete} and \ref{prop:sigmoid}. However, the FVI algorithm is still applicable, and the empirical distribution has bounded support. To assess the effectiveness of our method, we compare it with the LP and adapted Sinkhorn algorithms \citep{eckstein2022comp}. All experiments were performed on a laptop equipped with an Intel Core i5-12500H 3.10GHz CPU and an NVIDIA RTX 3050 Ti Laptop GPU. The PyTorch library was utilized for neural network implementation.

\subsection{One-dimensional data}
We begin by examining the one-dimensional case ($d=1$). The variances are fixed as $\Sigma_x = 1.0$ and $\Sigma_y = 0.5^2$. The initial values are set as $x_0 = 1.0$ and $y_0 = 2.0$.

\subsubsection{Linear programming method}
Both the LP and adapted Sinkhorn methods require a discrete distribution to approximate the continuous distribution $\mu$ and $\nu$. To accomplish this, we employ a non-recombining binomial tree construction as follows. Initially, at time $t=1$, we simulate $x_1$ and partition the samples into two sets. We then use the mean and frequency of each set to create two nodes in the tree. Next, we sample $x_2$ conditioned on each $x_1$. At time 2, there will be four nodes in total, as the middle two nodes are almost surely not recombined. We repeat this procedure until time $T$. Both the LP and adapted Sinkhorn algorithms consider this tree as the underlying discrete distribution. Figure \ref{fig:tree} illustrates an example of a non-recombining binomial tree. Since the agent does not possess knowledge of the true distribution, constructing a recombining binomial tree to reduce the number of nodes becomes challenging. Additionally, we also explored the adapted empirical measures with $k$-means as proposed in \cite{backhoff2020estimating,eckstein2022comp}. However, since the number of cubes after the partition is on the order of $N^{1/(T+1)}$ \cite[Definition 1.2]{backhoff2020estimating}, it requires a large sample size $N$ even for two cubes under long time horizons. Hence, we consider the non-recombining tree since it is more applicable.
\begin{figure}[h]
	\centering
	\includegraphics[width=0.15\linewidth]{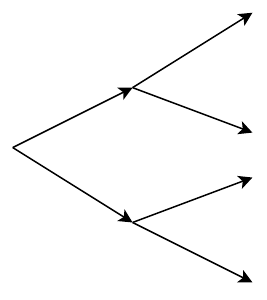}
	\caption{A non-recombining binomial tree.}\label{fig:tree}
\end{figure}


\begin{table}[h]
	\centering
	\caption{The LP method with backward induction for bicausal OT.}\label{tab:backLP}
	\begin{tabular}{cccc}
		\hline
		Horizon $T$ & Actual & Estimated mean (SD) & Average runtime  \\
		\hline 
		1 & 1.25 & 1.217 (0.102) & 0.102 \\
		2 & 2.75 & 2.734 (0.243) & 0.270  \\
		3 & 4.5 & 4.262 (0.488) & 0.587 \\
		4 & 6.5 & 6.295 (0.401) & 1.311 \\
		5 & 8.75 & 8.416 (0.600) & 2.909 \\
		6 & 11.25 & 10.414 (1.028) & 6.201 \\
		7 & 14.0 & 12.713 (1.071) & 13.733 \\
		8 & 17.0 & 16.114 (0.902)  & 32.178 \\
		9 & 20.25 &  19.051 (1.417)  & 83.298 \\
		10 & 23.75 & 22.649 (1.277)  &  248.249\\
		11 &   27.5  &  25.557 (1.646) & 794.041 \\
		\hline
	\end{tabular}
\end{table}

For the LP and adapted Sinkhorn algorithms, we utilize the implementation by \cite{eckstein2022comp}, which can be found at \url{https://github.com/stephaneckstein/aotnumerics}. The LP formulation can be implemented using backward induction or directly as described in  \citet[Lemma 3.11]{eckstein2022comp}. We first test the LP formulation in \citet[Lemma 3.11]{eckstein2022comp}, employing the Gurobi solver with an academic license. In the case of $T=10$, we observed that this method took longer than 1000 seconds to finish. Gurobi reported the model had 701098 rows, 1048576 columns, and 23068672 nonzeros, even after the presolve phase to reduce the problem size. Consequently, we resorted to using the LP with backward induction, which solves an OT problem of shape $2 \times 2$ at each time and state. The estimated results are presented in Table \ref{tab:backLP}. For each configuration, we conducted 10 instances to compute the mean and standard deviation (SD). The average runtime, measured in seconds per instance, includes the data sampling time as well. Table \ref{tab:backLP} demonstrates that the non-recombining tree can effectively approximate the AR(1) model, yielding accurate means and low variances. However, the runtime grows rapidly, indicating that the LP method does not scale well. We also conducted tests for the case of $T=12$ and found that the runtime was approximately 2600 seconds.

\subsubsection{Adapted Sinkhorn method}

Table \ref{tab:sink} demonstrates the remarkable accuracy of the adapted Sinkhorn method. The stopping criterion is when the change in dual values is less than $10^{-4}$. The actual values presented in Tables \ref{tab:backLP} and \ref{tab:sink} correspond to continuous distributions. Therefore, the errors between the actual and estimated values also include the approximation errors resulting from discretization with binomial trees. 

We observe that the runtime of the adapted Sinkhorn method increases rapidly as the time horizon expands. Interestingly, the adapted Sinkhorn method is even slower than the LP method. This can be attributed to the fact that, in order to obtain the potential functions, the adapted Sinkhorn method \cite[Lemma 6.6]{eckstein2022comp} needs to iterate over all possible scenarios of $(x_t, y_t)$ for each time $t$. In a non-recombining binomial tree model, the number of these scenarios grows exponentially with the horizon $T$. Our experimental results do not contradict the findings of \cite{eckstein2022comp}. When the time horizon is short (as in $T=2$ in \cite{eckstein2022comp}), this drawback of the adapted Sinkhorn algorithm is mitigated, and it can be faster if the number of possible scenarios is moderate, but not excessively small.

Moreover, although the entropic OT converges to the unregularized OT when $\varepsilon \rightarrow 0$, the Sinkhorn algorithm can encounter numerical instability issues when $\varepsilon$ is small. As the time horizon $T$ increases, a larger value of $\varepsilon$ becomes necessary. This introduces a trade-off between stability, accuracy, and speed by varying $\varepsilon$. Compared with the FVI results in Table \ref{tab:FVI_1d}, we observe that for similar levels of accuracy, the adapted Sinkhorn method is much slower for longer time horizons. 
Additionally, when $T \geq 20$, both the LP and adapted Sinkhorn methods fail to converge within a reasonable time.

\begin{table}[h]
	\centering
	\caption{The adapted Sinkhorn method for bicausal OT.}\label{tab:sink}
	\begin{tabular}{ccccc}
		\hline
		Horizon $T$ & Actual & Estimated mean (SD) & Average runtime & $\varepsilon$ in entropic regularization  \\
		\hline 
		1 & 1.25 & 1.201 (0.103) & 0.087 & 0.1 \\
		2 & 2.75 & 2.701 (0.251) & 0.313  & 0.1 \\
		3 & 4.5 & 4.215 (0.488) & 0.815 & 0.1 \\
		4 & 6.5 & 6.229 (0.389) & 2.320 & 0.1 \\
		5 & 8.75 & 8.297 (0.656) & 8.067 & 0.1 \\
		6 &  11.25  & 10.332 (1.014) & 22.600 & 0.2 \\
		7 &   14.0   &  12.662 (1.079) & 39.994 & 0.4 \\
		8 & 17.0 & 16.085 (0.915)  & 205.034 & 0.6 \\
		9 & 20.25 & 19.132 (1.416) & 517.958 & 0.8 \\
		10 & 23.75 & 22.874 (1.297) & 1443.557 & 1.0 \\
		\hline
	\end{tabular}
\end{table}

\subsubsection{The FVI algorithm}
In the FVI algorithm, we define the sampling distribution as $\beta = \mu \otimes \nu$. Suppose the agent knows that the true value function is separable in time and state variables. These structural properties are neglected in the discretization methods, while the FVI method can incorporate these properties in the design of approximators. We use a neural network class $\mathcal{F}_t$ with the same architecture across different time steps. The architecture is specified as $F_1(T - t) F_2(x_t, y_t) + F_3(T - t)$, where $F_1, F_2$ and $F_3$ are learnable approximators. In particular, $F_1$ and $F_3$ are quadratic functions of $T- t$, while $F_2$ is a multilayer ReLU neural network with a depth $D=2$ and hidden width $q_j = 8$. We have not enforced sparsity in the implementation. Since $F_1$, $F_2$, and $F_3$ are not identifiable, we introduce a sigmoid function to scale the last output of $F_2$ into the range $(0, 1)$. The network is initialized using the default method in PyTorch. Our code is available at \url{https://github.com/hanbingyan/FVIOT}.

For optimization, we utilize the Adam optimizer with a learning rate of $0.01$. Gradients and parameters are truncated elementwise to the range of $[-1, 1]$ in each gradient descent step. Through experiments, we observe that the smooth $L_1$ loss in PyTorch outperforms the squared loss function, possibly because it is less affected by outliers. The smooth $L_1$ loss function, closely related to the Huber loss, is defined as follows:
\begin{equation*}
	l(f, v) = \left\{
	\begin{array}{rll}
		&  \frac{1}{2 \tau} (f - v)^2, & \text{ if } | f - v | < \tau, \\
		& | f - v| - \tau/2, & \text{ otherwise. }
	\end{array}\right.
\end{equation*}
The default threshold value is $\tau = 1.0$.

Table \ref{tab:FVI_1d} presents the estimation results for the FVI algorithm with the unregularized empirical OT. To prevent overfitting, we reduce the number of gradient steps ($G$) as the time horizon increases. Other hyperparameters are fixed, including a batch size of $128$ for gradient descent, $2000$ sample paths ($N$), and a sample size of $50$ for the empirical OT ($B$). Although the estimation error is higher compared to the LP and adapted Sinkhorn methods, it remains within moderate and acceptable levels. As expected, the variance increases with the horizon. The reported runtime includes data sampling. For small horizons, our method requires more time. However, the computational burden increases linearly. The FVI algorithm becomes faster than the LP and adapted Sinkhorn methods for longer horizons. Based on these observations, we assert that the FVI method provides acceptable accuracy and exhibits scalability.

\begin{table}[h]
	\centering
	\caption{FVI with one-dimensional data. }\label{tab:FVI_1d}
	\begin{tabular}{ccccc}
		\hline
		Horizon $T$ & Actual & Estimated mean (SD) & Average runtime & Gradient steps $G$ \\
		\hline 
		1 & 1.25 & 1.311 (0.049) & 10.592 &  50 \\
		2 & 2.75 & 2.932 (0.181) & 15.655  & 50 \\
		3 & 4.5 & 5.031 (0.306) & 22.184 & 50 \\
		4 & 6.5 & 7.261 (0.534) & 28.713 & 50 \\
		5 & 8.75 & 9.710 (0.951) & 35.560 & 50 \\
		6 &  11.25  & 12.315 (1.171) & 41.256 & 40 \\
		7 &   14.0   &  14.901 (1.354) & 47.410 & 30 \\
		8 & 17.0 &  17.907 (2.660) & 55.071 & 20 \\
		9 & 20.25 &  21.392 (2.426) & 60.681 & 20\\
		10 & 23.75 & 24.455 (3.610) & 66.435 & 20 \\
		20 & 72.5 & 72.020 (8.172) & 131.445 & 20 \\
		40 & 245.0 & 235.266 (38.341) & 257.693 & 20 \\
		\hline
	\end{tabular}
\end{table}

\subsection{Multidimensional data}

\begin{table}[H]
	\centering
	\caption{FVI with multidimensional data.} \label{tab:multi}
	\begin{tabular}{ccccc}
		\hline
		Horizon $T$ & Dimension $d$ & Actual & Estimate mean (SD) & Average runtime \\
		\hline 
		$T=5$ & $d=5$ & 100.0 & 99.861 (3.421)  & 336.310  \\
		$T=5$ & $d=10$ & 200.0 & 191.948 (6.900) & 358.793 \\
		$T=5$ & $d=15$ & 300.0 & 268.245 (13.931) & 336.887 \\
		$T=5$ & $d=20$ & 400.0 & 351.073 (7.751) & 334.874 \\
		\hline
	\end{tabular}
\end{table}

The implementations of the LP and adapted Sinkhorn algorithms in \cite{eckstein2022comp} do not account for multidimensional data. However, our FVI method offers a uniform framework for handling such data. To illustrate, we assume that the covariance matrices are diagonal. Consider $x_0 = \mathbf{1}_d$, $y_0 = 2 \mathbf{1}_d$, $\Sigma_x = 1.1^2 I_d$, and $\Sigma_y = 0.1^2 I_d$. We set the time horizon $T=5$.

The empirical OT suffers from the curse of spatial dimensionality, necessitating larger sample sizes ($B$) and a greater number of paths ($N$). In our experiment, we set these hyperparameters to $N=4000$, $B=300$, and $G=400$. Since the actual values are large, we do not impose any parameter constraints in this particular experiment. The results are summarized in Table \ref{tab:multi}. We observe accurate estimation outcomes for low-dimensional cases with $d=5$ and $d=10$. As anticipated, estimates become less accurate as the dimension $d$ increases. It is worth noting that the runtime of our method solely relies on the hyperparameters and remains unaffected by the input dimension $d$.

\section{Concluding remarks}

In summary, our FVI algorithm is well-suited for tackling problems with long time horizons and extensive state spaces. However, if the horizon is short and the state space is small, the LP and Sinkhorn algorithms are more appropriate.

One drawback of the FVI algorithm is its tendency to exhibit larger variance and sensitivity to data with high volatility. While the discretization error may be more pronounced when dealing with a volatile true distribution, the estimated values produced by the FVI algorithm deteriorate more severely compared to the Sinkhorn algorithm. This disparity could potentially be attributed to the ERM framework employed by the FVI. A future direction is to mitigate the variance in the FVI estimations.


\appendix

\section{Proofs of results in Section \ref{sec:sam_complex}}

\begin{proof}[Proof of Lemma \ref{lem:subopt}]
	To obtain an upper bound, we note that
	\begin{equation}
		J_0(s_0; \pi^{\hat{f}}) - J_0(s_0; \pi^*_C) = J_0(s_0; \pi^{\hat{f}}) - \hat{f}_0 (s_0) + \hat{f}_0 (s_0) - J_0(s_0; \pi^*_C).
	\end{equation}
	We decompose $f_0(s_0) - J_0(s_0; \pi)$ for a generic approximator $f$ and bicausal transport $\pi$. Using the definition of the objective functional $J$, we have
	\begin{align*}
		& f_0 (s_0) - J_0(s_0; \pi) \\
		& \qquad =  f_0 (s_0) - \int c(s_{1:T}) \pi(ds_{1:T}|s_0) \\
		& \qquad = \int \big[ f_0 (s_0) - f_1(s_{0:1}) + f_1(s_{0:1}) - \cdots - f_{T-1}(s_{0:T-1}) + f_{T-1}(s_{0:T-1}) \\
		& \qquad \qquad \quad - c(s_{1:T}) \big] \pi(ds_{1:T}|s_0) \\
		& \qquad = f_0(s_0) - \int f_1(s_{0:1}) \pi(ds_1|s_0) + \int \Big[ f_1(s_{0:1}) - \int f_2(s_{0:2}) \pi(ds_2|s_{0:1}) \Big] \pi(ds_1|s_0) + \\
		& \qquad \quad \cdots + \int \Big[ f_t(s_{0:t}) - \int f_{t+1}(s_{0:t+1}) \pi(ds_{t+1}|s_{0:t}) \Big] \pi(ds_{1:t}|s_0) + \\
		& \qquad \quad \cdots +  \int \Big[ f_{T-1}(s_{0:T-1}) - \int c(s_{1:T}) \pi(ds_T|s_{0:T-1}) \Big] \pi(ds_{1:T-1}|s_0).
	\end{align*}
	Since $\pi^{\hat{f}}$ is optimal with $\hat{f}$, we have
	\begin{equation*}
		\int \hat{f}_{t+1} (s_{0:t}, s_{t+1}) \pi^{\hat{f}} (ds_{t+1}| s_{0:t}) = \cT[\hat{f}_{t+1}] (s_{0:t}).
	\end{equation*}
	Thus,
	\begin{align*}
		\hat{f}_0 (s_0) - J_0(s_0; \pi^{\hat{f}}) = \sum^{T-1}_{t=0}  \int \big\{ \hat{f}_t(s_{0:t}) - \cT[\hat{f}_{t+1}] (s_{0:t}) \big\} \pi^{\hat{f}}(ds_{1:t}|s_0).
	\end{align*}
	
	Next, we observe that $\pi^*_C$ is sub-optimal for $\hat{f}_t$. Then
	\begin{align*}
		\int \hat{f}_{t+1} (s_{0:t}, s_{t+1}) \pi^*_C (ds_{t+1}| s_{0:t}) \geq \cT[\hat{f}_{t+1}] (s_{0:t}).
	\end{align*}  
	It yields
	\begin{align*}
		\hat{f}_0 (s_0) - J_0(s_0; \pi^*_C) & = \sum^{T-1}_{t=0} \int \Big[ \hat{f}_t(s_{0:t}) - \int \hat{f}_{t+1}(s_{0:t+1}) \pi^*_C(ds_{t+1}|s_{0:t}) \Big] \pi^*_C(ds_{1:t}|s_0) \\
		& \leq \sum^{T-1}_{t=0} \int \Big[ \hat{f}_t(s_{0:t}) - \cT[\hat{f}_{t+1}] (s_{0:t}) \Big] \pi^*_C(ds_{1:t}|s_0).
	\end{align*}
	We combine these two inequalities and obtain
	\begin{align*}
		& J_0(s_0; \pi^{\hat{f}}) - J_0(s_0; \pi^*_C) = J_0(s_0; \pi^{\hat{f}}) - \hat{f}_0 (s_0) + \hat{f}_0 (s_0) - J_0(s_0; \pi^*_C) \\
		& \qquad \leq - \sum^{T-1}_{t=0}  \int \big\{ \hat{f}_t(s_{0:t}) - \cT[\hat{f}_{t+1}] (s_{0:t}) \big\} \pi^{\hat{f}}(ds_{1:t}|s_0) \\
		& \qquad \quad + \sum^{T-1}_{t=0} \int \Big[ \hat{f}_t(s_{0:t}) - \cT[\hat{f}_{t+1}] (s_{0:t}) \Big] \pi^*_C(ds_{1:t}|s_0) \\
		& \qquad  = \sum^{T-1}_{t=1} \int \Big[ \hat{f}_t(s_{0:t}) - \cT[\hat{f}_{t+1}] (s_{0:t}) \Big] \pi^*_C(ds_{1:t}|s_0) \\
		& \qquad \quad - \sum^{T-1}_{t=1}  \int \big\{ \hat{f}_t(s_{0:t}) - \cT[\hat{f}_{t+1}] (s_{0:t}) \big\} \pi^{\hat{f}}(ds_{1:t}|s_0).
	\end{align*}
	By Cauchy-Schwarz inequality and Assumption \ref{assum:concentr}, 
	\begin{align*}
		& \Big| \sum^{T-1}_{t=1}  \int \big\{ \hat{f}_t(s_{0:t}) - \cT[\hat{f}_{t+1}] (s_{0:t}) \big\} \pi^{\hat{f}}(ds_{1:t}|s_0)  \Big| + \left|\hat{f}_0(s_0) - \int \hat{f}_1(s_{0:1}) \pi^{\hat{f}}(ds_1|s_0)\right|\\
		& \qquad \leq \sum^{T-1}_{t=0} \Big( \int | \hat{f}_t(s_{0:t}) - \cT[\hat{f}_{t+1}] (s_{0:t}) |^2 \beta(ds_{1:t}|s_0) \Big)^{1/2} \Big( \int \left( \frac{\pi^{\hat{f}}(ds_{1:t} | s_0)}{\beta (ds_{1:t} | s_0)} \right)^2 \beta(ds_{1:t} | s_0) \Big)^{1/2} \\
		&\qquad  \leq \sqrt{C} \sum^{T-1}_{t=0} \Big( \int | \hat{f}_t(s_{0:t}) - \cT[\hat{f}_{t+1}] (s_{0:t}) |^2 \beta(ds_{1:t}|s_0) \Big)^{1/2} \\
		& \qquad \leq \sqrt{CT}  \sqrt{ \sum^{T-1}_{t=0} \int | \hat{f}_t(s_{0:t}) - \cT[\hat{f}_{t+1}] (s_{0:t}) |^2 \beta(ds_{1:t}|s_0)},
	\end{align*} 
	where we also used the inequality $\sum^{T-1}_{t=0} \sqrt{a_t} \leq \sqrt{T} \sqrt{\sum^{T-1}_{t=0} a_t}$ for $a_t \geq 0$.
	
	The second term with $\pi^*_C$ can be bounded similarly by noting the condition \eqref{eq:chi2}. By the definition of Bellman error, we obtain the desired result. 
\end{proof}

\begin{proof}[Proof of Lemma \ref{lem:RC}]
	We first consider the Bellman error at a generic time $t$. Suppose $\hat{f}_{t+1}$ is obtained. Define an optimizer $f^*_t$ under the distribution $\beta(ds_{1:t}|s_0)$ as
	\begin{equation}
		f^*_t \in \arg\min_{f_t \in \cF_t}  \int | f_t(s_{0:t}) - \cT[\hat{f}_{t+1}] (s_{0:t}) |^2 \beta(ds_{1:t}|s_0),
	\end{equation}
	where the Bellman operator $\cT[\hat{f}_{t+1}]$ relies on the true conditional measures, instead of the empirical measures.
	
	Assumption \ref{assum:bound_K} imposes that functions are bounded by $K$. Then the loss function satisfies
	$$\big| \cL (f_t(s_{0:t}), \cT[\hat{f}_{t+1}](s_{0:t})) - \cL (f^*_t(s_{0:t}), \cT[\hat{f}_{t+1}](s_{0:t})) \big| \leq 4K^2,$$
	for $f_t \in \cF_t$. 
	
	By Lemma \ref{lem:mohri} based on \citet[Theorem 3.3]{mohri2018foundations}, with probability at least $1-\delta$ over the draw of an i.i.d. sample $\{ s^{n_t}_{0:t} \}$ of size $N$, the following inequality holds for any $f_t \in \cF_t$:
	\begin{align}
		& \int \cL (f_t(s_{0:t}), \cT[\hat{f}_{t+1}](s_{0:t})) \beta(ds_{1:t}|s_0) - \int \cL (f^*_t(s_{0:t}), \cT[\hat{f}_{t+1}](s_{0:t})) \beta(ds_{1:t}|s_0) \nonumber \\
		& \qquad \leq \frac{1}{N} \sum^N_{n_t = 1} \cL (f_t(s^{n_t}_{0:t}), \cT[\hat{f}_{t+1}](s^{n_t}_{0:t})) - \frac{1}{N} \sum^N_{n_t = 1} \cL (f^*_t(s^{n_t}_{0:t}), \cT[\hat{f}_{t+1}](s^{n_t}_{0:t})) \label{ineq:Rad1} \\
		& \qquad \quad + 2 \E \cR_N \Big\{ \cL (f_t(s_{0:t}), \cT[\hat{f}_{t+1}](s_{0:t})) - \cL (f^*_t(s_{0:t}), \cT[\hat{f}_{t+1}](s_{0:t})) \Big| f_t \in \cF_t \Big\} \nonumber \\
		& \qquad \quad + 4 K^2 \sqrt{\frac{2\ln(1/\delta)}{N}}. \nonumber
	\end{align}
	Next, we bound the right-hand side separately. 
	
	By Assumption \ref{assum:Lip} and Lemma \ref{lem:Lip_shadow}, with a given path $s^{n_t}_{0:t}$ and any $f_{t+1} \in \cF_{t+1}$, we have the following bound between empirical and true Bellman operators:
	\begin{align*}
		|\cT[f_{t+1}](s^{n_t}_{0:t}) - \widehat{\cT}_B[f_{t+1}](s^{n_t}_{0:t})| \leq L \Delta(s^{n_t}_{0:t}; B).
	\end{align*} 
	In addition, the squared loss satisfies
	\begin{align}
		& \Big| \cL (f_t, \widehat{\cT}_B[\hat{f}_{t+1}]) - \cL (f_t, \cT[\hat{f}_{t+1}]) \Big| = \Big| \widehat{\cT}^2_B[\hat{f}_{t+1}] - \cT^2[\hat{f}_{t+1}] - 2 f_t \widehat{\cT}_B[\hat{f}_{t+1}]  + 2 f_t \cT[\hat{f}_{t+1}]  \Big| \nonumber \\
		& \qquad \leq \Big| \widehat{\cT}_B[\hat{f}_{t+1}] - \cT[\hat{f}_{t+1}] \Big| \Big(|\widehat{\cT}_B[\hat{f}_{t+1}]| + |\cT[\hat{f}_{t+1}]| + 2 |f_t| \Big). \label{eq:Lip-loss}
	\end{align}
	Under a given path $s^{n_t}_{0:t}$, we can bound the empirical Bellman operator by
	\begin{align*}
		& |\widehat{\cT}_B[\hat{f}_{t+1}]| = |\widehat{\cT}_B[\hat{f}_{t+1}] - \cT[\hat{f}_{t+1}] + \cT[\hat{f}_{t+1}]| \leq |\widehat{\cT}_B[\hat{f}_{t+1}] - \cT[\hat{f}_{t+1}]| + |\cT[\hat{f}_{t+1}]| \\
		& \qquad \leq L \Delta(s^{n_t}_{0:t}; B) + |\cT[\hat{f}_{t+1}]|.
	\end{align*}
	Therefore,
	\begin{align}
		& \Big| \cL (f_t, \widehat{\cT}_B[\hat{f}_{t+1}]) - \cL (f_t, \cT[\hat{f}_{t+1}]) \Big| \leq L \Delta(s^{n_t}_{0:t}; B) \times (4 K + L \Delta(s^{n_t}_{0:t}; B)). \label{ineq:loss_bound}
	\end{align}
	It implies that
	\begin{align*}
		& \left| \frac{1}{N} \sum^N_{n_t = 1} \cL (f_t(s^{n_t}_{0:t}), \widehat{\cT}_B[\hat{f}_{t+1}](s^{n_t}_{0:t})) - \frac{1}{N} \sum^N_{n_t = 1} \cL (f_t(s^{n_t}_{0:t}), \cT[\hat{f}_{t+1}](s^{n_t}_{0:t})) \right| \\
		& \qquad \leq \frac{1}{N} \sum^N_{n_t = 1} L \Delta(s^{n_t}_{0:t}; B) \times (4 K + L \Delta(s^{n_t}_{0:t}; B)).
	\end{align*}
	Hence, the first two terms on the right-hand side of \eqref{ineq:Rad1} satisfy
	\begin{align*}
		& \frac{1}{N} \sum^N_{n_t = 1} \cL (f_t(s^{n_t}_{0:t}), \cT[\hat{f}_{t+1}](s^{n_t}_{0:t})) - \frac{1}{N} \sum^N_{n_t = 1} \cL (f^*_t(s^{n_t}_{0:t}), \cT[\hat{f}_{t+1}](s^{n_t}_{0:t})) \\
		& \qquad \leq \frac{1}{N} \sum^N_{n_t = 1} \cL (f_t(s^{n_t}_{0:t}), \widehat{\cT}_B[\hat{f}_{t+1}](s^{n_t}_{0:t})) - \frac{1}{N} \sum^N_{n_t = 1} \cL (f^*_t(s^{n_t}_{0:t}), \widehat{\cT}_B[\hat{f}_{t+1}](s^{n_t}_{0:t})) \\
		&\qquad \quad + \frac{2}{N} \sum^N_{n_t = 1} L \Delta(s^{n_t}_{0:t}; B) \times (4 K + L \Delta(s^{n_t}_{0:t}; B)).
	\end{align*}
	
	To simplify the Rademacher complexity in \eqref{ineq:Rad1}, we first note that
	\begin{align*}
		& \E \cR_N \Big\{ \cL (f_t(s_{0:t}), \cT[\hat{f}_{t+1}](s_{0:t})) - \cL (f^*_t(s_{0:t}), \cT[\hat{f}_{t+1}](s_{0:t})) \Big| f_t \in \cF_t \Big\} \\
		& \qquad = \E \cR_N \Big\{ \cL (f_t(s_{0:t}), \cT[\hat{f}_{t+1}](s_{0:t})) \Big| f_t \in \cF_t \Big\},
	\end{align*}
	thanks to the symmetry of Rademacher random variables. Moreover, we can prove that the squared loss $\cL (f_t(s_{0:t}), \cT[\hat{f}_{t+1}](s_{0:t}))$ is Lipschitz in the first argument $f_t(s_{0:t})$ with a Lipschitz constant of $4K$. The proof is similar to \eqref{eq:Lip-loss}. Then the contraction property of Rademacher complexity, as described in \citet[Lemma 26.9]{shalev2014understanding} or Lemma \ref{lem:contraction}, yields
	\begin{align*}
		\E \cR_N \Big\{ \cL (f_t(s_{0:t}), \cT[\hat{f}_{t+1}](s_{0:t})) \Big| f_t \in \cF_t \Big\} \leq 4K \E\cR_N \{ f_t | f_t \in \cF_t \} =: 4K \E\cR_N \cF_t.
	\end{align*}
	
	As a result, the inequality \eqref{ineq:Rad1} becomes
	\begin{align*}
		& \int \cL (f_t(s_{0:t}), \cT[\hat{f}_{t+1}](s_{0:t})) \beta(ds_{1:t}|s_0) - \int \cL (f^*_t(s_{0:t}), \cT[\hat{f}_{t+1}](s_{0:t})) \beta(ds_{1:t}|s_0) \\
		& \qquad \leq \frac{1}{N} \sum^N_{n_t = 1} \cL (f_t(s^{n_t}_{0:t}), \widehat{\cT}_B[\hat{f}_{t+1}](s^{n_t}_{0:t})) - \frac{1}{N} \sum^N_{n_t = 1} \cL (f^*_t(s^{n_t}_{0:t}), \widehat{\cT}_B[\hat{f}_{t+1}](s^{n_t}_{0:t})) \\
		&\qquad \quad + \frac{2}{N} \sum^N_{n_t = 1} L \Delta(s^{n_t}_{0:t}; B) \times (4 K + L \Delta(s^{n_t}_{0:t}; B)) + 8K \E\cR_N \cF_t + 4 K^2 \sqrt{\frac{2\ln(1/\delta)}{N}}.
	\end{align*}
	We take $f_t = \hat{f}_t$. By definition, $\hat{f}_t$ is an optimizer for the empirical loss \eqref{eq:emp_loss}. Then
	\begin{align*}
		\frac{1}{N} \sum^N_{n_t = 1} \cL (\hat{f}_t(s^{n_t}_{0:t}), \widehat{\cT}_B[\hat{f}_{t+1}](s^{n_t}_{0:t})) - \frac{1}{N} \sum^N_{n_t = 1} \cL (f^*_t(s^{n_t}_{0:t}), \widehat{\cT}_B[\hat{f}_{t+1}](s^{n_t}_{0:t}))  \leq 0.
	\end{align*}
	Besides, with Assumption \ref{assum:complete}, we have
	\begin{align*}
		\int \cL (f^*_t(s_{0:t}), \cT[\hat{f}_{t+1}](s_{0:t})) \beta(ds_{1:t}|s_0) \leq \zeta.
	\end{align*}
	Hence, the inequality \eqref{ineq:Rad1} becomes, with probability at least $1 - \delta$,
	\begin{align}
		& \int \cL (\hat{f}_t(s_{0:t}), \cT[\hat{f}_{t+1}](s_{0:t})) \beta(ds_{1:t}|s_0) \label{ineq:temp1}\\
		& \qquad \leq \zeta + \frac{2}{N} \sum^N_{n_t = 1} L \Delta(s^{n_t}_{0:t}; B) \times (4 K + L \Delta(s^{n_t}_{0:t}; B)) + 8K \E\cR_N \cF_t + 4 K^2 \sqrt{\frac{2\ln(1/\delta)}{N}}. \nonumber
	\end{align}
	
	For the Bellman error $\cB(\hat{f})$, we apply the union bound. With probability at least $1 - \delta$, 
	\begin{align*}
		\cB(\hat{f}) \leq & T \zeta + \frac{2 L^2}{N} \sum^{T-1}_{t=0} \sum^N_{n_t = 1} \Delta^2(s^{n_t}_{0:t}; B) + \frac{8 L K}{N} \sum^{T-1}_{t=0} \sum^N_{n_t = 1} \Delta(s^{n_t}_{0:t}; B) \\
		& + 8 K \sum^{T-1}_{t=0} \E \cR_N \cF_t  + 4 T K^2 \sqrt{\frac{2 \ln(T/\delta)}{N}}.
	\end{align*}
\end{proof}

\begin{proof}[Proof of Theorem \ref{thm:light_RC}]
	By Assumption \ref{assum:light} on the dynamics of $\mu$ and $\nu$, we have
	\begin{align*}
		& W^p_p(\mu(dx_{t+1}| x^{n_t}_{0:t}), \hat{\mu}_B(dx_{t+1}| x^{n_t}_{0:t})) = W^p_p(\lambda(dx_{t+1}| x^{n_t}_{0:t}), \hat{\lambda}_B(dx_{t+1}| x^{n_t}_{0:t})), \\
		& W^p_p(\nu(dy_{t+1}| y^{n_t}_{0:t}), \hat{\nu}_B(dy_{t+1}| y^{n_t}_{0:t})) = W^p_p(\eta(dy_{t+1}| y^{n_t}_{0:t}), \hat{\eta}_B(dy_{t+1}| y^{n_t}_{0:t})).
	\end{align*}
	
	Conditioned on $s^{n_t}_{0:t}$, with the uniform moment bound on $\eta$ and $\lambda$ in Assumption \ref{assum:light}, \citet[Theorem 2]{fournier2015rate} guarantees that with probability at least $1 - \delta_1$, 
	\begin{align*}
		& W^p_p(\lambda(dx_{t+1}| x^{n_t}_{0:t}), \hat{\lambda}_B(dx_{t+1}| x^{n_t}_{0:t}))  \leq R(\delta_1, B).
	\end{align*}
	Similarly, with probability at least $1 - \delta_2$,
	\begin{align*}
		W^p_p(\eta(dy_{t+1}| y^{n_t}_{0:t}), \hat{\eta}_B(dy_{t+1}| y^{n_t}_{0:t})) \leq R(\delta_2, B).
	\end{align*}
	For a generic time $t$, the inequality \eqref{ineq:temp1} holds with probability at least $1 - \delta$. In addition, \eqref{ineq:temp1} relies on $N$ samples of $s^{n_t}_{0:t}$. Hence, we apply the union bound by replacing $\delta$ with $\delta/(2N+1)$ and obtain  
	\begin{align*}
		& \int \cL (\hat{f}_t(s_{0:t}), \cT[\hat{f}_{t+1}](s_{0:t})) \beta(ds_{1:t}|s_0) \\
		& \qquad \leq \zeta + 2 L^2 \cdot 2^{2/p} R\left(\frac{\delta}{2N+1}, B \right)^{2/p} + 8LK \cdot 2^{1/p} R\left(\frac{\delta}{2N+1}, B \right)^{1/p} \\
		&\qquad \quad + 8K \E\cR_N \cF_t + 4 K^2 \sqrt{\frac{2\ln((2N+1)/\delta)}{N}},
	\end{align*}
	with probability at least $1 - \delta$.
	
	The bound on $\cB(\hat{f})$ follows by applying the union bound again and replacing $\delta$ with $\delta/T$. 
\end{proof}

\begin{proof}[Proof of Corollary \ref{cor:entropy_RC}]
	We denote $C$ as a constant that may vary line by line.
	
	With compact domains and the Lipschitz property, for any $f_{t+1}$, \citet[Theorem 1]{genevay2019sample} proved that
	\begin{align*}
		0 \leq \cT_\varepsilon[f_{t+1}](s^{n_t}_{0:t}) - \cT [f_{t+1}](s^{n_t}_{0:t}) \leq C \varepsilon \ln (C/\varepsilon),
	\end{align*}
	where $C$ is independent of the states $s^{n_t}_{0:t}$.
	
	The $\cC^\infty$ smooth property of $f_t$ and the boundedness of derivatives are used in \citet[Theorems 2 and 3]{genevay2019sample}. Building upon these results, \citet[Corollary 1]{genevay2019sample} proved that, with probability at least $1 - \delta$, 
	\begin{align*}
		& | \cT_\varepsilon[f_{t+1}](s^{n_t}_{0:t}) - \widehat{\cT}_{\varepsilon, B}[f_{t+1}](s^{n_t}_{0:t}) | \\
		& \qquad \leq 6 \left( 1 + e^{C/\varepsilon} \right) \max\{1, \varepsilon^{-d/2}\}  \times C/\sqrt{B} + C(1 + \varepsilon e^{C/\varepsilon}) \sqrt{\frac{\ln(1/\delta)}{B}} \\
		& \qquad \leq \frac{C}{\sqrt{B}} \left( 1 + \max\{1, \varepsilon \} e^{C/\varepsilon}  \right) \left( \max\{1, \varepsilon^{-d/2}\} + \sqrt{\ln\left(1/\delta\right)} \right).
	\end{align*}
	Therefore, with probability at least $1 - \delta$, we have
	\begin{align*}
		| \cT[f_{t+1}](s^{n_t}_{0:t}) - \widehat{\cT}_{\varepsilon, B}[f_{t+1}](s^{n_t}_{0:t}) | \leq R(\delta, B; \varepsilon).
	\end{align*}
	Similar to \eqref{ineq:loss_bound}, we can show 
	\begin{align}
		& \Big| \cL (f_t, \widehat{\cT}_{\varepsilon, B}[\hat{f}_{t+1}]) - \cL (f_t, \cT[\hat{f}_{t+1}]) \Big| \leq R(\delta, B; \varepsilon) \times (4 K + R(\delta, B; \varepsilon)).
	\end{align}
	The remaining proof is almost the same as in Lemma \ref{lem:RC} and Theorem \ref{thm:light_RC}.
\end{proof}

\begin{proof}[Proof of Lemma \ref{lem:LRC}]
	The main idea is to apply \citet[Theorem 3.3]{bartlett2005local} with a properly chosen function class. At time $t$ when $\hat{f}_{t+1}$ is obtained, we consider the function class as
	\begin{equation*}
		\widetilde{\cF}_t := \Big\{ \cL (f_t(s_{0:t}), \cT[\hat{f}_{t+1}](s_{0:t})) - \cL (f^*_t(s_{0:t}), \cT[\hat{f}_{t+1}](s_{0:t})) \Big| f_t \in \cF_t \Big\}.
	\end{equation*}
	We verify the assumptions in \citet[Theorem 3.3]{bartlett2005local}. First, we need to find a functional $\cM: \widetilde{\cF}_t \rightarrow \R^+$ and a constant $R_c$ such that for every $g \in \widetilde{\cF}_t$, $\text{Var}[g] \leq \cM(g) \leq R_c \E[g]$. The expectation and the variance are taken under $\beta(ds_{1:t}|s_0)$. Note that
	\begin{align*}
		& \text{Var} \left[ \cL (f_t(s_{0:t}), \cT[\hat{f}_{t+1}](s_{0:t})) - \cL(f^*_t(s_{0:t}), \cT[\hat{f}_{t+1}](s_{0:t}))\right] \\
		& \qquad \leq \E \left[ \left( \cL (f_t(s_{0:t}), \cT[\hat{f}_{t+1}](s_{0:t})) - \cL(f^*_t(s_{0:t}), \cT[\hat{f}_{t+1}](s_{0:t})) \right)^2\right] \\
		& \qquad \leq 16 K^2 \E \left[ \left(f_t(s_{0:t}) - f^*_t(s_{0:t}) \right)^2\right].
	\end{align*}
	The second inequality follows from the fact that $\cL$ is Lipschitz on the first argument with constant $4K$, where Assumption \ref{assum:bound_K} is used. Moreover, the quadratic loss has the following property:
	\begin{align*}
		& \frac{\E \left[\cL (f_t(s_{0:t}), \cT[\hat{f}_{t+1}](s_{0:t})) \right] + \E\left[\cL(f^*_t(s_{0:t}), \cT[\hat{f}_{t+1}](s_{0:t}))\right]}{2} \\
		& \qquad = \E \left[ \left(   \frac{f_t(s_{0:t}) + f^*_t(s_{0:t})}{2}   -   \cT[\hat{f}_{t+1}](s_{0:t}) \right)^2\right] + \frac{ \E \left[ \left(f_t(s_{0:t}) - f^*_t(s_{0:t}) \right)^2\right]}{4} \\
		& \qquad \geq \E \left[ \left(  f^*_t(s_{0:t})   -   \cT[\hat{f}_{t+1}](s_{0:t}) \right)^2\right] + \frac{ \E \left[ \left(f_t(s_{0:t}) - f^*_t(s_{0:t}) \right)^2\right]}{4}.
	\end{align*}
	The second inequality holds because $f^*_t$ is an optimizer. It implies that
	\begin{align*}
		& \E \left[\cL (f_t(s_{0:t}), \cT[\hat{f}_{t+1}](s_{0:t})) \right] - \E\left[\cL(f^*_t(s_{0:t}), \cT[\hat{f}_{t+1}](s_{0:t}))\right] \\
		& \qquad \geq \frac{ \E \left[ \left(f_t(s_{0:t}) - f^*_t(s_{0:t}) \right)^2\right]}{2}.
	\end{align*}
	Therefore,
	\begin{align*}
		& \text{Var} \left[ \cL (f_t(s_{0:t}), \cT[\hat{f}_{t+1}](s_{0:t})) - \cL(f^*_t(s_{0:t}), \cT[\hat{f}_{t+1}](s_{0:t}))\right] \\
		&\qquad \leq 32 K^2 \E \left[\cL (f_t(s_{0:t}), \cT[\hat{f}_{t+1}](s_{0:t})) - \cL(f^*_t(s_{0:t}), \cT[\hat{f}_{t+1}](s_{0:t}))\right].
	\end{align*}
	Then we can choose the constant $R_c = 32 K^2$ and the functional $\cM$ as
	\begin{equation*}
		\cM(f) := 16 K^2 \E \left[ \left(f_t(s_{0:t}) - f^*_t(s_{0:t}) \right)^2\right].
	\end{equation*}
	By \citet[Theorem 3.3]{bartlett2005local}, we assume there exists a sub-root function $\varphi_t$ with the fixed point $r^*_\varphi$. For any $r \geq r^*_\varphi$, $\varphi_t$ satisfies
	\begin{equation*}
		\varphi_t(r) \geq 32 K^2 \E\cR_N \Big\{ \cL (f_t(s_{0:t}), \cT[\hat{f}_{t+1}](s_{0:t})) - \cL (f^*_t(s_{0:t}), \cT[\hat{f}_{t+1}](s_{0:t})) \Big| f_t \in \cF_t, \; \cM(f_t) \leq r  \Big\}.
	\end{equation*}
	We can simplify the right-hand side with the contraction property of Rademacher complexity:
	\begin{align*}
		& \E\cR_N \Big\{ \cL (f_t(s_{0:t}), \cT[\hat{f}_{t+1}](s_{0:t})) - \cL (f^*_t(s_{0:t}), \cT[\hat{f}_{t+1}](s_{0:t})) \Big| f_t \in \cF_t, \; \cM(f_t) \leq r  \Big\} \\
		& \qquad \leq 4K \E\cR_N \Big\{ f_t(s_{0:t}) - f^*_t(s_{0:t}) \Big| f_t \in \cF_t, \; \int \left(f_t(s_{0:t}) - f^*_t(s_{0:t}) \right)^2 \beta(ds_{1:t}| s_0) \leq \frac{r}{16K^2}  \Big\}.
	\end{align*}
	Hence, we can adopt a simpler sub-root function $\psi_t(r)$ such that
	\begin{align*}
		\psi_t(r) \geq  \E\cR_N \Big\{ f_t(s_{0:t}) - f^*_t(s_{0:t}) \Big| f_t \in \cF_t, \; \int \left(f_t(s_{0:t}) - f^*_t(s_{0:t}) \right)^2 \beta(ds_{1:t}| s_0) \leq r  \Big\}.
	\end{align*}
	It is the sub-root function defined in Assumption \ref{assum:sub-root}. Moreover, we can set $\varphi_t(r) = 128 K^3 \psi_t(\frac{r}{16K^2})$. If we denote the fixed point of $\psi_t(r)$ as $r^*_t$, then $r^*_\varphi \leq 1024 K^4 r^*_t$ by Lemma \ref{lem:subroot} from \citet[Lemma G.5]{duan2021risk} using the definition of sub-root functions.
	
	Now we can apply \citet[Theorem 3.3]{bartlett2005local} to the function class $\widetilde{\cF}_t$ with the sub-root function $\varphi_t(r)$. Denote $c_1 = 704$ and $c_2 = 26$. For any $\theta > 1$ and $\delta \in (0, 1)$, with probability at least $1-\delta$ over the draw of an i.i.d. sample $\{ s^{n_t}_{0:t} \}$ of size $N$, the following inequality holds for any $f_t \in \cF_t$:
	\begin{align}
		& \int \cL (f_t(s_{0:t}), \cT[\hat{f}_{t+1}](s_{0:t})) \beta(ds_{1:t}|s_0) - \int \cL (f^*_t(s_{0:t}), \cT[\hat{f}_{t+1}](s_{0:t})) \beta(ds_{1:t}|s_0) \nonumber \\
		& \qquad \leq \frac{\theta}{\theta - 1} \left( \frac{1}{N} \sum^N_{n_t = 1} \cL (f_t(s^{n_t}_{0:t}), \cT[\hat{f}_{t+1}](s^{n_t}_{0:t})) - \frac{1}{N} \sum^N_{n_t = 1} \cL (f^*_t(s^{n_t}_{0:t}), \cT[\hat{f}_{t+1}](s^{n_t}_{0:t})) \right) \label{ineq:LRC1} \\
		&\qquad  \quad + \frac{c_1 \theta}{R_c} r^*_\varphi + \frac{11 (b - a) + R_c  \theta c_2}{N} \ln(1/\delta), \nonumber
	\end{align}
	where $[a, b]$ is the range of functions in $\widetilde{\cF}_t$. We can simplify the right-hand side using the following facts:
	\begin{enumerate}[label={(\arabic*)}]
		\item Similar to Lemma \ref{lem:RC}, we can use the inequality \eqref{ineq:loss_bound} and the fact that $\hat{f}_t$ is an optimizer of the empirical loss to simplify the first two terms on the right-hand side;
		\item We recall the constants used are given by $c_1 = 704$, $c_2=26$, $R_c = 32 K^2$, $a = - 4K^2$, $b = 4K^2$, $r^*_\varphi \leq 1024 K^4 r^*_t$.
	\end{enumerate}
	We obtain
	\begin{align}
		& \int \cL (\hat{f}_t(s_{0:t}), \cT[\hat{f}_{t+1}](s_{0:t})) \beta(ds_{1:t}|s_0) \nonumber \\
		& \qquad \leq \zeta + \frac{\theta}{\theta - 1} \left( 2 \frac{L^2}{N} \sum^N_{n_t = 1} \Delta^2(s^{n_t}_{0:t}; B) + 8 \frac{L K}{N}  \sum^N_{n_t = 1} \Delta(s^{n_t}_{0:t}; B) \right) \label{ineq:LRC2} \\
		& \qquad \quad + 22528 \theta K^2 r^*_t + \frac{(88 + 832 \theta ) K^2}{N} \ln(1/\delta). \nonumber
	\end{align}
	Then the result follows with the union bound. 
\end{proof}

\section{Proofs of results in Section \ref{sec:NN}}

\begin{proof}[Proof of Proposition \ref{prop:radius_LRC}]
	We first fix $N$ sample paths as $s^1_{0:t}, ..., s^N_{0:t}$ and consider the corresponding empirical measure $\beta_N$. Denote $\cG := \cF_t - \cF_t = \{f_1 - f_2: f_1, f_2 \in \cF_t \}$. We omit the time subscript in the following proof since it is clear.
	
	We pick a small $\varepsilon > 0$ satisfying Proposition \ref{prop:cover_ineq}. Then the empirical LRC satisfies
	\begin{align}
		& \E_\sigma \cR_N \left\{ \tilde{g}: \tilde{g} \in \cG - \cG, \; \|\tilde{g} \|_{L_2(\beta_N)} \leq \varepsilon \right\} \nonumber \\
		& \qquad \leq  C \sum^\infty_{k=1} \frac{\varepsilon}{2^{k-1}} \sqrt{\frac{\ln \cN(\varepsilon/2^k, \cG - \cG, \| \cdot \|_{L_2(\beta_N)} )}{N}} \label{ineq1}\\
		& \qquad \leq C \sum^\infty_{k=1} \frac{\varepsilon}{2^{k-1}} \sqrt{\frac{2 \ln \cN(\varepsilon/2^{k+1}, \cG, \| \cdot \|_{L_2(\beta_N)} )}{N}} \label{ineq2}\\
		&\qquad \leq C \sum^\infty_{k=1} \frac{\varepsilon}{2^{k-1}} \sqrt{\frac{4 \ln \cN(\varepsilon/2^{k+2}, \cF_t, \| \cdot \|_{L_2(\beta_N)} )}{N}} \label{ineq3} \\
		& \qquad \leq C \sum^\infty_{k=1} \frac{\varepsilon}{2^{k-1}} \sqrt{\frac{4 \ln \cN_\infty(\varepsilon/2^{k+2}, \cF_t)}{N}} \label{ineq4} \\
		&\qquad \leq \frac{C}{\sqrt{N}} \sum^\infty_{k=1} \frac{\varepsilon}{2^{k-1}} \sqrt{C_{NN} \left( 1 + \ln\left( \frac{2^{k+2}}{\varepsilon} \right) \right)} \label{ineq5} \\
		&\qquad \lesssim \frac{\varepsilon}{\sqrt{N}} (1 + \sqrt{\ln(1/\varepsilon)}), \label{ineq6}
	\end{align}
	with an absolute constant $C>0$. The first inequality \eqref{ineq1} follows from the proof of Dudley entropy integral, see \citet[Theorem 5.22]{wainwright2019high} or \citet[Lemma A.5]{lei2016local}. The second and the third inequalities \eqref{ineq2} -- \eqref{ineq3} use Lemma \ref{lem:diff_cover}. The fourth inequality \eqref{ineq4} is due to Lemma \ref{lem:inf_cover}. The fifth inequality \eqref{ineq5} is from Proposition \ref{prop:cover_ineq}. The last inequality \eqref{ineq6} is due to $\sum^\infty_{k=1} \frac{\sqrt{k}}{2^{k-1}} < \infty$ and $\sqrt{a+b} \leq \sqrt{a} + \sqrt{b}$, $a, b \geq 0$. Since the right-hand side does not depend on the sample $s^1_{0:t}, ..., s^N_{0:t}$, we have
	\begin{align*}
		& \E \cR_N \left\{ \tilde{g}: \tilde{g} \in \cG - \cG, \; \|\tilde{g}\|_{L_2(\beta_N)} \leq \varepsilon \right\} = \E\E_\sigma \cR_N \left\{ \tilde{g}: \tilde{g} \in \cG - \cG, \; \|\tilde{g}\|_{L_2(\beta_N)} \leq \varepsilon \right\} \\
		&\qquad \lesssim \frac{\varepsilon}{\sqrt{N}} (1 + \sqrt{\ln(1/\varepsilon)}).
	\end{align*}
	
	By \citet[Theorem 1]{lei2016local}, we have
	\begin{align*}
		& \E \cR_N \left\{ g: g \in \cG, \; \int g^2 d\beta \leq r \right\} \\
		& \qquad \leq 2 \E \cR_N \left\{ \tilde{g}: \tilde{g} \in \cG - \cG, \; \|\tilde{g}\|_{L_2(\beta_N)} \leq \varepsilon \right\} + \frac{8 K_t \ln \cN_2(\varepsilon/2, \cG)}{N} + \sqrt{\frac{2r \ln \cN_2(\varepsilon/2, \cG)}{N}} \\
		& \qquad \lesssim \frac{\varepsilon}{\sqrt{N}} (1 + \sqrt{\ln(1/\varepsilon)}) +  \frac{1 + \ln(1/\varepsilon)}{N} + \sqrt{ \frac{r(1 + \ln(1/\varepsilon) )}{N}} \\
		& \qquad  \lesssim \frac{\ln N}{N} + \sqrt{ \frac{\ln N}{N} r},
	\end{align*}	
	where we set $\varepsilon \propto 1/\sqrt{N}$ in the last inequality. 
	Therefore, we can choose $\psi_t(r)  = C (\ln(N)/N + \sqrt{r \ln(N)/N} )$ with a large enough constant $C$ independent of $r$. $\psi_t(r)$ is a sub-root function. Solving $\psi_t(r) = r$ shows that the fixed point satisfies $r^*_t \lesssim \frac{\ln(N)}{N}$.
	
\end{proof}

\begin{proof}[Proof of Proposition \ref{prop:Holder}]
	The first step is in the same spirit as Lemma \ref{lem:Lip_shadow}. Fix two states $s_{0:t}, s'_{0:t} \in \cS_{0:t}$. Denote an optimizer of $\cT[f_{t+1}](s_{0:t})$ as $\pi^*(ds_{t+1}|s_{0:t})$. Consider a coupling with disintegration $\mu(dx_{t+1}| x_{0:t}) \otimes \kappa^\mu $ attaining $W_p(\mu(dx_{t+1}| x_{0:t}), \mu(dx_{t+1}| x'_{0:t}))$ and another coupling with disintegration $\nu(dy_{t+1}| y_{0:t}) \otimes \kappa^\nu$ attaining $W_p(\nu(dy_{t+1}| y_{0:t}), \nu(dy_{t+1}| y'_{0:t}))$. Consider $S(\pi^*)$ as the $(\kappa^\mu, \kappa^\nu)$-shadow of $\pi^*(ds_{t+1}|s_{0:t})$. Then $S(\pi^*) \in \Pi(\mu^t, \nu^t, s'_{0, t})$. We obtain
	\begin{align*}
		& \cT[f_{t+1}](s'_{0:t}) - \cT[f_{t+1}](s_{0:t}) \\
		& \qquad \leq \int f_{t+1}(s'_{0:t}, s_{t+1}) S(\pi^*)(ds_{t+1}|s'_{0:t}) - \int f_{t+1} (s_{0:t}, s_{t+1}) \pi^*(ds_{t+1}|s_{0:t}) \\
		& \qquad \leq L_{t+1} W_p (\pi^*(ds_{t+1}| s_{0:t}), S(\pi^*)(ds_{t+1}| s'_{0:t})) +  L_{t+1} \|s_{0:t} - s'_{0:t} \|^\alpha_\infty \\
		& \qquad \leq L_{t+1} \Delta(s_{0:t}, s'_{0:t})  +  L_{t+1} \|s_{0:t} - s'_{0:t} \|^\alpha_\infty.
	\end{align*}  
	The second inequality holds due to Assumption \ref{assum:cond_Lip}. The last inequality is from Lemma \ref{lem:shadow}. The other side can be proved similarly. Then
	\begin{align*}
		| \cT[f_{t+1}](s_{0:t}) - \cT[f_{t+1}](s'_{0:t}) | \leq L_{t+1} \Delta(s_{0:t}, s'_{0:t})  +  L_{t+1} \|s_{0:t} - s'_{0:t} \|^\alpha_\infty.
	\end{align*}
	The inequality \eqref{ineq:Holder} yields
	\begin{align*}
		& \sup_{s_{0:t}, s'_{0:t} \in \cS_{0:t}, \; s_{0:t} \neq s'_{0:t}}  \frac{	| \cT[f_{t+1}](s_{0:t}) - \cT[f_{t+1}](s'_{0:t}) |}{\|s_{0:t} - s'_{0:t}\|^\alpha_\infty} \\
		& \qquad \leq L_{t+1} \times (H_t - K_{t+1} - L_{t+1})/L_{t+1} + L_{t+1} \\
		& \qquad = H_t - K_{t+1}.
	\end{align*}
	As $\| \cT[f_{t+1}](s_{0:t}) \|_\infty \leq K_{t+1}$ with the boundedness assumption, it follows that $\cT[f_{t+1}] \in \cH^\alpha (\cS_{0:t}, H_t)$ by the Definition \ref{def:Holder}.
\end{proof}

\begin{proof}[Proof of Proposition \ref{prop:ReLU_complete}]
	We prove the claims by backward induction. 
	
	At time $T-1$, since $\cF_T$ only contains the cost function $c(s_{1:T})$ and conditions (2) and (4) hold, then Proposition \ref{prop:Holder} shows that $\cT[f_T] \in \cH^\alpha (\cS_{0:T-1}, H_{T-1})$ with a sufficiently large $H_{T-1} > K_T + L_T$. Hence, we can apply  \citet[Theorem 5]{schmidt2020}: There exists a sparse ReLU network $\tilde{f} \in \cF(D_{T-1}, \{ q^{T-1}_j\}^{D_{T-1}+1}_{j=0}, \gamma_{T-1}, \infty)$ with parameters defined in (b), such that
	\begin{align*}
		\| \tilde f - \cT[f_T] \|_\infty & \leq  (2H_{T-1} + 1)(1+ d^2_{T-1} + \alpha^2) 6^{d_{T-1}} G_{T-1} 2^{-m_{T-1}} + H_{T-1} 3^\alpha G_{T-1}^{-\frac{\alpha}{d_{T-1}}} \\
		& := K_{T-1} - K_{T}.
	\end{align*}
	Since $\| \cT[f_T] \|_\infty \leq K_T$, then we can restrict the function class $\cF_{T-1}$ to be bounded by $K_{T-1}$. Therefore,
	\begin{align*}
		& \sup_{f_T \in \cF_T} \inf_{f_{T-1} \in \cF_{T-1}} \int \big( f_{T-1}(s_{0:T-1}) - \cT[f_T](s_{0:T-1}) \big)^2 \beta(ds_{1:T-1}| s_0) \\
		& \qquad \leq \sup_{f_{T} \in \cF_{T}} \inf_{f_{T-1} \in \cF_{T-1}} \| f_{T-1}(s_{0:T-1}) - \cT[f_T](s_{0:T-1}) \|^2_\infty \\
		& \qquad \leq  (K_{T-1} - K_T)^2 := \zeta_{T-1}.
	\end{align*}
	Then claim (c) holds with $\zeta_{T-1}$. 
	
	At time $T-2$, we first observe that a ReLU network is a composition of Lipschitz functions and thus is also Lipschitz. Furthermore, under condition (1), the domain is compact. Then Assumption \ref{assum:cond_Lip} holds for functions in $\cF_{T-1}$ with a sufficiently large $L_{T-1}$, since the inputs and parameters of sparse ReLU networks are uniformly bounded. In Condition (4), we can choose $H_{T-2} > L_{T-1} + K_{T-1}$. It is important to note that $L_{T-1}$ and $K_{T-1}$ are determined by the parameters from the previous step $T$. Proposition \ref{prop:Holder} shows that $\cT[f_{T-1}] \in \cH^\alpha (\cS_{0:T-2}, H_{T-2})$ for $f_{T-1} \in \cF_{T-1}$. The remaining proof repeats the previous arguments. Hence, we obtain the results for $t=T-2, ..., 0$ in a backward manner.
\end{proof}

\begin{proof}[Proof of Proposition \ref{prop:sigmoid}]
	We prove the results backwardly in time $t$. 
	
	At time $T-1$, we first note that $\cF_T$ only contains the cost function $c(s_{1:T})$. Under conditions (2) and (4), we have $\cT[f_{T}] \in \cH^\alpha(\cS_{0:T-1}, H_{T-1})$ by Proposition \ref{prop:Holder}, where the constant $H_{T-1} > L_T + K_T$. By Lemma \ref{lem:extension}, we can extend $\cT[f_{T}]$ to the whole domain $\R^{Td}$ as follows:
	\begin{align*}
		g(s_{0:T-1}) := \inf \{ \cT[f_{T}](u_{0:T-1}) + (H_{T-1} - K_T)\| s_{0:T-1} - u_{0:T-1} \|^\alpha_\infty,  \quad u_{0:T-1} \in \cS_{0:T-1} \}.
	\end{align*}
	Since $\cT[f_{T}]$ is $\alpha$-H\"older continuous with constant $H_{T-1} - K_T$ on $\cS_{0:T-1}$, we can verify that $g$ is $\alpha$-H\"older continuous with the same constant $H_{T-1} - K_T$ on $\R^{Td}$. Moreover, $g = \cT[f_{T}]$ on $\cS_{0:T-1}$.
	
	Then we can apply \citet[Theorem 1]{langer2021approximating}: There exists a neural network $f_{T-1} \in \Sigma(D_{T-1}, q_{T-1},$ $ U_{T-1}, \infty)$, with constants defined in (b), such that
	\begin{align*}
		\| f_{T-1} - \cT[f_{T}] \|_{\infty, \, \cS_{0:T-1}} \leq \frac{c_{5, T-1} (\max\{a, K_T\})^3}{M^{2\alpha}_{T-1}},
	\end{align*} 
	where the uniform norm is over $\cS_{0:T-1}$. Then we can choose the uniform upper bound $K_{T-1} = K_T + \frac{c_{5, T-1} (\max\{a, K_T\})^3}{M^{2\alpha}_{T-1}}$. The approximate completeness assumption at time $T-1$ is valid with $(K_{T-1} - K_T)^2$.
	
	For a generic time $t$, note that the sigmoid activation function is Lipschitz. Parameters and inputs of sigmoid networks in $\cF_{t+1}$ are uniformly bounded. Then all functions in $\cF_{t+1}$ are Lipschitz with a universal constant $L_{t+1}$. The constant $H_t > L_{t+1} + K_{t+1}$ is chosen backwardly. Therefore, the remaining proof becomes similar to the arguments above.
\end{proof}

\section{Auxiliary results}
\subsection{Rademacher complexity}
\begin{lemma}(A slight extension to \citet[Theorem 3.3]{mohri2018foundations}) \label{lem:mohri}
	Let $\cG$ be a family of functions mapping from $\cZ$ to $[-a, a]$ for some $a>0$. Then, for any $\delta \in (0, 1)$, with probability at least $1 - \delta$ over the draw of an i.i.d. sample $\{ z_i \}$ of size $N$, the following holds for all $g \in \cG$:
	\begin{align}
		\E [g(z)] \leq \frac{1}{N} \sum^N_{i=1} g(z_i) + 2 \E \cR_N \cG + a \sqrt{\frac{2 \ln(1/\delta)}{N} }.
	\end{align} 
\end{lemma}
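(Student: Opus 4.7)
The plan is to follow the classical symmetrization-plus-McDiarmid argument of \citet[Theorem 3.3]{mohri2018foundations}, with the only change being that the bounded differences constant and the sub-Gaussian tail now scale with the range $2a$ rather than with $1$. Define the one-sided uniform deviation
\begin{equation*}
\Phi(S) := \sup_{g \in \cG} \Big( \E[g(z)] - \frac{1}{N}\sum_{i=1}^N g(z_i) \Big),
\end{equation*}
where $S = (z_1, \ldots, z_N)$ is an i.i.d. sample from the distribution of $z$.

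First I would verify the bounded differences property. If we replace $z_i$ by $z_i'$ to obtain a perturbed sample $S'$, then for any $g \in \cG$ the empirical average changes by at most $|g(z_i) - g(z_i')|/N \leq 2a/N$, since $g$ takes values in $[-a,a]$. Taking supremum over $g$ and then the difference $\Phi(S) - \Phi(S')$, we get $|\Phi(S) - \Phi(S')| \leq 2a/N$. McDiarmid's inequality then yields, for any $\epsilon > 0$,
\begin{equation*}
\p(\Phi(S) \geq \E[\Phi(S)] + \epsilon) \leq \exp\Big( -\frac{2\epsilon^2}{N \cdot (2a/N)^2} \Big) = \exp\Big( -\frac{N\epsilon^2}{2a^2} \Big).
\end{equation*}
Setting the right-hand side equal to $\delta$ and solving gives $\epsilon = a\sqrt{2\ln(1/\delta)/N}$, the exact term appearing in the bound.

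Second, I would bound $\E[\Phi(S)]$ via the standard ghost-sample symmetrization: introducing an independent copy $S' = (z_1', \ldots, z_N')$ and Rademacher variables $\sigma_i$, and using Jensen's inequality together with the fact that $g(z_i) - g(z_i')$ is symmetric, we obtain
\begin{equation*}
\E[\Phi(S)] \leq \E\Big[ \sup_{g \in \cG} \frac{1}{N} \sum_{i=1}^N \sigma_i (g(z_i) - g(z_i')) \Big] \leq 2 \E\cR_N\cG.
\end{equation*}
Combining this with the McDiarmid step and rearranging finishes the proof. The boundedness range $[-a,a]$ enters only through the $2a/N$ bounded differences constant, which explains the factor $a$ in the final tail; no modification to the symmetrization half of the argument is needed.

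I do not expect a real obstacle here: this is a textbook argument, and the only subtlety is tracking the constant $a$ through the McDiarmid step. One minor point worth noting is that the Rademacher complexity $\E\cR_N\cG$ as defined in \eqref{RC} already uses the symmetric form, so no extra absolute-value or centering step is required when transitioning from the symmetrized expectation to $2\E\cR_N\cG$.
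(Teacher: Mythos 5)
Your proof is correct and is precisely the argument the paper is alluding to when it calls the lemma ``a slight extension to \citet[Theorem 3.3]{mohri2018foundations}'': the paper gives no proof of its own, relying on the reader to carry the range $[-a,a]$ through the McDiarmid step exactly as you do, and you have correctly obtained the bounded differences constant $2a/N$ and hence the tail term $a\sqrt{2\ln(1/\delta)/N}$, while the symmetrization half is unchanged.
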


\begin{lemma}{\cite[Lemma 26.9, contraction property]{shalev2014understanding}}\label{lem:contraction}
	Let $A \subseteq \R^N$ be a set of vectors. Similar to the empirical Rademacher complexity \eqref{ERC}, define 
	\begin{equation*}
		\E_\sigma \cR_N A := \E_\sigma \left[ \sup_{a \in A}  \frac{1}{N} \sum^N_{i=1} \sigma_i a_i \right]. 
	\end{equation*}
	For each $i = 1, ..., N$, let $\phi_i: \R \rightarrow \R$ be an $L$-Lipschitz function, namely $|\phi_i(x) - \phi_i(y)| \leq L|x - y|$, $\forall x, y \in \R$. For $a \in \R^N$, let $\phi(a)$ denote the vector $(\phi_1(a_1), ..., \phi_N(a_N))$. Let $\phi \circ A := \{\phi(a): a \in A\}$. Then
	\begin{equation}
		\E_\sigma \cR_N(\phi \circ A) \leq L \E_\sigma \cR_N A.
	\end{equation}
\end{lemma}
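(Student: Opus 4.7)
The plan is to prove the contraction property by peeling off the Lipschitz maps $\phi_i$ one coordinate at a time. Since the averaging factor $\tfrac{1}{N}$ is common to both sides, it suffices to establish
\begin{equation*}
\mathbb{E}_\sigma \sup_{a \in A}\sum_{i=1}^N \sigma_i \phi_i(a_i) \;\le\; L\,\mathbb{E}_\sigma \sup_{a \in A}\sum_{i=1}^N \sigma_i a_i.
\end{equation*}
The first step is a reduction lemma: for a fixed index $j$, any family $\{g_i\}_{i\ne j}$ of real-valued functions on $\mathbb{R}$, and any $L$-Lipschitz $\phi_j$,
\begin{equation*}
\mathbb{E}_\sigma \sup_{a \in A}\Bigl[\sum_{i \ne j}\sigma_i g_i(a_i) + \sigma_j \phi_j(a_j)\Bigr] \;\le\; \mathbb{E}_\sigma \sup_{a \in A}\Bigl[\sum_{i \ne j}\sigma_i g_i(a_i) + L\sigma_j a_j\Bigr].
\end{equation*}
Iterating this for $j=1,\dots,N$, replacing each $\phi_j$ by $L\cdot\mathrm{id}$ in turn, yields the claim.

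To prove the reduction lemma, condition on all Rademacher variables except $\sigma_j$ and write $\psi(a) := \sum_{i\ne j}\sigma_i g_i(a_i)$, which does not depend on $\sigma_j$. Then
\begin{equation*}
\mathbb{E}_{\sigma_j}\sup_{a\in A}\bigl[\psi(a)+\sigma_j\phi_j(a_j)\bigr] = \tfrac{1}{2}\sup_{a,a'\in A}\bigl[\psi(a)+\psi(a')+\phi_j(a_j)-\phi_j(a'_j)\bigr],
\end{equation*}
by splitting $\mathbb{E}_{\sigma_j}$ into its two equally likely realizations and relabeling one sup variable as $a'$.

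The crux is to absorb the $\phi_j$-difference using the Lipschitz bound $\phi_j(a_j)-\phi_j(a'_j)\le L|a_j-a'_j|$, then rewrite $L|a_j-a'_j|=\max\{L(a_j-a'_j),\,L(a'_j-a_j)\}$. The pair-sup over $(a,a')$ splits as the max of two sups, one for each sign; by symmetry (swap $a\leftrightarrow a'$) these two sups are equal, giving
\begin{equation*}
\tfrac{1}{2}\sup_{a,a'}\bigl[\psi(a)+\psi(a')+L(a_j-a'_j)\bigr] = \tfrac{1}{2}\bigl[\sup_a\{\psi(a)+La_j\}+\sup_{a'}\{\psi(a')-La'_j\}\bigr],
\end{equation*}
which equals $\mathbb{E}_{\sigma_j}\sup_{a\in A}[\psi(a)+L\sigma_j a_j]$. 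Taking expectation over the remaining Rademacher variables completes the reduction step.

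The main obstacle is the sign-splitting maneuver: one must recognize that the inequality $\phi_j(a_j)-\phi_j(a'_j)\le L|a_j-a'_j|$ cannot be used directly because $|a_j-a'_j|$ is not a linear function of $(a_j,a'_j)$, so one bounds it by a max of two linear expressions and invokes the symmetric roles of $a$ and $a'$ inside the outer sup to collapse the max into a single sign. Once this trick is in place, the induction on coordinates and the measurability issues surrounding the suprema (handled implicitly by the statement via the existence of attaining sequences) are routine.
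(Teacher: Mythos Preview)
Your proof is correct and is essentially the standard argument for the Ledoux--Talagrand contraction inequality as presented in the cited reference \cite[Lemma 26.9]{shalev2014understanding}. Note that the paper itself does not supply a proof of this lemma; it is stated as an auxiliary result borrowed directly from the literature, so there is no in-paper proof to compare against.
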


\subsection{Shadow technique}
We introduce the concept of shadow following \cite{eckstein2022MA}. Let $\mu, \tilde{\mu} \in \cP_p (\cX)$ and $\nu, \tilde{\nu} \in \cP_p(\cY)$.  Let $\Lambda \in \Pi(\mu, \tilde{\mu})$ be a coupling attaining $W_p(\mu, \tilde{\mu})$ and $\Lambda = \mu \otimes \kappa^\mu$ be a disintegration. Similarly, let $\Lambda' \in \Pi(\nu, \tilde{\nu})$ be a coupling attaining $W_p(\nu, \tilde{\nu})$ and $\Lambda' = \nu \otimes \kappa^\nu$ be a disintegration. Denote $\kappa(x, y) := \kappa^\mu(x) \otimes \kappa^\nu(y)$. Given a coupling $\pi \in \Pi(\mu, \nu)$, its {\it shadow} $S(\pi)$ is the second $\cX \times \cY$ marginal of $\pi \otimes \kappa \in \cP(\cX \times \cY \times \cX \times \cY)$. By definition, we have $S(\pi) \in \Pi(\tilde{\mu}, \tilde{\nu})$. In particular, $S(\pi)$ is also called the $(\kappa^\mu, \kappa^\nu)$-shadow of $\pi$ \citep{eckstein2022comp}. 

\begin{lemma}{\cite[Lemma 3.2]{eckstein2022MA}}\label{lem:shadow}
	The coupling $\pi$ and its shadow $S(\pi)$ satisfy 
	\begin{equation}
		W_p(\pi, S(\pi)) = [W^p_p (\mu, \tilde{\mu}) + W^p_p (\nu, \tilde{\nu})]^{1/p}.
	\end{equation}
\end{lemma}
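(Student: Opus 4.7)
The plan is to prove both directions $W_p^p(\pi, S(\pi)) \leq W_p^p(\mu, \tilde\mu) + W_p^p(\nu, \tilde\nu)$ and the reverse, so that equality follows. Both directions rely on the additive structure of the product metric $d((x, y), (\tilde x, \tilde y))^p = d_\cX(x, \tilde x)^p + d_\cY(y, \tilde y)^p$ together with the product form of the transfer kernel $\kappa$.

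For the upper bound, I would use $\pi \otimes \kappa$ itself, viewed as a probability measure on $(\cX \times \cY) \times (\cX \times \cY)$ by grouping $(x, y)$ and $(\tilde x, \tilde y)$, as an explicit coupling of $\pi$ and $S(\pi)$; by the very definition of the shadow, this is a valid element of $\Pi(\pi, S(\pi))$. Evaluating the $p$-cost against this coupling, additivity of $d^p$ and Fubini combined with the product form $\kappa(x, y) = \kappa^\mu(x) \otimes \kappa^\nu(y)$ reduce the integral to
\[
\int d_\cX(x, \tilde x)^p\, \Lambda(dx, d\tilde x) + \int d_\cY(y, \tilde y)^p\, \Lambda'(dy, d\tilde y),
\]
where the $\cY$-components are integrated out in the first term and the $\cX$-components in the second. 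Optimality of $\Lambda$ and $\Lambda'$ identifies the right-hand side with $W_p^p(\mu, \tilde\mu) + W_p^p(\nu, \tilde\nu)$.

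For the lower bound, let $\gamma \in \Pi(\pi, S(\pi))$ be any coupling on $\cX \times \cY \times \cX \times \cY$. A short computation shows that the marginals of $S(\pi)$ on $\cX$ and on $\cY$ are $\tilde\mu$ and $\tilde\nu$ respectively: integrate out the $\cY$-variables in $\pi \otimes \kappa$ and use that $\Lambda = \mu \otimes \kappa^\mu$ has second marginal $\tilde\mu$, and symmetrically for $\nu$. Consequently the pushforward of $\gamma$ onto the first and third coordinates lies in $\Pi(\mu, \tilde\mu)$, while the pushforward onto the second and fourth coordinates lies in $\Pi(\nu, \tilde\nu)$. Additivity of $d^p$ together with the definition of the Wasserstein distance then gives $\int d^p\, d\gamma \geq W_p^p(\mu, \tilde\mu) + W_p^p(\nu, \tilde\nu)$, and taking the infimum over $\gamma$ closes the bound.

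Combining the two directions yields the claimed equality. The only real obstacle is careful bookkeeping of the four-variable measure under disintegration and making sure the marginals of $S(\pi)$ are correctly identified; once this notation is in place, the proof reduces to Fubini and the optimality of $\Lambda$ and $\Lambda'$, with no deeper analytic content.
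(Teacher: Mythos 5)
Your proof is correct and follows exactly the same strategy as the paper: the upper bound uses $\pi \otimes \kappa$ as an explicit element of $\Pi(\pi, S(\pi))$, factoring the cost via the additive structure of $d^p$ and the product form of $\kappa$; the lower bound uses that any $\gamma \in \Pi(\pi, S(\pi))$ induces, by projection, couplings in $\Pi(\mu, \tilde\mu)$ and $\Pi(\nu, \tilde\nu)$. Both the decomposition and the key observations match the paper's argument.
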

\begin{proof}
	First,
	\begin{align*}
		W^p_p (\pi, S(\pi)) & = \inf_{\gamma \in \Pi(\pi, S(\pi))} \int \big[ d^p_\cX (x, \tilde{x}) + d^p_\cY (y, \tilde{y})  \big] d\gamma \\
		& \leq \int \big[ d^p_\cX (x, \tilde{x}) + d^p_\cY (y, \tilde{y})  \big] \pi(dx, dy) \kappa(x, y, d\tilde{x}, d\tilde{y}) \\
		& = \int d^p_\cX (x, \tilde{x}) \mu(dx) \kappa^\mu(x, d\tilde{x}) + \int d^p_\cY (y, \tilde{y}) \nu(dy) \kappa^\nu(y, d\tilde{y}) \\
		& = W^p_p (\mu, \tilde{\mu}) + W^p_p (\nu, \tilde{\nu}).
	\end{align*}
	The last equality holds since $\mu \otimes \kappa^\mu$ and $\nu \otimes \kappa^\nu$ are optimal couplings by the definition.
	
	For another direction, any $\gamma \in \Pi(\pi, S(\pi))$ induces two couplings $\gamma_1 \in \Pi(\mu, \tilde{\mu})$ and $\gamma_2 \in \Pi(\nu, \tilde{\nu})$. Hence,
	\begin{align*}
		W^p_p (\pi, S(\pi)) & = \inf_{\gamma \in \Pi(\pi, S(\pi))} \int \big[ d^p_\cX (x, \tilde{x}) + d^p_\cY (y, \tilde{y})  \big] d\gamma \\
		& \geq \inf_{\gamma_1 \in \Pi(\mu, \tilde{\mu})} \int d^p_\cX (x, \tilde{x}) d \gamma_1 + \inf_{\gamma_2 \in \Pi(\nu, \tilde{\nu})} \int d^p_\cY (y, \tilde{y}) d\gamma_2 \\
		& = W^p_p (\mu, \tilde{\mu}) + W^p_p (\nu, \tilde{\nu}).
	\end{align*}
	It proves the claim.
\end{proof}

\begin{lemma}\label{lem:Lip_shadow}
	Suppose a function $f$ satisfies
	\begin{equation}
		\left| \int f d\pi - \int f d\tilde{\pi}\right| \leq L W_p (\pi, \tilde{\pi}), \quad \forall \, \pi \in \Pi(\mu, \nu), \, \tilde{\pi} \in \Pi(\tilde{\mu}, \tilde{\nu}),
	\end{equation}
	with a constant $L > 0$. Then
	\begin{equation}\label{ineq:LDelta}
		\left| \inf_{\pi \in \Pi(\mu, \nu)} \int f d\pi - \inf_{\tilde{\pi} \in \Pi(\tilde{\mu}, \tilde{\nu})} \int f d\tilde{\pi}\right| \leq L [W^p_p (\mu, \tilde{\mu}) + W^p_p (\nu, \tilde{\nu})]^{1/p}.
	\end{equation}
\end{lemma}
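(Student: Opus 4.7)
The plan is to use the shadow construction introduced just before the lemma, together with Lemma~\ref{lem:shadow}, to transfer couplings back and forth between $\Pi(\mu,\nu)$ and $\Pi(\tilde\mu,\tilde\nu)$ while controlling the change in $\int f\,d\pi$ by the Lipschitz hypothesis.

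First I would fix $\varepsilon>0$ and pick near-optimizers $\pi_\varepsilon\in\Pi(\mu,\nu)$ and $\tilde\pi_\varepsilon\in\Pi(\tilde\mu,\tilde\nu)$ for the two infima, i.e.\ $\int f\,d\pi_\varepsilon\le\inf_{\pi\in\Pi(\mu,\nu)}\int f\,d\pi+\varepsilon$ and analogously for $\tilde\pi_\varepsilon$. This avoids assuming attainment of the infima. Next I would fix $W_p$-optimal couplings $\Lambda\in\Pi(\mu,\tilde\mu)$ and $\Lambda'\in\Pi(\nu,\tilde\nu)$ with disintegrations $\Lambda=\mu\otimes\kappa^\mu$ and $\Lambda'=\nu\otimes\kappa^\nu$, and set $\kappa(x,y)=\kappa^\mu(x)\otimes\kappa^\nu(y)$. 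These are exactly the ingredients defining the shadow right before the lemma.

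Then I would form $S(\pi_\varepsilon)\in\Pi(\tilde\mu,\tilde\nu)$, the $(\kappa^\mu,\kappa^\nu)$-shadow of $\pi_\varepsilon$. The Lipschitz hypothesis on $f$ combined with Lemma~\ref{lem:shadow} gives
\begin{equation*}
\left|\int f\,d\pi_\varepsilon-\int f\,dS(\pi_\varepsilon)\right|\le L\,W_p(\pi_\varepsilon,S(\pi_\varepsilon))=L\bigl[W_p^p(\mu,\tilde\mu)+W_p^p(\nu,\tilde\nu)\bigr]^{1/p}.
\end{equation*}
Since $S(\pi_\varepsilon)\in\Pi(\tilde\mu,\tilde\nu)$, one has $\inf_{\tilde\pi\in\Pi(\tilde\mu,\tilde\nu)}\int f\,d\tilde\pi\le\int f\,dS(\pi_\varepsilon)$, so
\begin{equation*}
\inf_{\tilde\pi}\int f\,d\tilde\pi-\inf_{\pi}\int f\,d\pi\le\int f\,dS(\pi_\varepsilon)-\int f\,d\pi_\varepsilon+\varepsilon\le L\bigl[W_p^p(\mu,\tilde\mu)+W_p^p(\nu,\tilde\nu)\bigr]^{1/p}+\varepsilon.
\end{equation*}

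To get the reverse inequality, I would swap the roles of $(\mu,\nu)$ and $(\tilde\mu,\tilde\nu)$: use $W_p$-optimal couplings in the opposite direction (the couplings $\Lambda,\Lambda'$ are symmetric so they still attain the Wasserstein distances), take the shadow of $\tilde\pi_\varepsilon$ landing in $\Pi(\mu,\nu)$, and repeat the same three-line argument to get the opposite one-sided bound. Combining the two sides and letting $\varepsilon\downarrow 0$ yields \eqref{ineq:LDelta}.

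The only mild subtlety I anticipate is the non-attainment of the infima (handled by the $\varepsilon$-approximation above) and making sure the shadow construction is set up symmetrically so both directions truly rely on the same Lemma~\ref{lem:shadow} identity; the heart of the argument is a single application of the hypothesis plus the shadow isometry and so is short.
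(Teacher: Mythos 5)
Your proof is correct and follows essentially the same route as the paper: shadow a (near-)optimal coupling across via Lemma~\ref{lem:shadow}, apply the Lipschitz hypothesis, and symmetrize. The only difference is that you use $\varepsilon$-approximate optimizers rather than assuming the infima are attained (the paper picks a minimizer $\pi^*$ directly), which makes your version marginally more careful but does not change the argument.
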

\begin{proof}
	Denote a coupling $\pi^*$ attaining $\inf_{\tilde{\pi} \in \Pi(\tilde{\mu}, \tilde{\nu})} \int f d \tilde{\pi}$. The shadow $S(\pi^*)$ of $\pi^*$ is in $\Pi(\mu, \nu)$. Therefore,
	\begin{align*}
		& \inf_{\pi \in \Pi(\mu, \nu)} \int f d\pi - \inf_{\tilde{\pi} \in \Pi(\tilde{\mu}, \tilde{\nu})} \int f d \tilde{\pi} \leq \int f d S(\pi^*) - \int f d \pi^* \leq L W_p (S(\pi^*), \pi^*).
	\end{align*}
	Then we can apply Lemma \ref{lem:shadow} to obtain one side of \eqref{ineq:LDelta}. Another side follows similarly.
\end{proof}

\subsection{Properties of sub-root functions}
\begin{lemma}{\cite[Lemma G.5]{duan2021risk}}\label{lem:subroot}
	Suppose $\psi(\cdot)$ is a nontrivial sub-root function and $r^*$ is its positive fixed point, then
	\begin{itemize}
		\item[(1)] For any $C > 0$, $f(r) := C \psi(r/C)$ is sub-root and its positive fixed point $r_f$ satisfies $r_f = C r^*$;
		\item[(2)] For any $C > 0$, $f(r) : =C \psi(r)$ is sub-root and its positive fixed point $r_f$ satisfies $r_f \leq \max\{C^2, 1\} \times r^*$.
	\end{itemize}
\end{lemma}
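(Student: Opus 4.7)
The plan is to verify each item directly from the definition of a sub-root function, namely that $\psi$ is nonnegative, nondecreasing, and $r \mapsto \psi(r)/\sqrt{r}$ is nonincreasing, together with the uniqueness of the positive fixed point of a nontrivial sub-root function.

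For part (1), I would first check that $f(r) := C\psi(r/C)$ inherits the three sub-root properties from $\psi$. Nonnegativity and monotonicity in $r$ are immediate from those of $\psi$. For the decay property I would rewrite
\[
\frac{f(r)}{\sqrt{r}} = \sqrt{C}\,\frac{\psi(r/C)}{\sqrt{r/C}},
\]
and observe that since $s \mapsto \psi(s)/\sqrt{s}$ is nonincreasing and $r \mapsto r/C$ is increasing, the composition is nonincreasing in $r$. The fixed-point identity $f(r_f) = r_f$ then reduces to $\psi(r_f/C) = r_f/C$, so uniqueness of the positive fixed point of $\psi$ forces $r_f/C = r^*$, i.e., $r_f = Cr^*$.

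For part (2), the function $f(r) := C\psi(r)$ is clearly sub-root (nonnegative, nondecreasing, and $f(r)/\sqrt{r} = C\,\psi(r)/\sqrt{r}$ is a positive constant times a nonincreasing function). Its unique positive fixed point $r_f$ satisfies $\psi(r_f) = r_f/C$. I would split into two cases and argue by contradiction using the decay property. If $C \leq 1$, suppose $r_f > r^*$; then
\[
\frac{\psi(r_f)}{\sqrt{r_f}} \leq \frac{\psi(r^*)}{\sqrt{r^*}} = \sqrt{r^*},
\]
so $\psi(r_f) \leq \sqrt{r_f r^*}$. But $\psi(r_f) = r_f/C \geq r_f$ when $C \leq 1$, giving $r_f \leq \sqrt{r_f r^*}$, i.e., $r_f \leq r^*$, a contradiction. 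If $C > 1$, suppose $r_f > C^2 r^*$; the same decay estimate yields $\psi(r_f) \leq \sqrt{r_f r^*}$, while $\psi(r_f) = r_f/C$ gives $r_f/C \leq \sqrt{r_f r^*}$, hence $r_f \leq C^2 r^*$, again a contradiction. Combining the two cases gives $r_f \leq \max\{1, C^2\}\,r^*$.

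The only mild obstacle is identifying the correct threshold $C^2 r^*$ in part (2) when $C > 1$; once that is spotted, the contradiction follows immediately from squaring the sub-root decay inequality comparing $r_f$ with $r^*$. Part (1) is essentially a change of variables, and the monotonicity checks throughout are routine.
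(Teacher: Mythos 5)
Your proof is correct and complete. Note that the paper does not actually prove this lemma; it cites it as \cite[Lemma G.5]{duan2021risk}, so there is no in-paper proof to compare against. Your argument supplies the missing details cleanly: part (1) by the change of variables $s = r/C$ and the uniqueness of the positive fixed point of a nontrivial sub-root function (per \cite[Lemma 3.2]{bartlett2005local}), and part (2) by contradiction using the decay inequality $\psi(r)/\sqrt{r} \leq \psi(r^*)/\sqrt{r^*} = \sqrt{r^*}$ for $r \geq r^*$. One small point worth making explicit: to invoke existence and uniqueness of $r_f$ you should note that $f$ remains nontrivial (not identically zero) when $\psi$ is, which is immediate in both cases since $C > 0$. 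The split at $C = 1$ and the threshold $C^2 r^*$ in the second case are exactly the right quantities, and squaring the decay inequality is the standard device; the argument is routine once the fixed-point characterization $\psi(r_f) = r_f/C$ is written down.
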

\subsection{Properties of covering number}
\begin{lemma}\label{lem:diff_cover}
	Consider a class $\cF$ with a norm $\| \cdot \|$. Then $\cN(\varepsilon, \cF - \cF, \| \cdot \|) \leq \cN^2 (\varepsilon/2, \cF, \| \cdot \|)$, where $ \cF - \cF := \{f_1 - f_2: f_1, f_2 \in \cF \}$.
\end{lemma}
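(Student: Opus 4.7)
The plan is a direct application of the triangle inequality together with the product construction of a cover. First I would let $\cF^\Delta \subseteq \cF$ be a minimal proper $\varepsilon/2$-cover of $\cF$ with respect to $\|\cdot\|$, so that $|\cF^\Delta| = \cN(\varepsilon/2, \cF, \|\cdot\|)$. Then I would propose the candidate cover
\[
\cF^\Delta - \cF^\Delta := \{ g_1 - g_2 : g_1, g_2 \in \cF^\Delta \},
\]
whose cardinality is at most $|\cF^\Delta|^2 = \cN^2(\varepsilon/2, \cF, \|\cdot\|)$.

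Next I would verify that $\cF^\Delta - \cF^\Delta$ is a proper $\varepsilon$-cover of $\cF - \cF$. Properness is immediate from $\cF^\Delta \subseteq \cF$, which gives $\cF^\Delta - \cF^\Delta \subseteq \cF - \cF$. For the covering property, pick an arbitrary element $f_1 - f_2 \in \cF - \cF$ with $f_1, f_2 \in \cF$; by definition of $\cF^\Delta$ there exist $g_1, g_2 \in \cF^\Delta$ with $\|f_i - g_i\| < \varepsilon/2$ for $i=1,2$, and the triangle inequality yields
\[
\| (f_1 - f_2) - (g_1 - g_2) \| \leq \|f_1 - g_1\| + \|f_2 - g_2\| < \varepsilon.
\]
Hence $\cN(\varepsilon, \cF - \cF, \|\cdot\|) \leq |\cF^\Delta - \cF^\Delta| \leq \cN^2(\varepsilon/2, \cF, \|\cdot\|)$, which is the claim.

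There is no substantive obstacle here; the only subtle point is the properness requirement built into Definition \ref{def:cover_no}, and that is handled precisely by choosing the cover of $\cF - \cF$ as differences of elements of a proper cover of $\cF$, so that it lies inside $\cF - \cF$ automatically. I would present the argument in two short steps: construct $\cF^\Delta - \cF^\Delta$, then verify its cardinality and the $\varepsilon$-covering property via the triangle inequality.
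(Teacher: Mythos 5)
Your proof is correct and is the standard argument; the paper states Lemma \ref{lem:diff_cover} without proof as an auxiliary fact, so there is nothing to compare against. You correctly handle the one subtle point, properness of the constructed cover, by building it from differences of elements of a proper cover of $\cF$, which automatically lands inside $\cF - \cF$.
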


\begin{lemma}\label{lem:inf_cover}
	For a distribution $\beta$ and constant $p \in [1, \infty)$, we have $\cN (\varepsilon, \cF, \| \cdot \|_{L_p(\beta_N)}) \leq \cN_p(\varepsilon, \cF) \leq \cN_\infty(\varepsilon, \cF)$. Moreover, $\cN_\infty(\varepsilon, \cF) \leq \cN(\varepsilon, \cF, \| \cdot \|_\infty)$.
\end{lemma}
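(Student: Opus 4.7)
The claim is a three-part chain of inequalities among the covering numbers, and the proof is essentially unpacking definitions plus two monotonicity observations. I would handle the three inequalities separately, in order.

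For the first inequality $\cN(\varepsilon,\cF,\|\cdot\|_{L_p(\beta_N)}) \leq \cN_p(\varepsilon,\cF)$, I would simply invoke the definition $\cN_p(\varepsilon,\cF) := \sup_N \sup_{\beta_N} \cN(\varepsilon,\cF, L_p(\beta_N))$ given just before Lemma \ref{lem:inf_cover}. Any particular empirical measure $\beta_N$ (of any sample size $N$) is one of the measures being supremized over, so the inequality is immediate. No content beyond observing this.

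For the middle inequality $\cN_p(\varepsilon,\cF)\leq \cN_\infty(\varepsilon,\cF)$, the plan is to show that any proper $\varepsilon$-cover in $\|\cdot\|_{L_\infty(\beta_N)}$ is automatically a proper $\varepsilon$-cover in $\|\cdot\|_{L_p(\beta_N)}$ for the same $\beta_N$. This follows from the pointwise monotonicity
\[
\|f\|_{L_p(\beta_N)}=\Bigl(\tfrac{1}{N}\sum_{i=1}^N |f(s^i)|^p\Bigr)^{1/p}\leq \max_{1\leq i\leq N}|f(s^i)|=\|f\|_{L_\infty(\beta_N)},
\]
valid for all $p\in[1,\infty)$. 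Hence $\cN(\varepsilon,\cF,L_p(\beta_N))\leq \cN(\varepsilon,\cF,L_\infty(\beta_N))$ for every fixed $N$ and $\beta_N$, and then taking the supremum over $(N,\beta_N)$ on both sides yields the claim.

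For the last inequality $\cN_\infty(\varepsilon,\cF)\leq \cN(\varepsilon,\cF,\|\cdot\|_\infty)$, the idea is that the true uniform norm dominates the empirical uniform norm: for any $\beta_N$ supported on $\{s^1,\ldots,s^N\}\subseteq \cS$,
\[
\|f\|_{L_\infty(\beta_N)}=\max_{1\leq i\leq N}|f(s^i)|\leq \sup_{s\in\cS}|f(s)|=\|f\|_\infty.
\]
Thus any proper $\varepsilon$-cover of $\cF$ in $\|\cdot\|_\infty$ is a proper $\varepsilon$-cover in $\|\cdot\|_{L_\infty(\beta_N)}$ for every $\beta_N$, so $\cN(\varepsilon,\cF,L_\infty(\beta_N))\leq \cN(\varepsilon,\cF,\|\cdot\|_\infty)$ for each $N$ and $\beta_N$; taking the supremum over $(N,\beta_N)$ on the left gives $\cN_\infty(\varepsilon,\cF)\leq \cN(\varepsilon,\cF,\|\cdot\|_\infty)$. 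Since each step is just an application of monotonicity of norms plus a supremum, there is no real obstacle; the only mild subtlety to flag is that the paper uses \emph{proper} covers (the cover lies in $\cF$), but the argument respects this because we are reusing the same cover throughout, only changing the ambient norm.
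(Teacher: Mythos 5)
Your proof is correct, and since the paper leaves Lemma \ref{lem:inf_cover} unproved (it is listed among auxiliary facts about covering numbers), your argument is exactly the elementary unpacking one would supply: the first inequality is the definition of $\cN_p$ as a supremum, and the other two follow from the norm-domination observation that if $\|\cdot\|_b\leq\|\cdot\|_a$ pointwise then every proper $\varepsilon$-cover of $\cF$ in $\|\cdot\|_a$ remains a proper $\varepsilon$-cover in $\|\cdot\|_b$, so $\cN(\varepsilon,\cF,\|\cdot\|_b)\leq\cN(\varepsilon,\cF,\|\cdot\|_a)$, applied first with $\|\cdot\|_{L_p(\beta_N)}\leq\|\cdot\|_{L_\infty(\beta_N)}$ and then with $\|\cdot\|_{L_\infty(\beta_N)}\leq\|\cdot\|_\infty$, followed by suprema over $(N,\beta_N)$. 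Your remark about proper covers being preserved under a change of ambient norm is the right subtlety to flag and is handled correctly.
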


\subsection{Extension of H\"older continuous functions}
\begin{lemma}\label{lem:extension}
	Denote a subset $A \subseteq \R^n$. Consider a H\"older continuous function $f: A \rightarrow \R$ satisfying
	\begin{equation*}
		| f(x) - f(y) | \leq L | x - y|^\alpha, \quad \forall \, x, y \in A,  
	\end{equation*}
	with some $\alpha \in (0, 1]$ and $L > 0$.
	Define a function $g$ on $\R^n$ as
	\begin{equation}\label{eq:extend}
		g(x) := \inf_{y \in A} \{ f(y) + L |x - y|^\alpha \}, \quad x \in \R^n.
	\end{equation}
	Then $g: \R^n \rightarrow \R $ extends $f$ in the following sense:
	\begin{align*}
		g (x) & = f(x), \quad  \forall \, x \in A, \\
		| g(x) - g(y) | & \leq L | x- y|^\alpha, \quad  \forall \, x, y \in \R^n.
	\end{align*}
\end{lemma}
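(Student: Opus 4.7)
The plan is to verify the three required properties of $g$ directly from its definition as an infimum convolution: (i) $g$ is everywhere finite, (ii) $g$ agrees with $f$ on $A$, and (iii) $g$ is $\alpha$-Hölder continuous on $\R^n$ with the same constant $L$. This is a variant of the classical McShane--Whitney extension, adapted to Hölder exponent $\alpha\in(0,1]$.

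First I would fix an arbitrary basepoint $y_0\in A$ (assumed nonempty). For any $x\in\R^n$, taking $y=y_0$ in the infimum gives the upper bound $g(x)\le f(y_0)+L|x-y_0|^\alpha<\infty$. For the lower bound, for any $y\in A$ the Hölder property of $f$ on $A$ gives $f(y)\ge f(y_0)-L|y-y_0|^\alpha$, so
\begin{equation*}
f(y)+L|x-y|^\alpha \ge f(y_0)+L\bigl(|x-y|^\alpha-|y-y_0|^\alpha\bigr).
\end{equation*}
Using $|y-y_0|\le|y-x|+|x-y_0|$ together with the subadditivity inequality $(a+b)^\alpha\le a^\alpha+b^\alpha$ valid for $a,b\ge 0$ and $\alpha\in(0,1]$, we get $|y-y_0|^\alpha-|x-y|^\alpha\le|x-y_0|^\alpha$, and therefore $g(x)\ge f(y_0)-L|x-y_0|^\alpha>-\infty$. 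Hence $g$ is real-valued on $\R^n$.

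Next, for $x\in A$, plugging $y=x$ into the infimum gives $g(x)\le f(x)$; conversely, the Hölder bound on $f$ gives $f(y)+L|x-y|^\alpha\ge f(x)$ for every $y\in A$, so taking the infimum yields $g(x)\ge f(x)$. This proves the first displayed equality.

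Finally I would establish the Hölder bound for $g$. Fix $x_1,x_2\in\R^n$ and $\varepsilon>0$, and pick $y_\varepsilon\in A$ with $f(y_\varepsilon)+L|x_2-y_\varepsilon|^\alpha\le g(x_2)+\varepsilon$. Then
\begin{equation*}
g(x_1)-g(x_2)\le f(y_\varepsilon)+L|x_1-y_\varepsilon|^\alpha-f(y_\varepsilon)-L|x_2-y_\varepsilon|^\alpha+\varepsilon=L\bigl(|x_1-y_\varepsilon|^\alpha-|x_2-y_\varepsilon|^\alpha\bigr)+\varepsilon.
\end{equation*}
The key step is then the subadditivity inequality: since $|x_1-y_\varepsilon|\le|x_1-x_2|+|x_2-y_\varepsilon|$, the bound $(a+b)^\alpha\le a^\alpha+b^\alpha$ gives $|x_1-y_\varepsilon|^\alpha-|x_2-y_\varepsilon|^\alpha\le|x_1-x_2|^\alpha$. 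Hence $g(x_1)-g(x_2)\le L|x_1-x_2|^\alpha+\varepsilon$, and letting $\varepsilon\downarrow 0$ then swapping $x_1,x_2$ yields $|g(x_1)-g(x_2)|\le L|x_1-x_2|^\alpha$. The main (and really the only) non-routine ingredient is the concave-power subadditivity $(a+b)^\alpha\le a^\alpha+b^\alpha$, which is precisely what makes the same constant $L$ propagate to the extension; all other steps are straightforward manipulations of the infimum.
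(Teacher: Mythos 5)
Your proof is correct and follows essentially the same route as the paper's: the same McShane-type infimum-convolution argument, establishing finiteness via the triangle inequality combined with $(a+b)^\alpha\le a^\alpha+b^\alpha$, agreement on $A$ by a two-sided comparison, and the Hölder bound for $g$ by picking a near-minimizer and applying subadditivity. The only cosmetic difference is that the paper writes the final Hölder estimate as a lower bound on $g(x)-g(x')$ while you write it as an upper bound on $g(x_1)-g(x_2)$; these are equivalent up to swapping the roles of the two points.
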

\begin{proof}
	With $\alpha \in (0, 1]$, we will use the fact that $|a + b|^\alpha \leq (|a| + |b|)^\alpha \leq |a|^\alpha + |b|^\alpha$.
	
	First, we show $g$ is finite. Since $f$ is finite and $g$ is defined with the infimum, we only need to show $g(x) > - \infty, \, x \in \R^n$. Note that when $y, y' \in A$, 
	\begin{align*}
		f(y) - f(y') + L |x - y|^\alpha \geq - L |y - y'|^\alpha + L |x - y|^\alpha \geq - L |x - y'|^\alpha.
	\end{align*}
	Hence,
	\begin{equation}\label{eq:lowerbd}
		g(x) = \inf_{y \in A} \{ f(y) + L |x - y|^\alpha \} \geq f(y') - L | x - y'|^\alpha > - \infty.
	\end{equation}
	
	Then, we prove $g = f$ on $A$. Suppose $x \in A$. With the definition of $g$, we have $g(x) \leq f(x)$ by taking $y = x$ in \eqref{eq:extend}. To show $g(x) \geq f(x)$, we choose $y' = x$ in \eqref{eq:lowerbd}.
	
	Next, we show the H\"older continuity of $g$ on $\R^n$. Given $\varepsilon > 0$, for any $x \in \R^n$, since $g$ is finite, the definition of $g$ guarantees that there exists $y \in A$ such that
	\begin{equation*}
		g(x) \geq f(y) + L|x - y|^\alpha - \varepsilon.
	\end{equation*} 
	Since $g(x') \leq f(y) + L | x' - y|^\alpha$ by the definition, we have
	\begin{align*}
		g(x) - g(x') & \geq f(y) + L |x - y|^\alpha - \varepsilon - f(y) - L |x' - y|^\alpha \\
		& \geq - L | x' - x|^\alpha - \varepsilon.
	\end{align*}
	Let $\varepsilon \rightarrow 0$, we get $g(x) - g(x') \geq - L | x' - x|^\alpha$. Interchanging the roles of $x$ and $x'$ shows the H\"older continuity.
	
\end{proof}
\end{document}